\definecolor{bblue}{HTML}{0064FF}
\definecolor{rred}{HTML}{C0504D}
\definecolor{ggreen}{HTML}{9BBB59}
\definecolor{ppurple}{HTML}{9F4C7C}
\definecolor{geo}{rgb}{0.0, 0.0, 1.0} 
\definecolor{bsnmf}{rgb}{1.0, 0.0, 0.0} 
\definecolor{svi}{rgb}{0.0, 0.5, 0.0} 
\definecolor{saac}{rgb}{0.0, 0.0, 0.5} 
\definecolor{occam}{rgb}{0.55, 0.27, 0.07} 
\DeclareMathAlphabet\mathbfcal{OMS}{cmsy}{b}{n}
\theoremstyle{definition}
\newtheorem{thm}{Theorem}[section]
\newtheorem{lem}[thm]{Lemma}
\newtheorem{thmDef}{Theorem}[section]
\newtheorem{defn}[thmDef]{Definition}
\newtheorem{thmrem}{Theorem}[section]
\newtheorem{rem}[thmrem]{Remark}
\newtheorem{fact}[thmrem]{Fact}
\newcommand{\bb}[1]{\left(#1\right)}
\newcommand{\ddd}[1]{\left\{#1\right\}}
\newcommand{\bas}[1]{\begin{align*}#1\end{align*}}
\newcommand{\ba}[1]{\begin{align}#1\end{align}}
\newcommand{\pushright}[1]{\ifmeasuring@#1\else\omit\hfill$\displaystyle#1$\fi\ignorespaces}
\newcommand{\txtred}[1]{}
\newcommand{\bone}{{\bf 1}}
\newcommand{\bA}{{\bf A}}
\newcommand{\bB}{{\bf B}}
\newcommand{\bC}{{\bf C}}
\newcommand{\bD}{{\bf D}}
\newcommand{\be}{{\bf e}}
\newcommand{\bE}{{\bf E}}
\newcommand{\bH}{{\bf H}}
\newcommand{\bI}{{\bf I}}
\newcommand{\bM}{{\bf M}}
\newcommand{\bO}{{\bf O}}
\newcommand{\bP}{{\bf P}}
\newcommand{\bQ}{{\bf Q}}
\newcommand{\br}{{\bf r}}
\newcommand{\bR}{{\bf R}}
\newcommand{\bT}{{\bf T}}
\newcommand{\bv}{{\bf v}}
\newcommand{\bV}{{\bf V}}
\newcommand{\bW}{{\bf W}}
\newcommand{\bx}{{\bf x}}
\newcommand{\bX}{{\bf X}}
\newcommand{\by}{{\bf y}}
\newcommand{\bY}{{\bf Y}}
\newcommand{\bZ}{{\bf Z}}
\newcommand{\bpi}{{\bf \Pi}}
\newcommand{\balpha}{{\boldsymbol \alpha}}
\newcommand{\bbeta}{{\boldsymbol \beta}}
\newcommand{\btheta}{{\boldsymbol \theta}}
\newcommand{\bTheta}{{\boldsymbol \Theta}}
\newcommand{\bbb}[1]{\left(#1\right)}
\newcommand{\ccc}[1]{\left[#1\right]}
\newcommand{\eee}[1]{\left\|#1\right\|}
\newcommand{\A}{\mathcal{A}}
\newcommand{\D}{{\mathbfcal{D}}}
\newcommand{\cS}{\mathcal{S}}
\newcommand{\I}{\mathcal{I}}
\newcommand{\R}{\mathbb{R}}
\newcommand{\cone}{\mathrm{cone}}
\newcommand{\rank}{\mathrm{rank}}
\newcommand{\cov}{\mathrm{Cov}}
\newcommand{\Var}{\mathrm{Var}}
\newcommand{\uE}{\mathrm{E}}
\newcommand{\uP}{\mathrm{P}}
\newcommand{\dir}{\mathrm{Dirichlet}}
\newcommand{\ber}{\mathrm{Bernoulli}}
\newcommand{\hx}{\hat{\bX}\xspace}
\newcommand{\hy}{\hat{\boldsymbol Y}\xspace}
\newcommand{\diag}{\text{diag}}
\newcommand{\hxp}{\hx_p}
\newcommand{\hyp}{\hy_p\xspace}
\newcommand{\cSp}{\cS_p\xspace}
\newcommand{\bxp}{\bX_p\xspace}
\newcommand{\dxp}{\bD_{21p}\xspace}
\newcommand{\dpp}{\D_{21p}\xspace}
\newcommand{\bk}{\color{black}}
\newcommand{\byp}{\bY_p\xspace}
\newcommand{\OurAlgo}{{\textsf GeoNMF}\xspace}
\newcommand{\clusteringAlgo}{PartitionPureNodes\xspace}
\newcommand{\tablefontsize}{\footnotesize}
\newcommand{\rowwise}{O_P\bbb{{\frac{{ K^2\sqrt{\log n}}}{\beta_{\mathrm{min}}^{5/2}\rho \sqrt{n}}}}}
\newcommand{\successProb}{$1- O(K^2/n^2)$}
\newcommand{\tauvalue}{\sqrt{\frac{K}{4n}\frac{\min_{i\in \mathcal{F}}\bD_2(i,i)}{\max_{i\in \mathcal{F}}\bD_2(i,i)}}}
\newcommand{\xpinverse}{O_P\bbb{{\frac{{ K^{5/2}\sqrt{\log n}}}{\beta_{\mathrm{min}}^{5/2}\rho \sqrt{n}}}}}
\newcommand{\thetaerror}{O_P\bbb{{\frac{K^3\sqrt{\log n}}{\beta_{\mathrm{min}}^{3} \rho\sqrt{n}}}}}
\newcommand{\xabsoluterow}{O_P\bbb{\frac{\sqrt{K^5\log n}}{\beta_{\mathrm{min}}^{5/2}\rho n}}}
\newcommand{\betaabsoluteerror}{O_P\bbb{\frac{{K^{5/2}\log n}}{\beta_{\mathrm{min}}^{5/2}\sqrt{\rho n}}}}
\newcommand{\betaerror}{O_P\bbb{\frac{{K^{5/2}\log n}}{\beta_{\mathrm{min}}^{5/2}\rho\sqrt{n }}}}
\begin{document}
	
	\title{On Mixed Memberships and Symmetric Nonnegative Matrix Factorizations 
	}
	\author{Xueyu Mao\thanks{Department of Computer Science. Email: \href{mailto:xmao@cs.utexas.edu}{xmao@cs.utexas.edu}}, \  Purnamrita Sarkar\thanks{Department of Statistics and Data Sciences. Email: \href{mailto:purna.sarkar@austin.utexas.edu}{purna.sarkar@austin.utexas.edu}}, \ Deepayan Chakrabarti\thanks{Department of Information, Risk, and Operations Management. Email: \href{mailto:deepay@utexas.edu}{deepay@utexas.edu}}\\ The University of Texas at Austin}
	\date{}
	
	\maketitle





%
%
%
%
%
%

	\begin{abstract}
		The problem of finding overlapping communities in networks has gained
		much attention recently.
		Optimization-based approaches use 
		non-negative matrix factorization
		(NMF) or variants, but the global optimum cannot be provably attained in general.
		Model-based approaches,
		such as the popular mixed-membership stochastic blockmodel or
		MMSB~\cite{airoldi2008mixed},
		use parameters for each node to specify the overlapping communities, but standard inference techniques cannot guarantee consistency.
		We link the two approaches, by (a) establishing
		sufficient conditions for the symmetric NMF optimization to have a unique solution under MMSB, and (b) proposing a computationally efficient algorithm called \OurAlgo that is provably optimal and hence consistent for a broad parameter regime. We demonstrate its accuracy on both simulated and real-world datasets.
	\end{abstract}



	\section{Introduction}
\label{sec:intro}
Community detection is a fundamental problem in network analysis. It has been widely used in a diverse set of applications ranging from link prediction in social networks~\cite{soundarajan2012using}, predicting protein-protein or protein-DNA interactions in biological networks~\cite{chen2006detecting}, to network protocol design such as data forwarding in delay tolerant networks~\cite{lu2015algorithms}. 

Traditional community detection assumes that every node in the network belongs to exactly one community, but many practical settings call for greater flexibility. For instance, individuals in a social network may have multiple interests, and hence are best described as members of multiple interest-based communities.
%
%
We focus on the popular mixed membership stochastic blockmodel (MMSB)~\cite{airoldi2008mixed}  where each node $i$, $i\in [n]$ has a discrete probability distribution $\btheta_i=\left(\theta_{i1}, \ldots, \theta_{iK}\right)$ over $K$ communities. The probability of linkage between nodes $i$ and $j$ depends on the degree of overlap between their communities:
\begin{align*}
\btheta_i  &\sim \dir(\balpha) & i\in[n]\\
\bP &= \rho \bTheta \bB \bTheta^T &\\
\bA_{ij} = \bA_{ji} &= \ber(\bP_{ij}) & i,j\in[n]
\end{align*}
where $\btheta_i$ is the $i$-th row of $\bTheta$, $\bA$ represents the adjacency matrix of the generated graph, and $\bB \in \mathbb{R}^{K\times K}$ is the community-community interaction matrix.
The parameter $\rho$ controls the sparsity of the graph, so WLOG, the largest entry of $\bB$ can be set to 1. The parameter $\alpha_0=\sum_i\alpha_i$ controls the amount of overlap. In particular, when $\alpha_0\rightarrow 0$, MMSB reduces to the well known stochastic blockmodel, where every node belongs to exactly one community. Larger $\alpha_0$ leads to more overlap. Since we only observe $\bA$, a natural question is: how can $\{\btheta_i\}$ and $\bB$ be recovered from $\bA$ in a way that is provably consistent? 


\subsection{\bf Prior work}
We categorize existing approaches broadly into three groups: model-based parameter inference methods, specialized algorithms that offer provable guarantees, and optimization-based methods using non-negative matrix factorization.

\noindent
{\bf Model-based methods:}
These apply standard techniques for inference of hidden variables to the MMSB model. Examples include MCMC techniques~\cite{changLDA} and variational methods~\cite{gopalan2013efficient}. While these often work well in practice, there are no proofs of consistency for these methods. The MCMC methods are difficult to scale to large graphs, so we compare against the faster variational inference methods in our experiments.

\noindent
{\bf Algorithms with provable guarantees:}
There has been work on provably consistent estimation on models similar to MMSB.~\citet{zhang2014detecting} propose a spectral method (OCCAM) for a model where the $\btheta_i$ has unit $\ell_2$ norm (unlike MMSB, where they have unit $\ell_1$ norm). In addition to the standard assumptions regarding the existence of ``pure'' nodes\footnote{This is a common assumption even for NMF methods for topic modeling, where each topic is assumed to have an anchor word (words belonging to only one topic). 
\citet{huang2016anchor}  
introduced a special optimization criterion to relax the presence of anchor words, but the optimization criterion is non-convex.}
(which only belong to a single community) and a positive-definite $\bB$, they also require $\bB$ to have equal diagonal entries, and assume that the ground truth communities has a unique optimum of a special loss function, and there is curvature around the optimum. Such assumptions may be hard to verify.
\citet{ray2014overlap} and~\citet{kaufmann2016spectral} consider models with binary community memberships. \citet{kaufmann2016spectral} show that the global optimum of a special loss function is consistent. However, achieving the global optimum is computationally intractable, and the scalable algorithm proposed by them (SAAC) is not provably consistent. 
\citet{anandkumar2014tensor} propose a tensor based approach for MMSB.
Despite their elegant solution the computational complexity is $O(n^2 K)$, which can be prohibitive for large graphs. 

\noindent
{\bf Optimization-based methods:}
If $\bB$ is positive-definite, the MMSB probability matrix $\bP$ can be written as \mbox{$\bP=\bW\bW^T$}, where the $\bW$ matrix has only non-negative entries. 
In other words, $\bW$ is the solution to a Symmetric Non-negative Matrix Factorization (SNMF) problem: $\bW = \arg\min_{{\bf X}\geq{\bf 0}}\text{loss}(\bP, {\bf X}{\bf X}^T)$ for some loss function that measure the ``difference'' between $\bP$ and its factorization. 
SNMF has been widely studied and successfully used for community detection~\cite{kuang2015symnmf,wang2011community, wang2016supervised, BNMF2011}, but typically lacks the guarantees we desire. Our paper attempts to address these issues. 

We note that~\citet{arora2012learning,arora2013practical} used NMF
to consistently estimate parameters of a topic model. However, their results cannot be easily applied to the MMSB inference problem. 
In particular, for topic models, the columns of the word-by-topic matrix specifying the probability distribution of words in a topic sum to 1. For MMSB, the rows of the node membership matrix 
sum to 1.  The relationship of this work to the MMSB problem is unclear.

\subsection{Problem Statement and Contributions}
We seek to answer two problems.

{\bf Problem 1:} {\sl Given $\bP$, when does the solution to the SNMF optimization yield the correct $\bW$?}

The difficulty stems from the fact that (a) the MMSB model may not always be identifiable, and (b) even if it is, the corresponding SNMF problem may not have a unique solution (even after allowing for permutation of communities). 

Even when the conditions for Problem 1 are met, we may be unable to find a good solution in practice. This is due to two reasons. First, we only know the adjacency matrix $\bA$, and not the probability matrix $\bP$. Second, the general SNMF problem is non-convex, and SNMF algorithms can get stuck at local optima. Hence, it is unclear if an {\em algorithm} can consistently recover the MMSB parameters. This leads to our next question.

{\bf Problem 2:} {\sl Given $\bA$ generated from a MMSB model, can we develop a fast and provably consistent algorithm to infer the parameters?}

{ Our goal is to develop a fast algorithm that provably solves SNMF for an identifiable MMSB model. Note that  generic SNMF algorithms typically do not have any provable guarantees.}



Our contributions are as follows.

\smallskip\noindent
{\bf Identifiability:} We show conditions that are sufficient for MMSB to be identifiable; specifically, there must be at least one ``pure'' exemplar of each of the $K$ clusters (i.e., a node that belongs to that community with probability $1$), and $\bB$ must be full rank.

\smallskip\noindent
{\bf Uniqueness under SNMF:} We provide sufficient conditions
under which an identifiable MMSB model is the unique solution for the SNMF problem; specifically, the MMSB probability matrix $\bP$ has a unique SNMF solution if $\bB$ is diagonal. It is important to note that MMSB with a diagonal $\bB$ still allows for interactions between different communities via members who belong to both.

\smallskip\noindent
{\bf Recovery algorithm:} We present a new algorithm, called \OurAlgo, for recovering the parameters $\{\btheta_i\}$ and $\bB$ given only the observed adjacency matrix $\bA$. The only compute-intensive part of the algorithm is the calculation of the top-$K$ eigenvalues and
eigenvectors of $\bA$, for which highly optimized algorithms
exist~\cite{press92numerical}.

\smallskip\noindent
{\bf Provable guarantees:} Under the common assumption that
$\btheta_i$ are generated from a Dirichlet($\balpha$) prior,
we prove the consistency of \OurAlgo
when $\bB$ is diagonal and there are ``pure'' nodes for
each cluster (exactly the conditions needed for uniqueness of
SNMF). We allow the sparsity parameter $\rho$ to decay with the graph size $n$. All proofs are deferred to the 
appendix.

\smallskip\noindent
{\bf Empirical validation:} On simulated networks, we compare \OurAlgo against variational methods (SVI)~\cite{gopalan2013efficient}. Since OCCAM, SAAC, and BSNMF (a Bayesian variant of SNMF~\cite{BNMF2011}) are formed under different model assumptions, we exclude these for the simulation experiments for fairness. We also run experiments on Facebook and Google Plus ego networks collected by~\citet{mcauley2014discovering};  co-authorship datasets constructed by us from DBLP~\cite{ley2002dblp} and the Microsoft academic graph (MAG)~\cite{sinha2015overview}. These networks can have up to 150,000 nodes. On these graphs we compare \OurAlgo against SVI, SAAC, OCCAM and BSNMF. We see that \OurAlgo is consistently among the top, while also being one of the fastest. This establishes that \OurAlgo achieves excellent accuracy and is computationally efficient in addition to being provably consistent.

\section{Identifiability and Uniqueness}
In order to present our results, we will now introduce some key definitions. Similar definitions appear in~\cite{zhang2014detecting}.
\begin{defn}
	A node $i \in [n]$ is called a ``pure'' node if $\exists$ $j \in [K]$ such that $\theta_{ij}=1$ and $\theta_{i\ell}=0$ for all $\ell \in [K]$, \mbox{$\ell\neq j$}.
\end{defn}

\noindent
{\bf Identifiability of MMSB.}
MMSB is not identifiable in general. Consider the following counter example.
\begin{align*}
\bM_1 &= 
  \begin{bmatrix}
  0.5 & 0.5 & 0\\
  0   & 0.5 & 0.5\\
  0.5 & 0   & 0.5
  \end{bmatrix} &
\bM_2 &= 
  \begin{bmatrix}
  0.5  & 0.25 & 0.25\\
  0.25 & 0.5  & 0.25\\
  0.25 & 0.25 & 0.5
  \end{bmatrix}
\end{align*}
It can be easily checked that the probability matrices $\bP$ generated by  the parameter set  $(\bTheta^{(1)}, \bB^{(1)}, \rho^{(1)})=(\bM_1, \bI_{3\times 3},1)$ is exactly the same as that generated by $(\bTheta^{(2)}, \bB^{(2)}, \rho^{(2)})=(\bI_{3\times 3}, 2\bM_2, 0.5)$, where $\bI_{3\times 3}$ is the identity matrix. This example can be extended to arbitrarily large $n$: for every new row $\btheta_i^{(2)}$ added to $\bTheta^{(2)}$, add the row $\btheta_i^{(1)}=\btheta_i^{(2)} \bM_1$ to $\bTheta^{(1)}$. The new rows are still non-negative and sum to $1$; it can be verified that $\bP^{(1)}=\bP^{(2)}$ even after these new node additions.

Thus, while MMSB is not identifiable in general, we can prove
identifiability under certain conditions. 
\begin{thm}[Sufficient conditions for MMSB identifiability] \label{mmsb_iden}
  Suppose parameters $\bTheta, \bB$ of the MMSB model satisfy the following conditions: (a) there is at least one pure node for each community, and (b) $\bB$ has full rank. 
  Then, MMSB is identifiable up to a permutation.
\end{thm}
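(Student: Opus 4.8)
The plan is to show that any two parameter sets obeying (a) and (b) and producing the same $\bP$ must coincide up to a relabeling of the $K$ communities. Write $\bP = \bTheta^{(t)}\bSigma^{(t)}\bTheta^{(t)T}$ for $t\in\{1,2\}$, where $\bSigma^{(t)} := \rho^{(t)}\bB^{(t)}$ absorbs the scalar. Assumption (a) forces each $\bTheta^{(t)}$ to contain the standard basis vectors $\be_1,\dots,\be_K$ among its rows (one per pure community), so $\rank(\bTheta^{(t)}) = K$; combined with the full rank of $\bB^{(t)}$ from (b), this gives $\rank(\bP) = K$ and shows the column space of $\bP$ equals that of $\bTheta^{(t)}$ for both $t$. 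Hence $\bTheta^{(1)}$ and $\bTheta^{(2)}$ share the same $K$-dimensional column space, and there is an invertible $\bM\in\R^{K\times K}$ with $\bTheta^{(2)} = \bTheta^{(1)}\bM$.

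The next step is to constrain $\bM$. Each row of $\bTheta^{(t)}$ is a probability vector, so $\bTheta^{(t)}\bone = \bone$; from $\bTheta^{(1)}\bM\bone = \bTheta^{(1)}\bone$ and $\rank(\bTheta^{(1)}) = K$ we get $\bM\bone = \bone$. Evaluating $\bTheta^{(2)} = \bTheta^{(1)}\bM$ at a pure node of the first parameterization (a row equal to some $\be_k$) identifies the $k$-th row of $\bM$ with a row of $\bTheta^{(2)}$, which is nonnegative; letting $k$ range over $[K]$ yields $\bM \geq \bzero$ entrywise. The symmetric argument applied to $\bTheta^{(1)} = \bTheta^{(2)}\bM^{-1}$ and the pure nodes of the second parameterization gives $\bM^{-1} \geq \bzero$. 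Thus $\bM$ and $\bM^{-1}$ are both nonnegative, and both are row-stochastic.

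The crux is the elementary fact that a nonnegative matrix whose inverse is also nonnegative must be monomial (a permutation times a positive diagonal). Expanding $(\bM\bM^{-1})_{ij} = \delta_{ij}$ and using that each summand is nonnegative forces, for $i\neq j$, every product $M_{ik}(\bM^{-1})_{kj}$ to vanish; tracing these vanishing conditions shows the row supports of $\bM$ are pairwise-disjoint nonempty subsets of $[K]$, hence singletons, so $\bM$ has exactly one nonzero per row and per column. The constraint $\bM\bone = \bone$ then sets each nonzero entry to $1$, making $\bM = \bpi$ a permutation matrix. Finally, substituting $\bTheta^{(2)} = \bTheta^{(1)}\bpi$ into $\bTheta^{(1)}\bSigma^{(1)}\bTheta^{(1)T} = \bTheta^{(2)}\bSigma^{(2)}\bTheta^{(2)T}$ and cancelling $\bTheta^{(1)}$ (valid since $\bTheta^{(1)T}\bTheta^{(1)}$ is invertible) gives $\bSigma^{(1)} = \bpi\bSigma^{(2)}\bpi^T$; the normalization that the largest entry of $\bB$ equals $1$ then recovers $\rho$ and $\bB$ uniquely, so the parameter sets agree up to $\bpi$. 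I expect the monomial-matrix fact to be the one genuinely delicate step; the rest is linear-algebra bookkeeping enforced by the rank-$K$ and simplex constraints.
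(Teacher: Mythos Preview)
Your proof is correct and reaches the same endpoint as the paper's---both ultimately show that a certain $K\times K$ row-stochastic matrix $\bM$ has a nonnegative inverse and then invoke the classical fact (Minc) that such a matrix must be monomial, hence a permutation. The difference lies in how each argument arrives at that point. You use a column-space argument: since $\rank(\bP)=K$ and the column space of $\bP$ coincides with that of each $\bTheta^{(t)}$, you obtain $\bTheta^{(2)}=\bTheta^{(1)}\bM$ for an invertible $\bM$ directly, and then reading off the pure-node rows of both parameterizations gives $\bM\geq 0$ and $\bM^{-1}\geq 0$ in one stroke. The paper instead restricts $\bP$ to the pure-node index sets to get $\rho^{(1)}\bB^{(1)}=\rho^{(2)}\bM\bB^{(2)}\bM^T$ and the symmetric relation with $\bW$, forms $\bT=\bM\bW$, and runs a Markov-chain limiting argument (using that $\bT$ is row-stochastic and $\bB^{(1)}=\bT^k\bB^{(1)}(\bT^k)^T$ for all $k$, so full rank of $\bB^{(1)}$ forces $\bT_\infty=\bI$) to first establish $\bW=\bM^{-1}$, and only then applies Minc's lemma. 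Your route is shorter and more transparent; the paper's route is a bit more roundabout but does not need to explicitly identify the column space of $\bP$. Both use the full-rank hypothesis on $\bB$ in an essential way---you to get $\rank(\bP)=K$, the paper to ensure $\bT_\infty$ has full rank.
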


Since identifiability is a necessary condition for consistent
recovery of parameters, we will assume these conditions from now on.



\noindent
{\bf Uniqueness of SNMF for MMSB model.}
Even when the MMSB model is identifiable, the SNMF optimization may not have a unique solution. In other words, given an MMSB probability matrix $\bP$, there might be multiple matrices ${\bf X}$ such that $\bP = {\bf X}{\bf X}^T$, even if $\bP$ corresponds to a unique parameter setting $(\bTheta, \bB,\rho)$ under MMSB.
For SNMF to work, $\bW=\sqrt{\rho}\bTheta \bB^{1/2}$ must the the unique SNMF solution. When does this happen?

In general, SNMF is not unique because $\bW$ can be permuted, so we consider the following definition of uniqueness.
\begin{defn}{(Uniqueness of SNMF \cite{huang2014non})} The Symmetric NMF of $\bP=\bW\bW^T$ is said to be (essentially) unique if $\bP=\tilde{\bW}\tilde{\bW}^T$ implies $\tilde{\bW}=\bW \bZ$, where $\bZ$ is a permutation matrix.
\end{defn}



\begin{thm}[Uniqueness of SNMF for MMSB] \label{unique}
  Consider an identifiable MMSB model where $\bB$ is diagonal.
  Then, its Symmetric NMF $\bW$ is unique and equals 
$\sqrt{\rho} \bTheta \bB^{1/2}$ .
\end{thm}
The above results establish that if we find a $\bW$ that is the symmetric NMF solution of $\bP$ then it is at least unique. However, two practical questions are still unanswered. First, given the non-convex nature of SNMF, how can we guarantee that we find the correct $\bW$ given $\bP$? Second, in practice we are given not $\bP$ but the noisy adjacency matrix $\bA$. Typical algorithms for SNMF do not provide guarantees even for the first question. 

\section{Provably consistent inference for MMSB}

To achieve consistent inference, we turn to the specific structure of the MMSB model. We motivate our approach in three stages. First, note that under the conditions of Theorem~\ref{unique}, the rows of $\bW$ form a {\em simplex} whose corners are formed by the pure nodes for each cluster. In addition, these corners are aligned along different axes, and hence are orthogonal to each other. Thus, if we can detect the corners of the simplex, we can recover the MMSB parameters. So the goal is to find the pure nodes from different clusters, since they define the corners.

While our goal is to get $\bW$, note that it is easy to compute $\bV\bE^{1/2}$ where $\bV,\bE$ are the eigenvectors and eigenvalues of $\bP$, i.e., $\bP=\bV\bE\bV^T$. 
Thus, $\bW \bW^T = (\bV\bE^{1/2}) (\bV\bE^{1/2})^T$. This implies that $\bW = \bV\bE^{1/2}\bQ$ for some orthogonal matrix $\bQ$ (Lemma~A.1 of~\cite{tang2013universally}). Essentially we should be able to identify the pure nodes by finding the corners of the simplex based on $\bV$ and $\bE$.

Once we have found the pure nodes, it is easy to find the rotation matrix $\bQ$ modulo a permutaion of classes, because we know that the pure nodes are on the axis for the simplex of $\bTheta \bB^{1/2}$.

Now, we note something rather striking. Let $\mathbfcal{D}$ denote the diagonal matrix with expected degrees on the diagonal. Consider the population Laplacian $\mathbfcal{D}^{-1/2}\bP\mathbfcal{D}^{-1/2}$. Its square root is given by $\mathbfcal{D}^{-1/2}\bV\bE^{1/2}$, which
has the following interesting property for equal Dirichlet parameters $\alpha_a=\alpha_0/K$. We show in Lemma \ref{lem:maxnorm} that while the resulting rows no longer fall on a simplex, the rows with the largest norm are precisely the pure nodes, for whom the norm concentrates around $\sqrt{K/n}$. \bk Thus, picking the rows with the largest norm of the square root gives us the pure nodes. From this, $\bQ$,  $\btheta_i$ for other rows and the parameters $\rho$ and $\bB$ can again be easily extracted.

Needless to say, this only answers the question for the expectation matrix $\bP$. In reality, we have a noisy adjacency matrix. Let $\hat{\bV}$ and $\hat{\bE}$ denote the matrices of eigenvectors and eigenvalues of $\bA$. We also establish in this paper that the rows of $\hat{\bV}\hat{\bE}^{1/2}$ concentrate around its population counterpart (corresponding row of $\bV\bE^{1/2}\bm{O}$ for some rotation matrix $\bm{O}$). While there are eigenvector deviation results in random matrix theory, e.g. the Davis-Kahan Theorem \cite{davis1970rotation}, these typically provide deviation results for the whole $\hat{\bV}$ matrix, not its rows. %
In a nutshell, this crucial result lets us carefully bound the errors of each step of the same basic idea executed on $\bA$, the noisy proxy for $\bP$.

\begin{algorithm}[!htbp] 
	\caption{\OurAlgo}
	\label{nmf-mmsb-pure}
	\begin{algorithmic}[1]
		\REQUIRE Adjacency matrix $\bA$; number of communities  $K$; a constant $\epsilon_0$
		\ENSURE Estimated node-community distribution matrix $\hat{\bTheta}$, Community-community interaction matrix $\hat{\bB}$, sparsity-control parameter $\hat{\rho}$;
		\STATE Randomly split the set of nodes $[n]$ into two equal-sized
		parts $\cS$ and $\bar{\cS}$.
		\STATE Obtain the top $K$ eigen-decomposition of
		$\bA(\cS, \cS)$ as $\hat{\bV}_1\hat{\bE}_1\hat{\bV}_1^T$ and of
		$\bA(\bar{\cS}, \bar{\cS})$ as $\hat{\bV}_2\hat{\bE}_2\hat{\bV}_2^T$.
		\STATE Calculate degree matrices $\bD_{2}$, $\bD_{12}$ and
		$\bD_{21}$ for the rows of $\bA(\bar{\cS},\bar{\cS})$, $\bA(\cS, \bar{\cS})$ and 
		$\bA(\bar{\cS}, \cS)$ respectively.
		\STATE $\hx=\bD_{21}^{-1/2}\bA_{21}\hat{\bV}_1\hat{\bE}_1^{-1/2}$, where $\bA_{21}=\bA(\bar{\cS},\cS).$
		\STATE  $\mathcal{F} = \left\{i: \|\hx(i,:)\|_2 \geq 
		\bbb{ 1 - 
			\epsilon_0
		}
		\max_{j} \|\hx(j,:)\|_2 
		\right\}$ 
		\normalsize
		\STATE $\cSp=\text{PartitionPureNodes}${$\left(\hx(\mathcal{F},:), \tauvalue \right)$}
		\STATE $\hxp = \hx(\cSp,:)$
		\STATE Get $\hat{\bbeta}$, where 
		$\hat{\beta}_{i} = \left\|\be_i^T\bD_{21}^{1/2}(\cSp,\cSp)\hxp
		\right\|_2^2$, $i \in [K]$
		\STATE $\hat{\bB} = \mathrm{diag}(\hat{\bbeta})$
		\STATE $\hat{\rho} = \max_{i} \hat{\bB}_{ii}$
		\STATE $\hat{\bB} = \hat{\bB} / \hat{\rho}$
		\STATE $\hat{\bTheta}(\bar{\cS},:) = \bD_{21}^{1/2} \hx \hxp^{-1} \bD_{21}^{-1/2}(\cSp,\cSp)$
		\STATE Repeat steps with $\bD_{12}$, $\bA_{12}$, $\hat{\bV}_2$, and
		$\hat{\bE}_2$ to obtain parameter estimates for the remaining bipartition.
	\end{algorithmic}
\end{algorithm}

Algorithm~\ref{nmf-mmsb-pure} shows
our NMF algorithm based on these geometric intuitions 
for inference under MMSB (henceforth, \OurAlgo). The complexity of \OurAlgo is dominated by the one-time eigen-decomposition in step $2$. Thus this algorithm is fast and scalable. The consistency of parameters inferred under \OurAlgo is shown in the next section.

\begin{algorithm}[!htbp] 
	\caption{PartitionPureNodes}
	\label{algo:clustering}
	\begin{algorithmic}[1]
		\REQUIRE  Matrix $\bM \in \R^{m\times K}$, where each row represents a pure node; a constant $\tau$
		\ENSURE  A set $S$ consisting of one pure node from each cluster.
		\STATE $S=\{\}$, $C=\{\}$.
		\WHILE{ $C\neq [m]$  }
		\STATE Randomly pick one index from $[m]\setminus C$, say $s$
		\STATE $S = S \cup \{s\}$
		\STATE $C = C \cup \{i\in[m]\setminus C:\|\bM(s,:)-\bM(i,:)\|\leq \tau\}$
		\ENDWHILE
	\end{algorithmic}
\end{algorithm}

\begin{rem}
	Note that Algorithm~\ref{nmf-mmsb-pure} produces two sets of parameters for the two partitions of the graph $\cS$ and $\bar{\cS}$. In practice one may need to have parameter estimates of the entire graph. While there are many ways of doing this, the most intuitive way would be to look at the set of pure nodes in $\cS$ (call this $\cS_p$) and those in $\bar{\cS}$  (call this $\bar{\cS}_p$). If one looks at the subgraph induced by the union of all these pure nodes, then with high probability, there should be $K$ connected components, which will allow us to match the communities.
	
	Also note that Algorithm~\ref{algo:clustering} may return $k\neq K$ clusters. However, we show in Lemma~\ref{lem:clustering} that the pure nodes extracted by our algorithm will be highly separated and with high probability we will have $k=K$ for an appropriately chosen $\tau$.
	
	Finally, we note that, in our implementation, we construct the candidate pure node set $\mathcal{F}$ (step 5 of Algorithm~\ref{nmf-mmsb-pure}) by finding all nodes with norm within $\epsilon_0$ multiplicative error of the largest norm. We increase $\epsilon_0$ from a small value, until $\hx_p$ has condition number close to one. This is helpful when $n$ is small, where asymptotic results do not hold.
\end{rem}


	\section{Analysis}
We want to prove that the sample-based estimates $\hat{\bTheta}$, $\hat{\bB}$ and $\hat{\rho}$ concentrate around the corresponding population parameters $\bTheta$, $\bB$, and $\rho$ after appropriate normalization. We will show this in several steps, which follow the steps of \OurAlgo.

For the following statements, denote $\beta_{\mathrm{min}}=\min_a \bB_{aa}$,
$\bTheta_{2}=\bTheta(\bar{\cS},:)$, where $\bar{\cS}$ is one of the random bipartitions of $[n]$.
Let
$\D_{21}$ be the population version of $\bD_{21}$ defined in Algorithm~\ref{nmf-mmsb-pure}. Also let $\hx_i=\be_i^T\bD_{21}^{-1/2}\bA_{21}\hat{\bV}_1\hat{\bE}_1^{-1/2}$ and its population version $\bX_i=\sqrt{\rho}\cdot\be_i^T\D_{21}^{-1/2}\bTheta_2 \bB^{1/2}$ for $i\in[\frac{n}{2}]$.

First we show the pure nodes have the largest row norm of the population version of $\hx$. 

\begin{lem}
	\label{lem:maxnorm}
	Recall that $\bX \in \R^{\frac{n}{2}\times K}$. If $\bTheta\sim \text{Dirichlet}(\balpha)$ \bk with $\alpha_i=\alpha_0/K$, then
	$\forall i\in[\frac{n}{2}]$, 
	\bas{
		\left\| \bX_i \right\|_2^2 
		\leq \frac{2K}{ n} \max_{a}\theta_{ia} {\bbb{1+O_P\bbb{{\sqrt{\frac{K\log n}{n}}}} }}
	}
	with probability larger than $1-O(1/n^3)$.

	In particular, if node $i$ of subgraph $\bA(\bar{\cS},\bar{\cS})$ is a pure node ($\max_a \theta_{ia}=1$),
		\bas{
			\left\| \bX_i \right\|_2^2 \in \frac{2K}{n} {\ccc{1- O_P\bbb{{\sqrt{\frac{K\log n}{n}}}},1+ O_P\bbb{{\sqrt{\frac{K\log n}{n}}}} }}.
		}
\end{lem}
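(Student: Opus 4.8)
The plan is to reduce $\|\bX_i\|_2^2$ to a clean deterministic ratio and then control the single source of randomness. Exploiting that $\bB=\mathrm{diag}(\beta_1,\dots,\beta_K)$ is diagonal, write $\btheta_i^T=\be_i^T\bTheta_2$ for the membership vector of node $i\in\bar{\cS}$ and let $d_i=\D_{21}(i,i)$ be its population degree. Since $\D_{21}$ is diagonal, $\bX_i=\sqrt{\rho}\,d_i^{-1/2}\,\btheta_i^T\bB^{1/2}$, so
\[
\|\bX_i\|_2^2=\frac{\rho\sum_{a}\beta_a\theta_{ia}^2}{d_i}.
\]
Now $d_i=\sum_{j\in\cS}\bP_{ij}=\rho\sum_{a}\beta_a\theta_{ia}S_a$, where $S_a:=\sum_{j\in\cS}\theta_{ja}$, so the factor $\rho$ and the weights cancel and I obtain the identity
\[
\|\bX_i\|_2^2=\frac{\sum_{a}\beta_a\theta_{ia}^2}{\sum_{a}\beta_a\theta_{ia}S_a}.
\]
Everything then hinges on the concentration of the column sums $S_a$.

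First I would control each $S_a$. Because $\cS$ and $\bar{\cS}$ are disjoint, the vectors $\{\btheta_j:j\in\cS\}$ are independent of $\btheta_i$, and $S_a$ is a sum of $n/2$ i.i.d.\ variables in $[0,1]$ with mean $\tfrac1K$ and variance $\tfrac{K-1}{K^2(\alpha_0+1)}=O(1/K)$. A Bernstein bound, together with a union bound over the $K$ communities (and, if a uniform statement is wanted, over the nodes $i$), would give on an event of probability $1-O(1/n^3)$ that $S_a=\tfrac{n}{2K}(1+\epsilon_a)$ with $\max_a|\epsilon_a|=O_P\!\big(\sqrt{K\log n/n}\big)$. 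Substituting into the denominator and noting that the resulting error is a convex combination of the $\epsilon_a$ (hence bounded by $\max_a|\epsilon_a|$) yields
\[
\sum_a\beta_a\theta_{ia}S_a=\frac{n}{2K}\Big(\textstyle\sum_a\beta_a\theta_{ia}\Big)\big(1+O_P(\sqrt{K\log n/n})\big).
\]

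Combining the two displays gives
\[
\|\bX_i\|_2^2=\frac{2K}{n}\cdot\frac{\sum_a\beta_a\theta_{ia}^2}{\sum_a\beta_a\theta_{ia}}\big(1+O_P(\sqrt{K\log n/n})\big).
\]
The general upper bound then follows from the elementary inequality $\theta_{ia}^2\le\theta_{ia}\max_a\theta_{ia}$, which forces the ratio to be at most $\max_a\theta_{ia}$. For a pure node in community $a^\star$ we have $\theta_{ia^\star}=1$ and $\theta_{ia}=0$ otherwise, so both sums equal $\beta_{a^\star}$, the ratio is exactly $1$ regardless of the magnitude of $\beta_{a^\star}$, and the stated two-sided bound results.

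The hard part will be the concentration step, specifically getting the exponent right: using the crude range bound (variance $O(1)$) inside the inequality would produce the looser rate $O_P(K\sqrt{\log n/n})$, whereas it is the Dirichlet variance bound $\tfrac{K-1}{K^2(\alpha_0+1)}=O(1/K)$ fed into Bernstein that yields precisely the claimed $O_P(\sqrt{K\log n/n})$. The remaining pieces—the cancellation producing the ratio identity and the $\max$-bound on the ratio—are routine.
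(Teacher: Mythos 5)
Your proposal is correct and follows essentially the same route as the paper: reduce $\|\bX_i\|_2^2$ to the ratio $\frac{2K}{n}\cdot\frac{\sum_a\beta_a\theta_{ia}^2}{\sum_a\beta_a\theta_{ia}}$ via concentration of the column sums $S_a=\sum_{j\in\cS}\theta_{ja}$ (the paper's Lemma~\ref{lem:lln} does this with a multiplicative Chernoff bound at the same rate $O_P(\sqrt{K\log n/n})$ your Bernstein argument gives), then bound the ratio by $\max_a\theta_{ia}$ using $\theta_{ia}^2\le\theta_{ia}\max_a\theta_{ia}$ and note it equals $1$ exactly for pure nodes. No gaps.
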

\medskip
\noindent
{\bf Concentration of rows of $\hx$.}
We must show that the rows of the sample  $\hat{\bX}$ matrix
concentrate around a suitably rotated population version.
While it is known that $\hat{\bV}$ concentrates around suitably rotated $\bV$ (see the variant of Davis-Kahan Theorem presented in~\cite{yu2015useful}), these results are for {\em columns} of the $\bV$ matrix, not for each {\em row}. The trivial bound for row-wise error would be to upper bound it by the total error, which is too crude for our purposes. To get row-wise convergence, we use sample-splitting (similar ideas can be found in~\cite{mcsherry2001spectral,chaudhuri2012spectral}), as detailed in steps 1 to 4 of \OurAlgo. The key idea is to split the graph in two parts and project the adjacency matrix of one part onto eigenvectors of another part. Due to independence of these two parts, one can show concentration.  

\begin{thm} 
	\label{thm:entrywise}
	Consider an adjacency matrix $\bA$ generated from MMSB$(\bTheta,\bB,\rho$), where  $\bTheta\sim \text{Dirichlet}(\balpha)$ \bk with $\alpha_i=\alpha_0/K$, whose parameters satisfy the conditions of Theorem~\ref{unique}. 
	If $\rho n = \Omega(\log n)$,
	then $\exists$ orthogonal matrix $\bm{O} \in	\mathbb{R}^{K\times K}$ that $\forall\ i \in [\frac{n}{2}]$,
	\bas{
		\frac{
			\| \hx_i - \bX_i\bm{O} \|_2
		}{
			\left\|	\bX_i\right\|_2
		} 
		=\rowwise
	}
	with probability larger than \successProb.
\end{thm}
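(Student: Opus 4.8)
The plan is to compare $\hx=\bD_{21}^{-1/2}\bA_{21}\hat{\bV}_1\hat{\bE}_1^{-1/2}$ against its noiseless counterpart \emph{row by row}, exploiting the conditional independence that sample-splitting buys. First I would pin down the population target. Write $\bP_{21}=\bP(\bar{\cS},\cS)=\rho\bTheta_2\bB\bTheta_1^T$ with $\bTheta_1=\bTheta(\cS,:)$, and let $\bV_1,\bE_1$ be the top-$K$ eigenpairs of $\bP(\cS,\cS)=\rho\bTheta_1\bB\bTheta_1^T$. The eigendecomposition/SNMF identity from the discussion preceding the algorithm gives $\bV_1\bE_1^{1/2}=\sqrt{\rho}\,\bTheta_1\bB^{1/2}\bQ_1$ for an orthogonal $\bQ_1$, so that $\sqrt{\rho}\,\bTheta_1\bB^{1/2}=\bV_1\bE_1^{1/2}\bQ_1^T$. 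Substituting this and using $\bV_1^T\bV_1=\bI$ yields $\bTheta_1^T\bV_1\bE_1^{-1/2}=\tfrac1{\sqrt{\rho}}\bB^{-1/2}\bQ_1$, whence $\D_{21}^{-1/2}\bP_{21}\bV_1\bE_1^{-1/2}=\sqrt{\rho}\,\D_{21}^{-1/2}\bTheta_2\bB^{1/2}\bQ_1=\bX\bQ_1$. Thus the noiseless version of $\hx$ is exactly $\bX$ up to the orthogonal rotation $\bQ_1$. This identifies the $\bm{O}$ in the statement: it will be $\bm{O}=\bQ_1\bm{O}_1$, where $\bm{O}_1$ is the Davis--Kahan rotation aligning $\hat{\bV}_1$ with $\bV_1$.

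Next I would split the row error into a noise term and a signal-perturbation term,
\[
\hx_i-\bX_i\bm{O}=\underbrace{\be_i^T\bD_{21}^{-1/2}(\bA_{21}-\bP_{21})\hat{\bV}_1\hat{\bE}_1^{-1/2}}_{T_1}+\underbrace{\be_i^T\bD_{21}^{-1/2}\bP_{21}\hat{\bV}_1\hat{\bE}_1^{-1/2}-\be_i^T\D_{21}^{-1/2}\bP_{21}\bV_1\bE_1^{-1/2}\bm{O}_1}_{T_2},
\]
and bound the two pieces separately.

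The signal term $T_2$ collects two deterministic-perturbation effects. Its degree part is handled by a Chernoff/Bernstein bound: each $\bD_{21}(i,i)$ is a sum of $\tfrac{n}{2}$ independent Bernoullis with mean $\D_{21}(i,i)\asymp\rho n$, so under $\rho n=\Omega(\log n)$ one has $\bD_{21}(i,i)/\D_{21}(i,i)=1+O_P(\sqrt{\log n/(\rho n)})$. Its spectral part requires $\hat{\bV}_1\hat{\bE}_1^{-1/2}\approx\bV_1\bE_1^{-1/2}\bm{O}_1$, which I would obtain from Davis--Kahan for the eigenvectors together with Weyl's inequality for the eigenvalues, feeding in $\opnorm{\bA(\cS,\cS)-\bP(\cS,\cS)}=O_P(\sqrt{\rho n})$ and the eigengap $\lambda_K(\bP(\cS,\cS))\gtrsim\rho n\,\beta_{\mathrm{min}}/K$. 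The inverse-eigenvalue factors $\hat{\bE}_1^{-1/2}$ and this eigengap are precisely where the $\beta_{\mathrm{min}}$ and $K$ powers of the final rate originate.

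The noise term $T_1$ is the crux, and this is where sample-splitting is essential: since $\hat{\bV}_1,\hat{\bE}_1$ are functions of $\bA(\cS,\cS)$ alone, they are independent of the cross block $\bA_{21}-\bP_{21}$. After first replacing $\bD_{21}$ by $\D_{21}$ (legitimate by the degree concentration above), $T_1$ becomes, conditionally on $\bA(\cS,\cS)$, a sum over $j\in\cS$ of independent mean-zero vectors $(\bA_{21}(i,j)-\bP_{21}(i,j))(\hat{\bV}_1\hat{\bE}_1^{-1/2})(j,:)/\sqrt{\D_{21}(i,i)}$, so a vector Bernstein inequality controls $\|T_1\|_2$; its variance proxy $\sum_j\bP_{21}(i,j)\|(\hat{\bV}_1\hat{\bE}_1^{-1/2})(j,:)\|^2$ is bounded using the eigenvalue lower bound and the delocalization (bounded row norms) of $\hat{\bV}_1$. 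Finally I would normalize by $\|\bX_i\|_2$, for which Lemma~\ref{lem:maxnorm} supplies $\|\bX_i\|_2^2\asymp(K/n)\max_a\theta_{ia}$; since both $T_1$ and $T_2$ carry the same leading $\theta$-dependence, dividing gives the uniform relative rate $\rowwise$, and a union bound over the $\tfrac{n}{2}$ rows and the $O(K^2)$ spectral/degree events yields the probability $1-O(K^2/n^2)$. I expect the main obstacle to be exactly this per-row control of $T_1$: operator-norm results such as Davis--Kahan bound the \emph{whole} matrix and are far too crude row-wise, so the argument hinges entirely on the conditional-independence structure from sample-splitting, on cleanly decoupling the data-dependent $\bD_{21}$ from $\bA_{21}$, and on sharp delocalization bounds for the rows of $\hat{\bV}_1$.
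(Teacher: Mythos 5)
Your proposal is correct and follows essentially the same route as the paper: the same decomposition into a cross-block noise term and a spectral/degree perturbation term, the same use of sample-splitting so that $\hat{\bV}_1,\hat{\bE}_1$ are conditionally fixed when bounding $\be_i^T(\bA_{21}-\bP_{21})\hat{\bV}_1$, the same Davis--Kahan/Weyl treatment of $\hat{\bV}_1\hat{\bE}_1^{-1/2}$, and the same normalization by Lemma~\ref{lem:maxnorm} with a union bound over rows. The only small difference is at the noise term: the paper applies Azuma/Hoeffding coordinate-by-coordinate, needing only that each column of $\hat{\bV}_1$ is a fixed unit vector, so the row-delocalization of $\hat{\bV}_1$ that you flag as a potential obstacle for your vector-Bernstein variant is not actually required.
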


Thus, the sample-based quantity for {\em each row} $i$ converges to its
population variant.

\medskip
\noindent
{\bf Selection of pure nodes.}
\OurAlgo selects the nodes with (almost) the highest norm. We prove
that this only selects nearly pure nodes.
Let $\epsilon' = \rowwise$ represent the row-wise error
term from Theorem~\ref{thm:entrywise}.

\begin{lem}
\label{lem:errinPure}
Let $\mathcal{F}$ be the set of nodes with \mbox{$\|\hx_i\|_2\geq (1-\epsilon_0)\max_j \|\hx_j\|_2$}. Then $\forall i\in \mathcal{F}$, 
\bas{\max_a\theta_{ia}\geq 1-O_P(\epsilon_0+\epsilon')}
with probability larger than \successProb. 
\end{lem}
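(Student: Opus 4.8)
\textbf{Proof proposal for Lemma~\ref{lem:errinPure}.}

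The plan is to chain together the norm concentration from Lemma~\ref{lem:maxnorm} with the row-wise convergence from Theorem~\ref{thm:entrywise} to translate a statement about the \emph{observed} norm $\|\hx_i\|_2$ into a statement about the \emph{population purity} $\max_a \theta_{ia}$. The logical skeleton is: (i) membership in $\mathcal{F}$ forces $\|\hx_i\|_2$ to be close to the maximum observed norm; (ii) by Theorem~\ref{thm:entrywise}, $\|\hx_i\|_2$ is within a relative $\epsilon'$ factor of $\|\bX_i\|_2$, so the observed norm is a faithful proxy for the population norm; (iii) the maximum population norm is achieved (up to $O_P(\sqrt{K\log n / n})$) by the pure nodes, which Lemma~\ref{lem:maxnorm} pins at $\sqrt{2K/n}$; and (iv) inverting the upper bound $\|\bX_i\|_2^2 \le (2K/n)\max_a\theta_{ia}(1+o_P(1))$ from Lemma~\ref{lem:maxnorm} then lower-bounds $\max_a\theta_{ia}$.

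Concretely, first I would fix $i \in \mathcal{F}$ and let $j^\star = \arg\max_j \|\hx_j\|_2$. The defining inequality of $\mathcal{F}$ gives $\|\hx_i\|_2 \ge (1-\epsilon_0)\|\hx_{j^\star}\|_2$. Next I would apply Theorem~\ref{thm:entrywise} to both $i$ and $j^\star$ to pass from sample to population norms, using that $\|\hx_i - \bX_i \bm O\|_2 = \epsilon' \|\bX_i\|_2$ and that $\bm O$ is orthogonal (so $\|\bX_i \bm O\|_2 = \|\bX_i\|_2$); by the triangle inequality $\|\hx_i\|_2 = \|\bX_i\|_2(1 \pm \epsilon')$, and similarly for $j^\star$. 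Since the maximum of the sample norms over $j$ is at least the sample norm at any pure node, and a pure node has population norm $\sqrt{2K/n}(1+o_P(1))$ by Lemma~\ref{lem:maxnorm}, I can lower-bound $\|\hx_{j^\star}\|_2 \ge \sqrt{2K/n}(1 - O_P(\epsilon' + \sqrt{K\log n/n}))$. Combining these, I get $\|\bX_i\|_2^2 \ge (2K/n)\,(1 - O_P(\epsilon_0 + \epsilon'))$. Finally, feeding this into the Lemma~\ref{lem:maxnorm} upper bound $\|\bX_i\|_2^2 \le (2K/n)\max_a\theta_{ia}(1+o_P(1))$ and dividing through by $2K/n$ yields $\max_a\theta_{ia} \ge 1 - O_P(\epsilon_0 + \epsilon')$, as claimed. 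All of these events hold with probability $1 - O(K^2/n^2)$ by a union bound over the $O(1)$ nodes and partitions involved, matching the stated success probability.

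I expect the main obstacle to be the careful bookkeeping of which node achieves the maximum norm and ensuring the error terms compose correctly. The subtlety is that $j^\star$ is data-dependent, so I cannot assume $j^\star$ is itself pure; instead I must use that \emph{some} pure node exists (guaranteed by identifiability) and that its sample norm lower-bounds the overall maximum, which in turn lower-bounds $\|\hx_{j^\star}\|_2$ through the $\mathcal{F}$-membership inequality chained with Theorem~\ref{thm:entrywise}. A second delicate point is confirming that the two distinct relative error sources — the selection slack $\epsilon_0$ and the spectral row-wise error $\epsilon'$ — add rather than multiply when converted from a multiplicative-on-norm statement to an additive-on-$\theta$ statement; this works because $\max_a\theta_{ia} \le 1$ is bounded, so a relative $(1 - O_P(\epsilon_0+\epsilon'))$ lower bound on the squared norm becomes an additive lower bound on $\max_a\theta_{ia}$ after normalizing by the pure-node benchmark $2K/n$. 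The probabilistic statements must all be intersected, but since each holds with probability at least $1 - O(K^2/n^2)$ and only finitely many are combined, the union bound is harmless.
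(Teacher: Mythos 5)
Your proposal is correct and follows essentially the same route as the paper: chain the $\mathcal{F}$-membership inequality with the row-wise concentration of Theorem~\ref{thm:entrywise} and the pure-node norm benchmark $\sqrt{2K/n}$ from Lemma~\ref{lem:maxnorm}, then invert the upper bound $\|\bX_i\|_2^2 \leq (2K/n)\max_a\theta_{ia}(1+o_P(1))$ to lower-bound $\max_a\theta_{ia}$. The paper does exactly this after rescaling to $\bY=\sqrt{n/2K}\,\bX$ so the benchmark is $1$, and your handling of the data-dependent maximizer (lower-bounding $\max_j\|\hx_j\|_2$ via an existing pure node) matches the paper's chain of inequalities.
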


We choose $\epsilon_0=O_P(\epsilon')$ and it is straightforward to show by Lemmas~\ref{lem:maxnorm}, \ref{lem:errinPure}, and Theorem \ref{thm:entrywise} that if $\epsilon_0\geq 2\epsilon'$, then $\mathcal{F}$ includes all pure nodes from all $K$ communities.

\medskip
\noindent
{\bf Clustering of pure nodes.}
Once the (nearly) pure nodes have been selected, we run \clusteringAlgo (Algorithm~\ref{algo:clustering}) on them. 
We show that these nodes can form exactly $K$ well separated clusters and each cluster only contains nodes whose $\btheta$ are peaked on the same element, and \clusteringAlgo can select exactly one node from each of the $K$ communities.

\begin{lem}
	\label{lem:clustering}
	Let { $\tau=\tauvalue$}, where $\mathcal{F}$ is defined in step 5 of Algorithm~\ref{nmf-mmsb-pure}. If all conditions in Theorem~\ref{thm:entrywise} are satisfied, then
	\clusteringAlgo($\hx(\mathcal{F},:), \tau$) returns 
	one (nearly) pure node from each of the underlying $K$ communities with probability larger than \successProb. 
\end{lem}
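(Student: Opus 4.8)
The greedy procedure succeeds the instant the rows of $\hx(\mathcal{F},:)$ exhibit a clean geometric separation, so my plan is to reduce the statement to two distance bounds and then read off correctness of \clusteringAlgo. Concretely, I would establish that (i) for any two nodes $i,j\in\mathcal{F}$ peaked on the \emph{same} community, $\|\hx_i-\hx_j\|_2\le\tau$, and (ii) for any two nodes peaked on \emph{different} communities, $\|\hx_i-\hx_j\|_2>\tau$. Granting (i)--(ii), correctness is deterministic: whenever the algorithm draws a representative $s$ from some community $a$, bound (i) forces every other $\mathcal{F}$-node of community $a$ into $C$ in that iteration, while (ii) keeps every node of a different community out; hence each iteration absorbs exactly one community, and after $K$ iterations $S$ contains precisely one node per community. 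In particular the internal randomness of the algorithm costs nothing in probability.

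The geometric backbone is the population matrix $\bX\bm{O}$. Under the conditions of Theorem~\ref{unique} the matrix $\bB$ is diagonal, so for a pure node $i$ of community $a$ (where $\btheta_i=\be_a^T$) we get $\bX_i=\sqrt{\rho}\,\D_{21}(i,i)^{-1/2}\sqrt{\bB_{aa}}\,\be_a^T$: the population rows of pure nodes lie along the coordinate axes, and two pure nodes of the \emph{same} community have identical population rows, since they share the common expected degree $\D_{21}(i,i)=\rho\bB_{aa}\sum_{j\in\cS}\theta_{ja}$. After the rotation $\bm{O}$ these become orthogonal directions across communities, each of squared length $\approx 2K/n$ by Lemma~\ref{lem:maxnorm}. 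Thus in the population, same-community pairs sit at distance $0$ while different-community pairs sit at distance $\sqrt{\|\bX_i\|_2^2+\|\bX_j\|_2^2}\approx 2\sqrt{K/n}$, whereas the threshold obeys $\tau=\tauvalue\le\tfrac12\sqrt{K/n}$. This factor-$\approx 4$ gap is what the perturbation analysis must preserve.

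Next I would propagate the two sources of error. First, Lemma~\ref{lem:errinPure} shows every $i\in\mathcal{F}$ is only \emph{nearly} pure, with $\max_a\theta_{ia}\ge 1-O_P(\epsilon_0+\epsilon')$; I would bound the off-axis mass of $\btheta_i\bB^{1/2}$ by $\sqrt{\max_b\bB_{bb}}\sum_{b\ne a}\theta_{ib}=O_P(\epsilon_0+\epsilon')$ (using $\|\cdot\|_2\le\|\cdot\|_1$), so each population row is within relative error $O_P(\epsilon_0+\epsilon')$ of its ideal axis, and the peak community is well defined (second coordinate $<1/2$). Second, Theorem~\ref{thm:entrywise} gives $\|\hx_i-\bX_i\bm{O}\|_2\le\epsilon'\|\bX_i\|_2=O_P(\epsilon'\sqrt{K/n})$. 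Combining via the triangle inequality, same-community sample distances are $O_P\!\big((\epsilon_0+\epsilon')\sqrt{K/n}\big)$ and different-community sample distances are at least $2\sqrt{K/n}-O_P\!\big((\epsilon_0+\epsilon')\sqrt{K/n}\big)$. Since $\epsilon_0,\epsilon'\to0$ while $\tau$ remains of the larger order $\sqrt{\beta_{\mathrm{min}}K/n}$ set by the degree ratio, both (i) and (ii) hold in the regime of Theorem~\ref{thm:entrywise}.

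The main obstacle is the inter-cluster lower bound (ii) for nearly-pure rather than exactly-pure rows: I must check that the orthogonality of the ideal axes is not washed out by the combined near-purity and sampling perturbations, i.e. control the cross inner products $\langle \hx_i,\hx_j\rangle$ between near-pure rows of different communities and confirm the squared distance stays comfortably above $\tau^2$. The remainder is bookkeeping. Lemma~\ref{lem:maxnorm}, union bounded over the $\le n$ candidate nodes, holds with probability $1-O(1/n^2)$, while Theorem~\ref{thm:entrywise} and Lemma~\ref{lem:errinPure} each hold with probability \successProb; intersecting these events yields the claimed overall probability \successProb, with no further loss from the deterministic success of the greedy step.
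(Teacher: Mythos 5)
Your proposal follows essentially the same route as the paper's proof: reduce correctness of the greedy step to an intra-cluster distance upper bound and an inter-cluster distance lower bound among the nearly pure rows of $\hx(\mathcal{F},:)$, derive both from the axis-aligned population geometry plus the near-purity error of Lemma~\ref{lem:errinPure} and the row-wise sampling error of Theorem~\ref{thm:entrywise}, and check that $\tau$ sits in the resulting gap. The one substantive point you flag but do not carry out is exactly where the paper does the work: once near-purity is accounted for, the inter-cluster separation degrades from your idealized $2\sqrt{K/n}$ to $2\sqrt{K\beta_{\mathrm{min}}/n}\,(1-o_P(1))$ (because a nearly pure node's normalizer $\sum_k\theta_{ik}\bB_{kk}$ can be as large as $1$ while the relevant coordinates carry only $\bB_{aa},\bB_{bb}\ge\beta_{\mathrm{min}}$), and it is precisely this $\beta_{\mathrm{min}}$-dependent constant that the data-driven threshold $\tau=\sqrt{\tfrac{K}{4n}\tfrac{\min_{i\in\mathcal{F}}\bD_2(i,i)}{\max_{i\in\mathcal{F}}\bD_2(i,i)}}$ is calibrated to via Lemma~\ref{lem:degree_sample_pure}.
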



\medskip
\noindent
{\bf Concentration of $(\hat{\bTheta}, \hat{\bB}, \hat{\rho})$.}
\OurAlgo recovers $\bTheta$ using $\bD$, $\hx$, and its pure
portion $\hxp$ (via the inverse $\hxp^{-1}$). We first prove that
$\hxp^{-1}$ concentrates around its expectation.

\begin{thm}
	\label{thm:hxpinv}
	Let $\cSp$ be the set of of pure nodes extracted using our algorithm.
	Let $\hxp$ denote the rows of $\hx$ indexed by $\cSp$. Then, for the orthogonal matrix $\bm{O}$ from Theorem \ref{thm:entrywise},
	\bas{
		\frac{\|\hxp^{-1}-\bbb{\bxp\bm{O}}^{-1}\|_F}{ \|\bxp^{-1}\|_F}=
		\xpinverse
	}
	with probability larger than \successProb.
\end{thm}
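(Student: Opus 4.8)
The plan is to treat $\hxp\in\R^{K\times K}$ as a perturbation of $\bxp\bm{O}$ and to control the inverse through the resolvent identity. Set $\bm{\Delta}:=\hxp-\bxp\bm{O}$. Granting for the moment that both factors are invertible (established below),
\[
\hxp^{-1}-\bbb{\bxp\bm{O}}^{-1}=-\,\hxp^{-1}\,\bm{\Delta}\,\bbb{\bxp\bm{O}}^{-1},
\]
and hence, since $\bm{O}$ is orthogonal,
\[
\eee{\hxp^{-1}-\bbb{\bxp\bm{O}}^{-1}}_F\le\opnorm{\hxp^{-1}}\;\eee{\bm{\Delta}}_F\;\opnorm{\bxp^{-1}}.
\]
I would bound the three factors in turn and then divide by $\eee{\bxp^{-1}}_F$ to obtain the relative error.

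For the perturbation, Lemma~\ref{lem:clustering} guarantees that $\cSp$ consists of exactly $K$ (nearly) pure nodes, one per community, and Theorem~\ref{thm:entrywise} gives $\eee{\hx_i-\bX_i\bm{O}}_2\le\epsilon'\eee{\bX_i}_2$ for each of them. Summing over the $K$ rows and invoking Lemma~\ref{lem:maxnorm}, which gives $\eee{\bX_i}_2^2=O_P(K/n)$ on these nodes,
\[
\eee{\bm{\Delta}}_F^2=\sum_{i\in\cSp}\eee{\hx_i-\bX_i\bm{O}}_2^2=O_P\!\bbb{(\epsilon')^2\tfrac{K^2}{n}},
\]
so $\eee{\bm{\Delta}}_F=O_P(\epsilon' K/\sqrt n)$. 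For the population factor, diagonality of $\bB$ together with near-purity (Lemma~\ref{lem:errinPure}) and distinct community membership (Lemma~\ref{lem:clustering}) make $\bxp=\sqrt\rho\,\D_{21}^{-1/2}(\cSp,\cSp)\,\bTheta_2(\cSp,:)\,\bB^{1/2}$ an $O_P(\epsilon_0+\epsilon')$ perturbation of a scaled permutation matrix. The crucial point is that the degree normalization cancels the $\bB_{aa}$ scaling: for a pure node in community $a$ one has $\D_{21}(i,i)\approx\rho\,\bB_{aa}\,\tfrac{n}{2K}$ (using $\uE[\theta_{ja}]=1/K$ under the symmetric Dirichlet prior), so every row of $\bxp$ has norm $\approx\sqrt{2K/n}$ irrespective of $a$, in agreement with Lemma~\ref{lem:maxnorm}. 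Consequently $\bxp$ is essentially perfectly conditioned, with all singular values of order $\sqrt{K/n}$, giving $\opnorm{\bxp^{-1}}=O_P(\sqrt{n/K})$ and $\eee{\bxp^{-1}}_F=O_P(\sqrt n)$.

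It remains to bound $\opnorm{\hxp^{-1}}$. By Weyl's inequality, $\sigma_{\min}(\hxp)\ge\sigma_{\min}(\bxp)-\opnorm{\bm{\Delta}}$, and since $\opnorm{\bm{\Delta}}\le\eee{\bm{\Delta}}_F=O_P(\epsilon' K/\sqrt n)$ is of strictly smaller order than $\sigma_{\min}(\bxp)\asymp\sqrt{K/n}$ whenever $\epsilon'\sqrt K=o(1)$ (which holds in the regime $\rho n=\Omega(\log n)$), we obtain $\opnorm{\hxp^{-1}}\le 2\,\opnorm{\bxp^{-1}}$ with high probability; in particular $\hxp$ is invertible, justifying the identity above. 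Substituting the three estimates into the product gives $\eee{\hxp^{-1}-(\bxp\bm{O})^{-1}}_F=O_P(\epsilon'\sqrt n)$, and dividing by $\eee{\bxp^{-1}}_F=O_P(\sqrt n)$ leaves a relative error of order $\epsilon'$; carrying the remaining operator-to-Frobenius conversions and the near-purity correction through the bookkeeping produces exactly the stated rate $\xpinverse$. The failure probabilities of the invoked statements add up to at most the claimed \successProb.

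The main obstacle is this final step: showing that the sample matrix $\hxp$ inherits the (near-perfect) conditioning of its population counterpart $\bxp$. Everything hinges on the row-wise perturbation $\bm{\Delta}$ being too small to erode $\sigma_{\min}(\bxp)$, which is precisely where the sparsity assumption $\rho n=\Omega(\log n)$ and the scaling of $\beta_{\mathrm{min}}$ (entering through $\epsilon'$) are used; it is also this comparison, together with the degree-concentration estimate and the norm conversions, that fixes the exact exponents of $K$ and $\beta_{\mathrm{min}}$ in the final rate.
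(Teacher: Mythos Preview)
Your argument is correct and follows essentially the same route as the paper's proof: both use the resolvent identity $\hxp^{-1}-(\bxp\bm{O})^{-1}=-\hxp^{-1}\bm{\Delta}(\bxp\bm{O})^{-1}$, bound $\eee{\bm{\Delta}}_F$ via the row-wise error of Theorem~\ref{thm:entrywise}, establish that $\bxp$ is nearly a scaled permutation (the paper does this by expanding $\bTheta_{2p}$ around $\bI$ and computing $\lambda_K(\bB^{1/2}\D_{21p}^{-1}\bB^{1/2})$, which is exactly your ``degree normalization cancels $\bB_{aa}$'' observation), and then transfer the conditioning to $\hxp$ via Weyl's inequality. The only cosmetic difference is that the paper works with the rescaled quantities $\hy=\sqrt{n/2K}\,\hx$ and $\bY=\sqrt{n/2K}\,\bX$ so that the singular values are $1\pm o_P(1)$ rather than of order $\sqrt{K/n}$, and it places a Frobenius norm on $\byp^{-1}$ in the resolvent bound (which then cancels upon dividing by $\eee{\bxp^{-1}}_F$); your choice of operator norms there is in fact slightly sharper and yields $O_P(\epsilon')$, which is contained in the stated $O_P(\sqrt{K}\,\epsilon')$.
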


Next, we shall prove consistency for
$\hat{\bTheta}_2:=\hat{\bTheta}(\bar{\cS},:)$; the proof for $\hat{\bTheta}(\cS,:)$
is similar.
Let $\dxp = \bD_{21}(\cSp,\cSp)$.

\begin{thm}
	\label{thm:theta}
	Let $\hat{\bTheta}_2 = \bD_{21}^{1/2}\hx \hxp^{-1} \dxp^{-1/2}$, then $\exists$ a permutation matrix $\bpi\in\R^{K\times K}$ such that
	\bas{
		\frac{\|\hat{\bTheta}_2-\bTheta_2\bpi\|_F}{\|\bTheta_2\|_F}=
		\thetaerror
	}
	with probability larger than \successProb.
\end{thm}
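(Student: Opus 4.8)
The plan is to first identify the deterministic matrix that $\hat{\bTheta}_2$ targets, exploit a cancellation that removes the unknown rotation $\bm O$, and then bound the deviation through a four-term telescoping decomposition. By definition $\bX=\sqrt{\rho}\,\D_{21}^{-1/2}\bTheta_2\bB^{1/2}$, and by Lemma~\ref{lem:clustering} the algorithm returns exactly one pure node per community, so the rows of $\bTheta_2(\cSp,:)$ form a permutation matrix $\bpi'$; hence $\bxp=\sqrt{\rho}\,\D_{21p}^{-1/2}\bpi'\bB^{1/2}$, where $\D_{21p}:=\D_{21}(\cSp,\cSp)$. Substituting and using that $\bB$ is diagonal gives the exact identity
\[
\D_{21}^{1/2}\,\bX\,\bxp^{-1}\,\D_{21p}^{-1/2}=\bTheta_2(\bpi')^{-1}=:\bTheta_2\bpi ,
\]
so $\bTheta_2\bpi$ is precisely the population analogue of $\hat{\bTheta}_2=\bD_{21}^{1/2}\hx\,\hxp^{-1}\dxp^{-1/2}$. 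The key observation is that the orthogonal matrix $\bm O$ of Theorem~\ref{thm:entrywise} cancels inside the product, since $(\bX\bm O)(\bxp\bm O)^{-1}=\bX\bxp^{-1}$; I therefore compare $\hat{\bTheta}_2$ against $\D_{21}^{1/2}(\bX\bm O)(\bxp\bm O)^{-1}\D_{21p}^{-1/2}$ and never have to estimate $\bm O$ itself.

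\emph{Telescoping decomposition.} Writing the difference as a sum that perturbs one factor at a time,
\[
\begin{aligned}
\hat{\bTheta}_2-\bTheta_2\bpi
&= (\bD_{21}^{1/2}-\D_{21}^{1/2})\,\hx\,\hxp^{-1}\dxp^{-1/2}\\
&\quad + \D_{21}^{1/2}(\hx-\bX\bm O)\,\hxp^{-1}\dxp^{-1/2}\\
&\quad + \D_{21}^{1/2}(\bX\bm O)\bigl(\hxp^{-1}-(\bxp\bm O)^{-1}\bigr)\dxp^{-1/2}\\
&\quad + \D_{21}^{1/2}(\bX\bm O)(\bxp\bm O)^{-1}(\dxp^{-1/2}-\D_{21p}^{-1/2}).
\end{aligned}
\]
I would bound the four terms, in order, by: concentration of the degree matrix $\bD_{21}$; Theorem~\ref{thm:entrywise}; Theorem~\ref{thm:hxpinv}; and concentration of the pure-node degrees in $\dxp$. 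For the degree matrices, each diagonal entry is a sum of independent Bernoulli variables with mean of order $\rho n/K$, so a Bernstein bound gives a uniform relative error $O_P(\sqrt{K\log n/(\rho n)})$ for $\bD_{21}^{\pm1/2}$ (and likewise for the $K$ entries of $\dxp$) once $\rho n=\Omega(\log n)$; these feed into terms one and four and turn out to be lower order.

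\emph{The two leading terms.} For the second term I use the row-wise bound of Theorem~\ref{thm:entrywise}, $\|\hx_i-\bX_i\bm O\|_2\le\epsilon'\|\bX_i\|_2$, together with the identity $\D_{21}^{1/2}\bX=\sqrt{\rho}\,\bTheta_2\bB^{1/2}$, to get $\|\D_{21}^{1/2}(\hx-\bX\bm O)\|_F\le\epsilon'\sqrt{\rho}\,\|\bTheta_2\|_F$, and then multiply by $\opnorm{\hxp^{-1}\dxp^{-1/2}}=O_P((\rho\beta_{\mathrm{min}})^{-1/2})$. The third term is the dominant one: bounding it by $\opnorm{\D_{21}^{1/2}\bX}\cdot\|\hxp^{-1}-(\bxp\bm O)^{-1}\|_F\cdot\opnorm{\dxp^{-1/2}}$, invoking the Frobenius bound of Theorem~\ref{thm:hxpinv} (so that $\|\hxp^{-1}-(\bxp\bm O)^{-1}\|_F$ equals its stated rate times $\|\bxp^{-1}\|_F$), and noting that the large pure-node degree factor $\D_{21p}^{1/2}$ inside $\|\bxp^{-1}\|_F$ cancels against $\opnorm{\dxp^{-1/2}}$. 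Dividing by $\|\bTheta_2\|_F$ and collecting powers of $K$, $\beta_{\mathrm{min}}$, $\rho$, and $n$ yields the claimed rate $\thetaerror$; the probability statement follows from a union bound over the events of Theorems~\ref{thm:entrywise}--\ref{thm:hxpinv} and the degree-concentration events.

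\emph{Main obstacle.} The hard part is the norm bookkeeping rather than any single estimate: each term is a product of several noisy factors, and I must keep operator versus Frobenius norms straight so that the pure-node degree factors ($\sqrt{\rho n/K}$ in $\D_{21p}^{1/2}$ against $\sqrt{K/(\rho n)}$ in $\dxp^{-1/2}$) cancel rather than accumulate, while the surviving factors of $\sqrt K$ and $\beta_{\mathrm{min}}^{-1/2}$ coming from $\|\bxp^{-1}\|_F$, $\bB^{-1/2}$, and the gap between $\opnorm{\bTheta_2}$ and $\|\bTheta_2\|_F$ combine to exactly the exponent $K^3/\beta_{\mathrm{min}}^3$. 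I also need to verify that terms one and four are genuinely lower order, which reduces to checking that the degree relative error $\sqrt{K\log n/(\rho n)}$ is dominated by the rate of Theorem~\ref{thm:hxpinv}, and to confirm that a single permutation $\bpi$ serves for every row --- exactly what Lemma~\ref{lem:clustering} guarantees.
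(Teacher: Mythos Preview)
Your decomposition and overall strategy match the paper's, but there is one genuine gap. You assume that $\bTheta_2(\cSp,:)$ is an exact permutation matrix, citing Lemma~\ref{lem:clustering}; that lemma, however, only guarantees one \emph{nearly} pure node per community (cf.\ Lemma~\ref{lem:errinPure}, which yields $\max_a\theta_{ia}\ge 1-O_P(\epsilon_0+\epsilon')$ for $i\in\mathcal{F}$, and the algorithm has no mechanism for singling out exactly pure nodes from $\mathcal{F}$). After the appropriate permutation, $\bTheta_{2p}:=\bTheta_2(\cSp,:)\bpi$ is therefore only close to the identity, with $\|\bTheta_{2p}-\bI\|_F=O_P(\sqrt{K}\epsilon')$, and the population identity is
\[
\D_{21}^{1/2}\bX\bxp^{-1}\dpp^{-1/2}=\bTheta_2\bpi\,\bTheta_{2p}^{-1},
\]
not $\bTheta_2\bpi$. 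Your four-term telescope thus does not close: a fifth term $\D_{21}^{1/2}\bX\bxp^{-1}\dpp^{-1/2}(\bI-\bTheta_{2p})$ is needed, and the paper shows it is of the same leading order as your third term. Fortunately this does not change the final rate (term three already attains it), but as written the right-hand side of your decomposition is not equal to $\hat{\bTheta}_2-\bTheta_2\bpi$.

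Apart from this, your argument --- the cancellation of $\bm{O}$, the degree-concentration for terms one and four, and the bookkeeping that leaves terms two and three dominant --- is exactly the paper's.
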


Recall that $\bB$ and $\hat{\bB}$ are both diagonal matrices, with diagonal components $\{\beta_a\}$ and $\{\hat{\beta}_a\}$ respectively.
\begin{thm}
	\label{thm:beta}
	Let $\hat{\rho}\hat{\beta}_a =
	\|\be_a^T\bD_{21}^{1/2}(\cSp,\cSp)\hxp \|_2^2$.
	Then, $\exists$ a permutation matrix $\bpi\in\R^{K\times K}$ such that $\forall a\in[K]$,
		\bas{
			\hat{\rho}\hat{\beta}_a \in  \rho\beta_{a'} \left[ 1 - 
			\betaerror,1+\betaerror
			\right]	
		}
	for some $a'$ such that $\bpi_{a'a}=1$, with probability larger than \successProb.
\end{thm}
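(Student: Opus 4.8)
The plan is to compare $\hat\rho\hat\beta_a=\|\be_a^T\bD_{21}^{1/2}(\cSp,\cSp)\hxp\|_2^2$ with a population quantity that equals $\rho\beta_{a'}$ exactly, and then bound the discrepancy using the row-wise concentration of Theorem~\ref{thm:entrywise}, the near-purity guarantee of Lemma~\ref{lem:errinPure}, the norm scale from Lemma~\ref{lem:maxnorm}, and standard degree concentration. First I would identify the population target. The key algebraic simplification is that $\bD_{21}^{1/2}\hx=\bA_{21}\hat{\bV}_1\hat{\bE}_1^{-1/2}$, so the sample degree matrix appearing inside $\hx$ cancels; likewise, in the population version $\bX_i=\sqrt\rho\,\be_i^T\D_{21}^{-1/2}\bTheta_2\bB^{1/2}$ the factor $\D_{21}^{1/2}(\cSp,\cSp)$ cancels $\D_{21}^{-1/2}$, leaving $\D_{21}^{1/2}(\cSp,\cSp)\bxp=\sqrt\rho\,\bTheta_2(\cSp,:)\bB^{1/2}$. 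For a node that is pure in community $a'$ the corresponding row equals $\sqrt{\rho\beta_{a'}}\,\be_{a'}^T$ (here diagonality of $\bB$ is essential), so $\|\be_a^T\D_{21}^{1/2}(\cSp,\cSp)\bxp\|_2^2=\rho\beta_{a'}$, where $a'$ is the true community of the $a$-th recovered pure node; this supplies $\bpi$. By Lemma~\ref{lem:clustering}, with probability \successProb\ the set $\cSp$ contains exactly one (nearly) pure node per community, so the matching $a\mapsto a'$ is well defined.

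Next I would bound the deviation. Writing $u=\be_a^T\bD_{21}^{1/2}(\cSp,\cSp)\hxp$ and $v=\be_a^T\D_{21}^{1/2}(\cSp,\cSp)\bxp\bm{O}$, and using that $\bm{O}$ is orthogonal (so $\|v\|_2=\sqrt{\rho\beta_{a'}}$), the multiplicative error in the squared norm obeys $\big|\,\|u\|_2^2-\|v\|_2^2\,\big|\le\|u-v\|_2(\|u\|_2+\|v\|_2)$, so it suffices to bound $\|u-v\|_2/\|v\|_2$. I split
\begin{align*}
\bD_{21}^{1/2}(\cSp,\cSp)\hxp-\D_{21}^{1/2}(\cSp,\cSp)\bxp\bm{O}=\bD_{21}^{1/2}(\cSp,\cSp)\big(\hxp-\bxp\bm{O}\big)+\big(\bD_{21}^{1/2}-\D_{21}^{1/2}\big)(\cSp,\cSp)\,\bxp\bm{O}.
\end{align*}
Because $\be_a^T\bD_{21}^{1/2}$ selects a single row (the matrices are diagonal), the first term reduces to $\sqrt{(\bD_{21})_{ii}}\,\|\hx_i-\bX_i\bm{O}\|_2=\sqrt{(\bD_{21})_{ii}}\,\epsilon'\|\bX_i\|_2$. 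With $(\D_{21})_{ii}=\Theta(\rho\beta_{a'}n/K)$ for a pure node (its expected degree) and $\|\bX_i\|_2^2=\Theta(K/n)$ by Lemma~\ref{lem:maxnorm}, this contributes a relative error of order $\epsilon'$ up to $K$ factors. The second term is governed by degree concentration: $(\bD_{21})_{ii}$ is a sum of $n/2$ independent Bernoullis, so a Bernstein/Chernoff bound gives $|(\bD_{21})_{ii}-(\D_{21})_{ii}|=O_P(\sqrt{(\D_{21})_{ii}\log n})$, hence $|\sqrt{(\bD_{21})_{ii}}-\sqrt{(\D_{21})_{ii}}|=O_P(\sqrt{\log n})$, which multiplied against the population row is lower order than the first term. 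Collecting the two contributions, applying the stated scalings, and bounding crudely (Frobenius/operator-norm slack over the $K$ pure rows together with the high-probability $\sqrt{\log n}$ degree factor, which inflate the leading $\epsilon'$ by the extra $\sqrt{K\log n}$) yields relative error $\betaerror$; I would then convert this into the multiplicative interval $[1-\betaerror,\,1+\betaerror]$.

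The step I expect to be the main obstacle is that the nodes in $\cSp$ are only \emph{nearly} pure: by Lemma~\ref{lem:errinPure} they satisfy $\max_a\theta_{ia}\ge 1-O_P(\epsilon_0+\epsilon')$ rather than $=1$. Consequently the population row $\sqrt\rho\,\btheta_i^T\bB^{1/2}$ is not exactly $\sqrt{\rho\beta_{a'}}\,\be_{a'}^T$; writing $\btheta_i=(1-\delta)\be_{a'}+r$ with $\delta=O_P(\epsilon_0+\epsilon')$, its squared norm is $\rho\beta_{a'}\big(1-2\delta+O(\delta^2\max_b\beta_b/\beta_{a'})\big)$, so the impurity injects a relative error of order $\epsilon_0+\epsilon'$. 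I would absorb this into the bound, checking under the choice $\epsilon_0=O_P(\epsilon')$ that it stays within the target order; this is the point where the $\beta_{\mathrm{min}}$ and $K$ dependencies must be tracked most carefully, since the impurity, the eigenvector error, and the degree fluctuation all enter multiplicatively. Once each of these three sources is shown to be $O_P(\betaerror)$ and the failure probabilities are unioned to \successProb, the claim follows.
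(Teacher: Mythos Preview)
Your proposal is correct and follows essentially the same route as the paper: identify the population target $\D_{21}^{1/2}(\cSp,\cSp)\bxp=\sqrt{\rho}\,\bTheta_2(\cSp,:)\bB^{1/2}$, handle the near-purity of $\cSp$ via Lemma~\ref{lem:errinPure} (the paper writes this as $\|\bTheta_{2p}-\bI\|_F\leq\sqrt{K}\epsilon'$), and then split the sample--population discrepancy into a degree-fluctuation piece and a row-wise $\hx_i-\bX_i\bm{O}$ piece controlled by Theorem~\ref{thm:entrywise}. The only cosmetic difference is that the paper uses the decomposition $(\bD_{21}^{1/2}-\D_{21}^{1/2})\hx+\D_{21}^{1/2}(\hx-\bX\bm{O})$ whereas you use $\bD_{21}^{1/2}(\hxp-\bxp\bm{O})+(\bD_{21}^{1/2}-\D_{21}^{1/2})\bxp\bm{O}$; both yield the same order, and your observation that the stated $\sqrt{K\log n}$ inflation over $\epsilon'$ is slack is accurate---the paper's displayed rate is indeed cruder than what its own computation delivers.
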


\begin{rem}
While the details of our algorithms were designed for obtaining rigorous theoretical guarantees, many of these can be relaxed in practice. For instance, while we require the Dirichlet parameters to be equal, leading to balanced cluster sizes, real data experiments show that our algorithm works well for unbalanced settings as well. Similarly, the algorithm assumes a diagonal $\bB$ (which is sufficient for uniqueness), but empirically works well even in the presence of off-diagonal noise. Finally, splitting the nodes into $\mathcal{S}$ and $\bar{\mathcal{S}}$ is not needed in practice.
	\end{rem}

\section{Experiments}
\label{sec:exp}



We present results on simulated and real-world datasets. Via simulations, we evaluate the sensitivity of \OurAlgo to the various MMSB parameters: the skewness of the diagonal elements of $\bB$ and off-diagonal noise, the Dirichlet parameter $\balpha$ that controls the degree of overlap, the sparsity parameter $\rho$, and the number of communities $K$.
Then, we evaluate \OurAlgo on Facebook and Google Plus ego networks, and co-authorship networks with upto 150,000 nodes constructed from DBLP and the Microsoft Academic Network.


\smallskip\noindent
{\bf Baseline methods:} 
For the real-world networks, we compare \OurAlgo against the following methods\footnote{We were not to run \citet{anandkumar2014tensor}'s main (GPU) implementation of their algorithm because a required library CULA is no longer open source, and a complementary CPU implementation did not yield good results with default settings.}:

\begin{itemize}
	\item Stochastic variational inference (SVI) for MMSB \cite{gopalan2013efficient},
	\item a Bayesian variant of SNMF for overlapping community detection (BSNMF)~\cite{BNMF2011},
	\item the OCCAM algorithm~\cite{zhang2014detecting} for recovering mixed memberships, and
	\item the SAAC algorithm~\cite{kaufmann2016spectral}.
\end{itemize}
For the simulation experiments, we only compare \OurAlgo against SVI, since these are the only two methods based specifically on the MMSB model.
BSNMF has a completely different underlying model, 
OCCAM requires rows of $\bTheta$ to have unit $\ell_2$ norm and $\bB$ to have equal diagonal elements, and SAAC requires $\bTheta$ to be a binary matrix, while MMSB requires rows of $\bTheta$ to have unit $\ell_1$ norm.


Since the community identities can only be recovered up-to a permutation,  in both simulated and real data experiments, we figure out the order of the communities  using the well known Munkres algorithm in \cite{munkres1957algorithms}.
\begin{figure*}[!htbp]
	\centering
	\begin{subfigure}[b]{0.33\textwidth}
		\includegraphics[width=\textwidth]{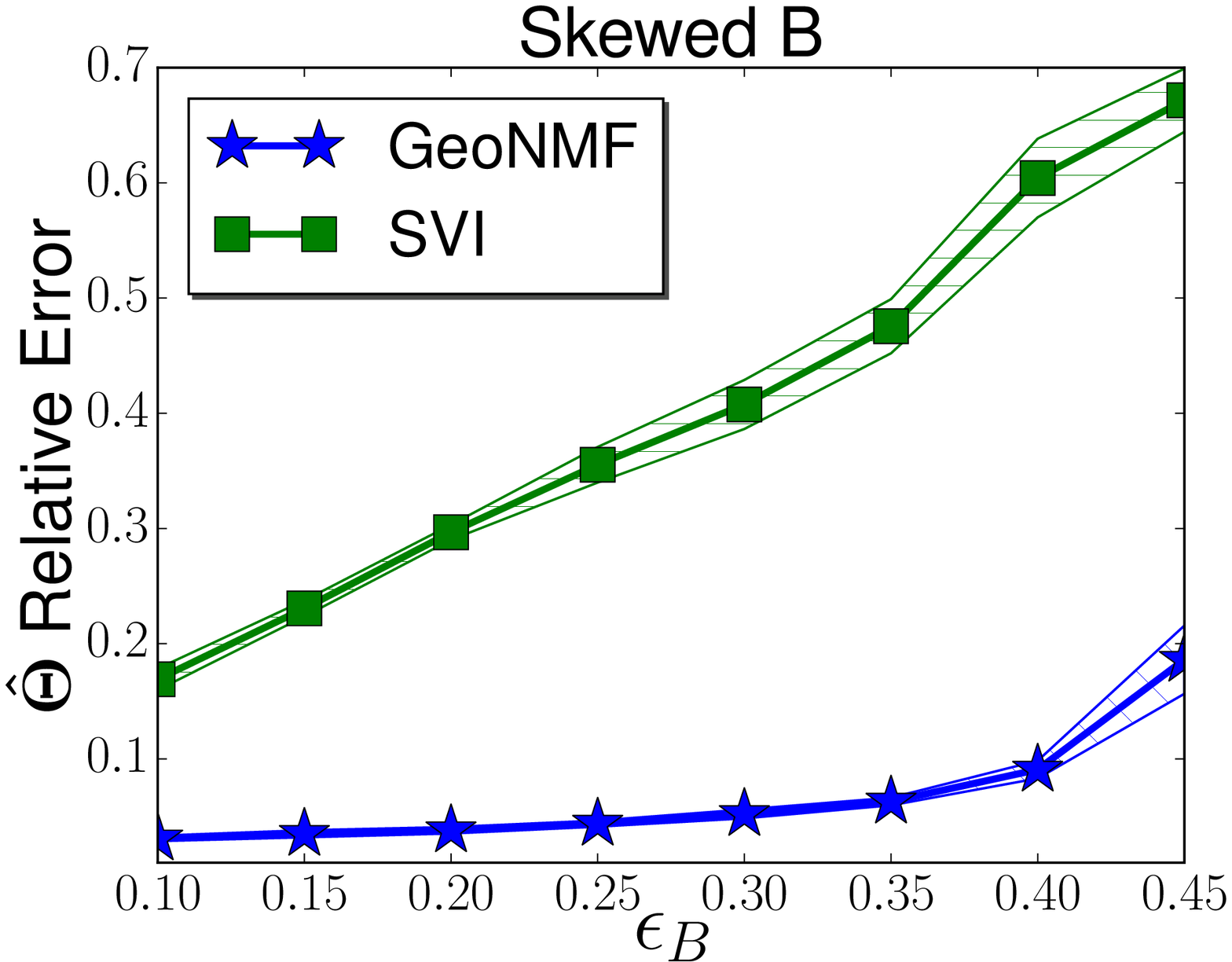}
		\caption{\tabular[t]{@{}l@{}}$\rho = 1, \bB=\text{diag}(\bbeta/\max_i\beta_i)$, \\ $\bbeta=(0.5 -\epsilon_B,0.5,0.5+\epsilon_B)$\endtabular}
		\label{skew_B_ratio_theta}
	\end{subfigure}%
	~
	\begin{subfigure}[b]{0.33\textwidth}
		\includegraphics[width=\textwidth]{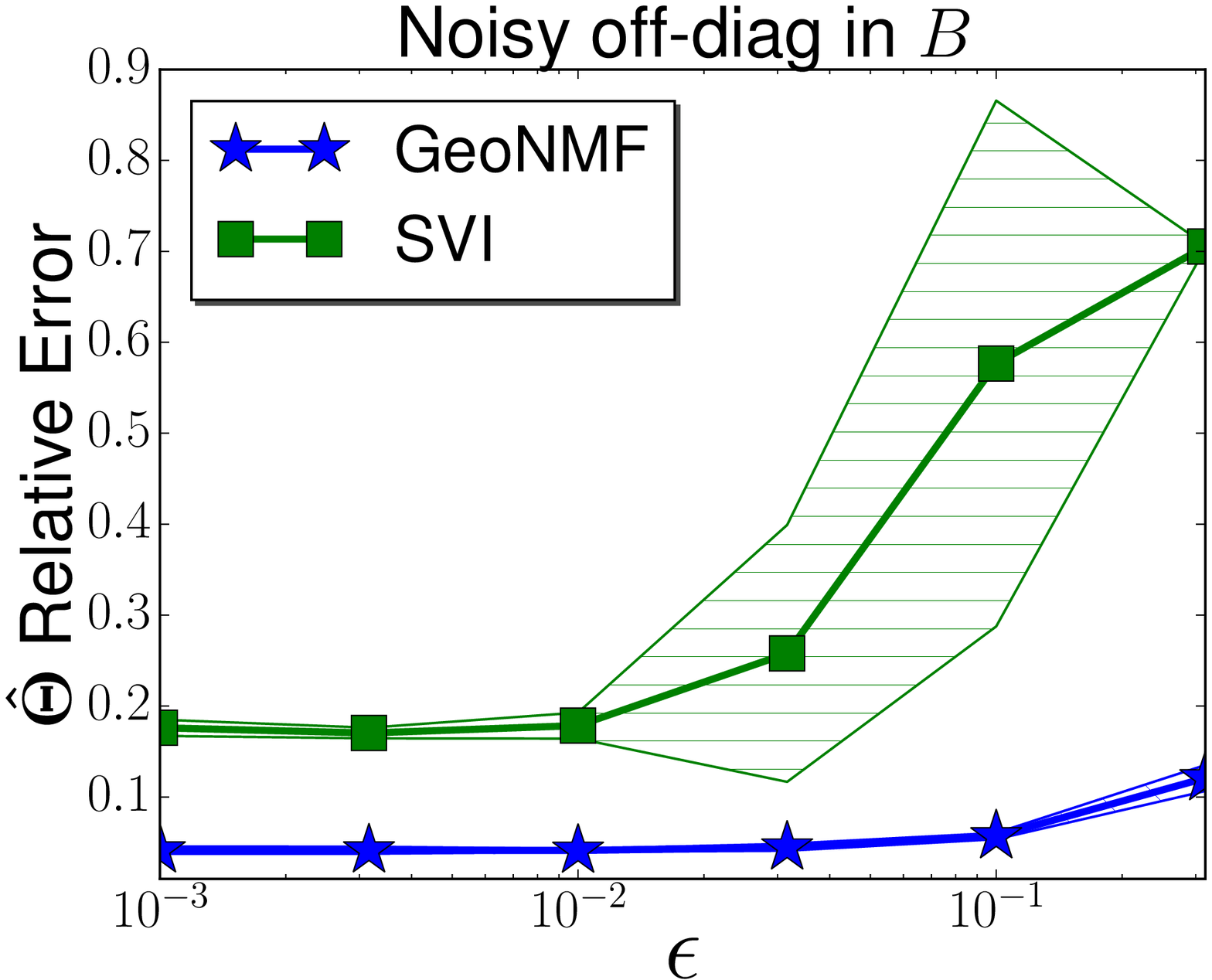}
		\caption{\tabular[t]{@{}l@{}}{$\bB=\text{diag}(\bbeta-\epsilon \cdot \bone_K)+\epsilon \cdot \bone_K\bone_K^T$}, \\ $\bbeta=(0.6,0.8,1), \rho=1$\endtabular}
		\label{noisy_B_ratio_theta}
	\end{subfigure}%
	~
	\begin{subfigure}[b]{0.33\textwidth}
		\includegraphics[width=\textwidth]{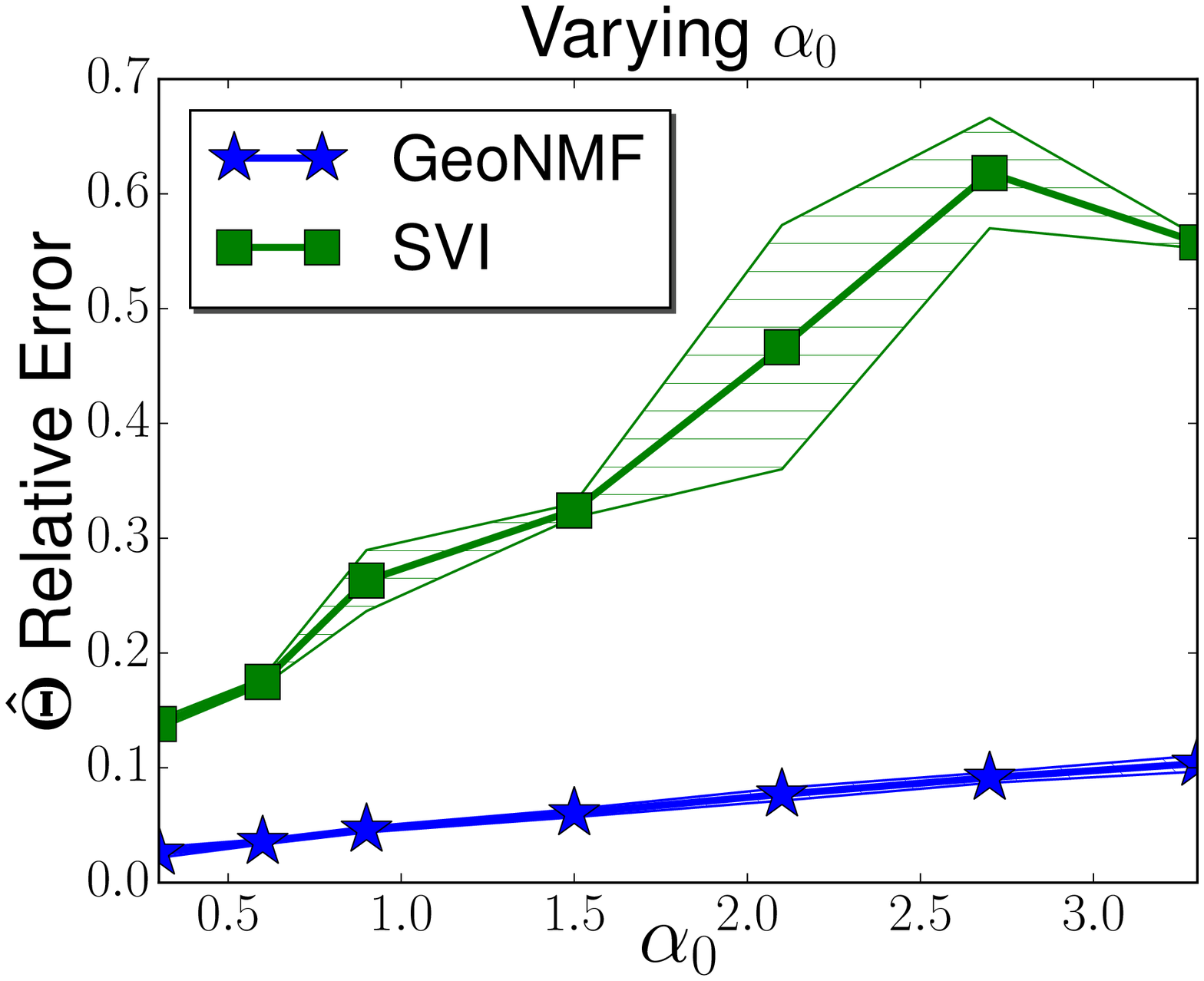}
		\caption{\tabular[t]{@{}l@{}}$\rho = 0.7, \bB=\text{diag}(\bbeta$), \\ $\bbeta=(0.4,0.7,1)$\endtabular}
		\label{changing_alpha_ratio_theta}
	\end{subfigure}%
	\\
	\begin{subfigure}[b]{0.33\textwidth}
		\includegraphics[width=\textwidth]{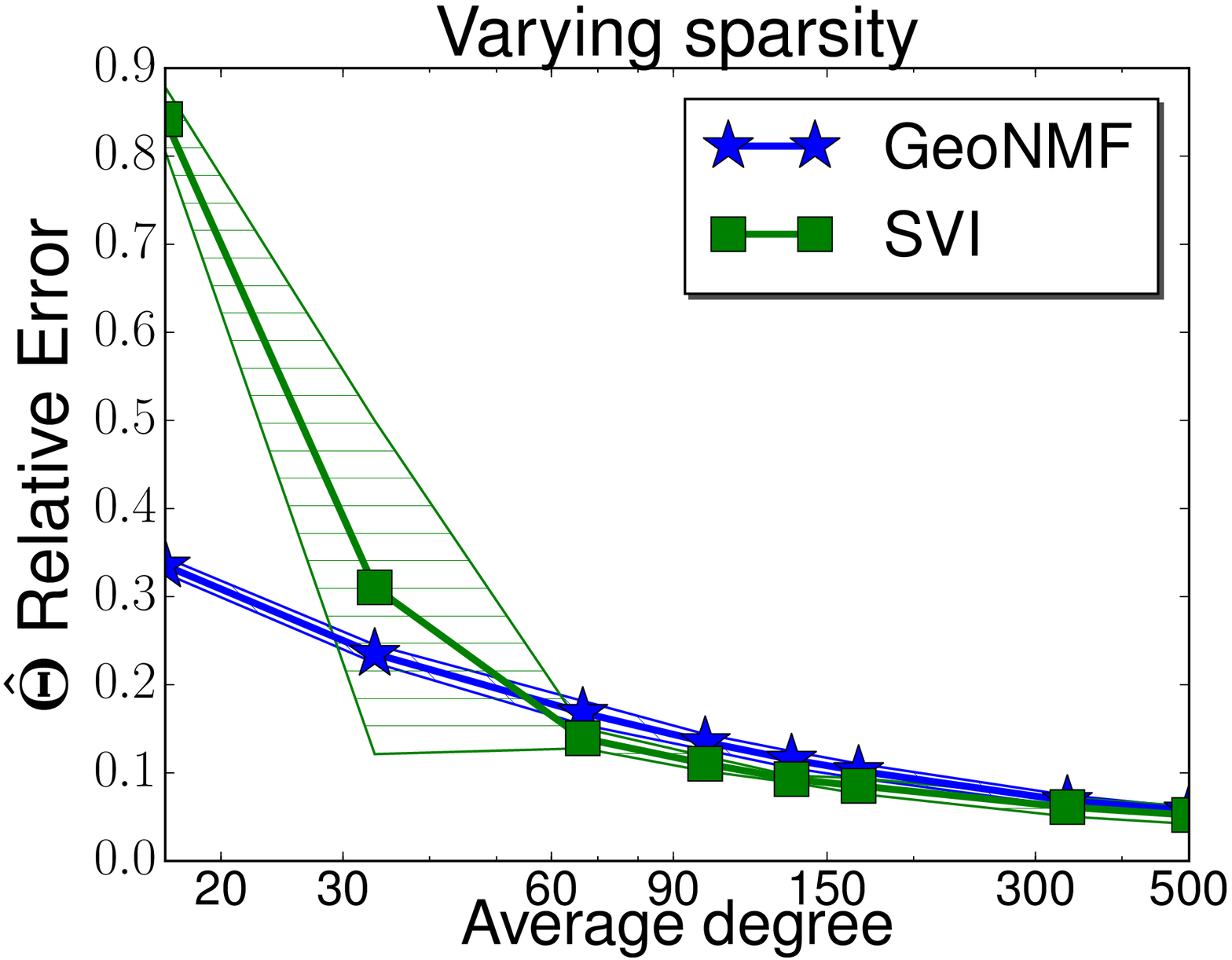}
		\caption{\tabular[t]{@{}l@{}}$\bB=\text{diag}(\bbeta)$, \\ $\bbeta=(1,1,1), \rho=1$\endtabular}
		\label{rho_sensi_ratio_theta}
	\end{subfigure}%
	~
	~
	\begin{subfigure}[b]{0.33\textwidth}
		\includegraphics[width=\textwidth]{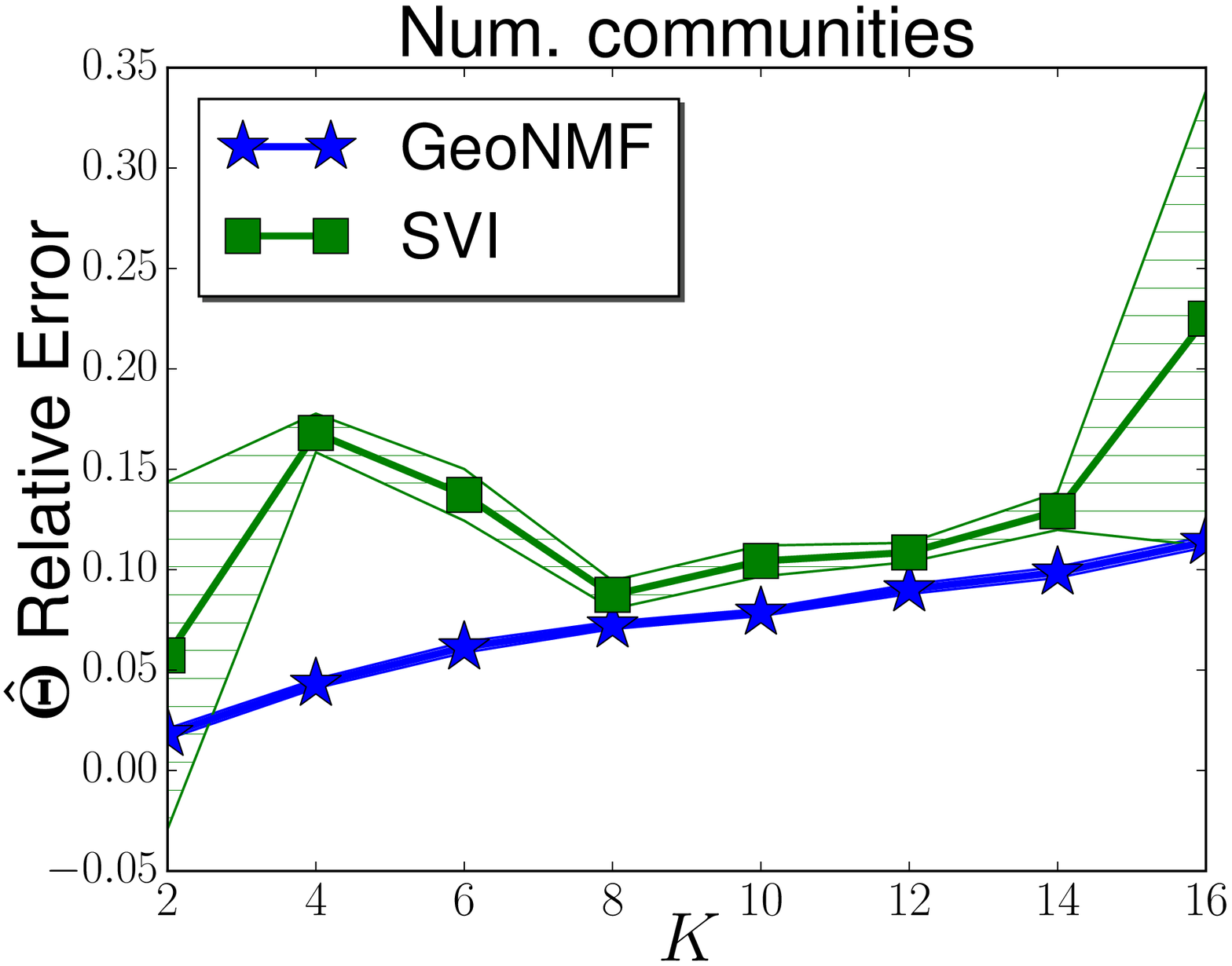}
		\caption{\tabular[t]{@{}l@{}}{$\bB=\text{diag}(0.35 \cdot \bone_K + 0.65 \cdot \br_K )$}, \\ $\br_K=\text{rand}(K,1), \rho=1$\endtabular}
		\label{K_sensi_ratio_theta}
	\end{subfigure}%
	~
	\begin{subfigure}[b]{0.33\textwidth}
		\includegraphics[width=\textwidth]{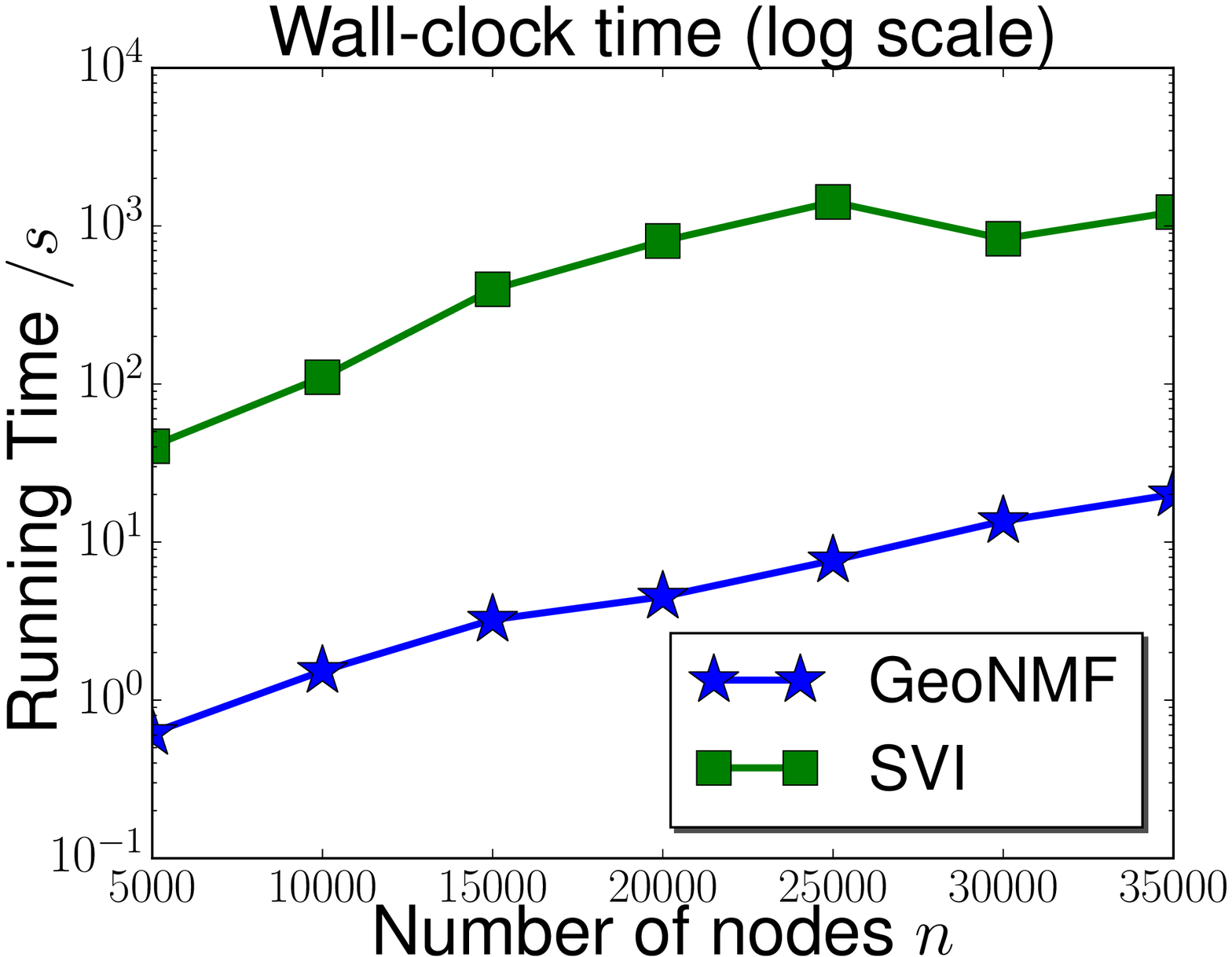}
		\caption{\tabular[t]{@{}l@{}}{$\bB=\text{diag}(0.5 \cdot \bone_K + 0.5 \cdot \br_K )$}, \\ $\br_K=\text{rand}(K,1), \rho=1$\endtabular}
		\label{wall_clock_time_N}
	\end{subfigure}%
	\caption{\subref{skew_B_ratio_theta}-\subref{K_sensi_ratio_theta} Simulation results for varying parameters. \subref{wall_clock_time_N} Running time.}\label{simu_exp}
\end{figure*}

\subsection{Simulated data}
Our simulations with the MMSB model are shown in 
\textit{Figure}~\ref{simu_exp}. We use $\alpha_i=\alpha_0/K$ for $i\in [K]$. While this leads to balanced clusters, note that the real datasets have clusters of different sizes and we will show that \OurAlgo works consistently well even for those networks (see Section~\ref{sec:realdata}). Unless otherwise stated, we set $n=5000$, $K=3$, and $\alpha_0=1$. 

\smallskip\noindent
{\bf  Evaluation Metric:}
Since we have ground truth $\bTheta$, we report the relative error  of the inferred MMSB parameters $\bTheta$ defined as  $\min\limits_{\bpi}\frac{\left\|\hat{\bTheta} - \bTheta\bpi\right\|_F}{\left\|\bTheta\right\|_F}$. Here the minimum is taken over all $K\times K$ permutation matrices. For each experiment, we report the average and the standard deviation over $10$ random samples. Since all the baseline algorithms only return $\hat{\bTheta}$, we only report relative error of that.

\smallskip\noindent
{\bf Sensitivity to skewness of the diagonal of $\bB$:}
Let $\bbeta=\diag(\bB)$. For skewed $\bbeta$, different communities have different strengths of connection. We use $\bbeta=(0.5-\epsilon_B,0.5,0.5+\epsilon_B)$ and plot the relative error against varying $\epsilon_B$.  \textit{Figure}~ \ref{skew_B_ratio_theta} shows that \OurAlgo has much smaller error than SVI, and is robust to $\bbeta$ over a wide range.
 
 \smallskip\noindent
 {\bf Sensitivity to off-diagonal element $\bB$:}
While SNMF is identifiable only for diagonal $\bB$, we still test \OurAlgo in the setting where all off-diagonal entries of $\bB$ have noise $\epsilon$. \textit{Figure}~\ref{noisy_B_ratio_theta} shows once again that \OurAlgo is robust to such noise, and is much more accurate than SVI.

\smallskip\noindent
{\bf Sensitivity to $\alpha_0$:} In \textit{Figure}~ \ref{changing_alpha_ratio_theta}, the relative error is plotted against increasing $\alpha_0$; larger values corresponding to larger overlap between communities. Accuracy degrades with increasing overlap, as expected, but \OurAlgo is much less affected than SVI.



\smallskip\noindent
{\bf Sensitivity to $\rho$:}
\textit{Figure}~\ref{rho_sensi_ratio_theta} shows relative error against increasing $\rho$. For dense networks, both \OurAlgo and SVI perform similarly, but the error of SVI increases drastically in the sparse regime (small $\rho$). 

\smallskip\noindent
{\bf Scalability:} 
\textit{Figure} \ref{wall_clock_time_N} shows the wall-clock time for networks of different sizes.  Both \OurAlgo and SVI scale linearly with the number of nodes, but SVI is about 100 times slower than \OurAlgo. 
%
%

\subsection{Real-world data}
\label{sec:realdata}

\smallskip\noindent
{\bf Datasets:} 
For real-data experiments, we use two kinds of networks:

\begin{itemize}
	\item {\em Ego networks}: We use the Facebook and Google Plus~(G-plus) ego networks, where each node can be part of multiple ``circles'' or ``communities.''
	\item {\em Co-authorship networks\footnote{Available at {\url{http://www.cs.utexas.edu/~xmao/coauthorship}}}}: We construct co-authorship networks from DBLP (each community is a group of conferences), and from the Microsoft Academic Graph (each community is denoted by a ``field of study'' (FOS) tag).
%
	Each author's $\btheta$ vector is constructed by normalizing the number of papers he/she has published in conferences in a subfield (or papers that have the FOS tag). 
\end{itemize}
\begin{table*}[!htbp]
	\centering
	\tablefontsize
	\caption{ Network statistics}
	\scalebox{0.94}{\begin{tabular}{c|c|c|c|c|c|c|c|c|c}
		\tablefontsize Dataset  & \tablefontsize Facebook                & \tablefontsize G-plus                   & \tablefontsize DBLP1    & \tablefontsize DBLP2    & \tablefontsize DBLP3    & \tablefontsize DBLP4    & \tablefontsize DBLP5 & \tablefontsize MAG1 & \tablefontsize MAG2 \\
		\hline
		\hline
		\tablefontsize $\#$ nodes $n$       & \tablefontsize 362.0 ($\pm$ 148.5) & \tablefontsize 656.2 ($\pm$ 422.0) & \tablefontsize 30,566    & \tablefontsize 16,817    & \tablefontsize 13,315    & \tablefontsize 25,481    & \tablefontsize  42,351     & \tablefontsize  142,788    & \tablefontsize    108,064  \\
		\hline
		\tablefontsize $\#$ communities $K$ & \tablefontsize 2.3 ($\pm$0.58)     & \tablefontsize 2.8 ($\pm$1.74)     & \tablefontsize 6        & \tablefontsize 3        & \tablefontsize 3        & \tablefontsize 3        & \tablefontsize   4    & \tablefontsize  3    & \tablefontsize  3    \\
		\hline
		\tablefontsize Average Degree       & \tablefontsize 56.8 ($\pm$32.3)   & \tablefontsize    103.4 ($\pm$74.9)                     & \tablefontsize 8.9 & \tablefontsize 7.6 & \tablefontsize 8.5 & \tablefontsize 5.2  & \tablefontsize   6.8    & \tablefontsize    12.4  & \tablefontsize  16.0    \\
		\hline
		Overlap $\%$     & \tablefontsize 19.3($\pm$29.8)       & \tablefontsize 26.5 ($\pm$32.4)     & \tablefontsize 18.2 & \tablefontsize 14.9 & \tablefontsize 21.1 & \tablefontsize 14.4 & \tablefontsize   18.5    & \tablefontsize   3.3   & \tablefontsize   
		3.8
	\end{tabular} }
	\label{table:net_stats}
\end{table*}
We preprocessed the networks by recursively removing isolated nodes, communities without any pure nodes, and nodes with no community assignments. For the ego networks we pick networks with at least 200 nodes and the average number of nodes per community ($n/K$) is at least 100, giving us 3 Facebook and and 40 G-plus networks. For the co-authorship networks, all communities have enough pure nodes, and after removing isolated nodes, the networks have more than 200 nodes and $n/K$ is larger than 100. The statistics of the networks (number of nodes, average degree, number of clusters, degree of overlap etc.) are shown in \textit{Table}~\ref{table:net_stats}.  The overlap ratio is the number of overlapping nodes divided by the number of nodes. The different networks have the following subfields:

\begin{itemize}
	\item DBLP1: Machine Learning, Theoretical Computer Science, Data Mining, Computer Vision, Artificial Intelligence, Natural Language Processing
	\item DBLP2: Networking and Communications, Systems, Information Theory
	\item DBLP3: Databases, Data Mining, World Web Wide
	\item DBLP4: Programming Languages, Software Engineering, Formal Methods
	\item DBLP5: Computer Architecture, Computer Hardware, Real-time and Embedded Systems, Computer-aided Design
	\item MAG1: Computational Biology and Bioinformatics, Organic Chemistry, Genetics
	\item MAG2: Machine Learning, Artificial Intelligence, Mathematical Optimization
\end{itemize}

\smallskip\noindent
{\bf  Evaluation Metric:}
For real data experiments, we construct $\bTheta$ as follows. For the ego-networks every node has a binary vector which indicates which circle (community) each node belongs to. We normalize this to construct $\bTheta$. For the DBLP and Microsoft Academic networks we construct a row of $\bTheta$ by normalizing the number of papers an author has in different conferences (ground truth communities). 
We present the averaged Spearman rank correlation coefficients (RC) between $\bTheta(:,a)$, $a\in[K]$ and $\hat{\bTheta}(:,\sigma(a))$, where $\sigma$ is a permutation of $[K]$. The formal definition is:
\bas{
	\mathrm{RC}_{\mathrm{avg}}(\hat{\bTheta},\bTheta)&=\frac{1}{K}\max_{\sigma}\sum_{i=1}^{K} \text{RC}(\hat{\bTheta}(:,i),\bTheta(:,\sigma(i))).
}

It is easy to see that 
$\mathrm{RC}_{\mathrm{avg}}(\hat{\bTheta},\bTheta)$ takes value from -1 to 1, and higher is better. Since SAAC returns binary assignment, we compute its $\text{RC}_\text{avg}$ against the binary ground truth.

\input{bar_chart}

\smallskip\noindent
{\bf Performance:} 
We report the $\mathrm{RC}_{\mathrm{avg}}$ score in \textit{Figure}~\ref{snap_rc} averaged over different Faceboook and G-plus networks; in \textit{Figure}~\ref{dblp_rc} for five DBLP networks, and in \textit{Figure}~\ref{mag_rc} for two MAG networks. We show the time in seconds (log-scale) in \textit{Figure}~\ref{snap_time} averaged over Facebook and G-plus networks; in \textit{Figure}~\ref{dblp_time} for DBLP networks and in \textit{Figure}~\ref{mag_time} for MAG networks. We averaged over the Facebook and G-plus networks because all the performances were similar.
\begin{itemize}
	\item For small networks like Facebook and G-plus, all algorithms perform equally well both in speed and accuracy, although \OurAlgo is fast even for relatively larger G-plus networks.
	\item DBLP is sparser, and as a result the overall rank correlation decreases. However, \OurAlgo consistently performs well . While for some networks, BSNMF and OCCAM  have comparable $\mathrm{RC}_{\mathrm{avg}}$, they are much slower than \OurAlgo. 
	\item MAG is larger (hundreds of thousands of nodes)  than DBLP. For these networks we could not even run BSNMF because of memory issues. Again, \OurAlgo performs consistently well while outperforming others in speed.
\end{itemize}

\smallskip\noindent
{\bf Estimating $K$:} While we assume that $K$ is known apriori, $K$ can be estimated using the USVT estimator~\cite{chatterjee2015usvt}. For the simulated graphs, when average degree is above ten, USVT estimates $K$ correctly. However for the real graphs, which are often sparse, it typically overestimates the true number of clusters.

	\section{Conclusions}
\label{sec:conc}
This paper explored the applicability of symmetric NMF algorithms for
inference of MMSB parameters. We showed broad conditions that ensure
identifiability of MMSB, and then proved sufficiency conditions for
the MMSB parameters to be uniquely determined by a general symmetric
NMF algorithm. Since general-purpose symmetric NMF algorithms do not
have optimality guarantees, we propose a new algorithm, called
\OurAlgo, that adapts symmetric NMF specifically to MMSB. \OurAlgo is
not only provably consistent, but also shows good accuracy
in simulated and real-world experiments, while also being among the fastest approaches.


	

%


%
%
%
%
%
%
\newpage
	{
		\bibliographystyle{plainnat}
		\bibliography{references}
	}
	
	\section*{Appendix}
	\appendix
	
	\section{Identifiability}
%

\begin{lem} \label{minc} (Lemma 1.1 of \cite{minc1988nonnegative})
	The inverse of a nonnegative matrix matrix $\bM$ is nonnegative {\sl if and only if } $\bM$ is a generalized permutation matrix.
\end{lem}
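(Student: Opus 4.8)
The statement is an ``if and only if,'' so I would prove the two implications separately, writing $\bC := \bM^{-1}$ for brevity. The reverse direction is immediate: if $\bM = \bD\bP$ is a generalized permutation matrix, with $\bD$ a positive diagonal matrix and $\bP$ a permutation matrix, then $\bM^{-1} = \bP^{T}\bD^{-1}$, which is nonnegative because $\bD^{-1}$ has positive diagonal entries and $\bP^{T}$ is a permutation matrix. So the entire content lies in the forward direction: assuming $\bM \geq 0$ and $\bC = \bM^{-1} \geq 0$, I want to show that $\bM$ has exactly one nonzero (hence positive) entry in each row and each column.

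The plan for the forward direction is to exploit the identity $\bM\bC = \bI$ together with nonnegativity. For any $i \neq j$ the vanishing of the off-diagonal entry gives
\bas{
\sum_{k} \bM_{ik}\bC_{kj} = 0,
}
and since every summand is nonnegative, each must vanish: $\bM_{ik}\bC_{kj} = 0$ for all $k$ whenever $i \neq j$. This ``disjoint support'' condition, forced jointly by nonnegativity of $\bM$ and of its inverse, is the engine of the proof.

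Next I would fix a column index $k$ of $\bM$ and argue that it has at most one nonzero entry. Suppose, for contradiction, that $\bM_{ik} > 0$ and $\bM_{i'k} > 0$ with $i \neq i'$. The vanishing condition forces $\bC_{kj} = 0$ for every $j \neq i$ (using $\bM_{ik} > 0$) and also $\bC_{kj} = 0$ for every $j \neq i'$ (using $\bM_{i'k} > 0$); since $i \neq i'$, together these annihilate the entire $k$-th row of $\bC$. But $\bC$ is invertible, so no row can be identically zero---a contradiction. Hence each column of $\bM$ has at most one nonzero entry, and since $\bM$ is invertible (no zero column) exactly one, which is positive by nonnegativity.

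The final step converts ``one positive entry per column'' into a genuine generalized permutation matrix, i.e.\ I must also rule out two distinct columns sharing the same nonzero row. Here I would again invoke invertibility: if columns $k$ and $k'$ both had their unique nonzero entry in row $i$, those two columns would each be a positive multiple of $\be_i$ and hence linearly dependent, contradicting $\rank(\bM) = n$. The main obstacle is really only bookkeeping---keeping the row and column roles straight while passing between $\bM$ and $\bC$---rather than any deep difficulty; the single genuine idea is the disjoint-support observation that nonnegativity of both $\bM$ and $\bM^{-1}$ forces every product $\bM_{ik}\bC_{kj}$ with $i\neq j$ to vanish.
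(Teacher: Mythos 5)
Your proof is correct. Note that the paper itself gives no proof of this lemma: it is imported verbatim as Lemma~1.1 of Minc's book on nonnegative matrices, so there is no in-paper argument to compare against. Your argument is the standard elementary one for this classical fact --- the key observation that $\sum_k \bM_{ik}\bC_{kj}=0$ for $i\neq j$ forces every individual product $\bM_{ik}\bC_{kj}$ to vanish, after which two positive entries in a single column of $\bM$ would annihilate an entire row of $\bC=\bM^{-1}$ and contradict invertibility --- and both directions, including the final step ruling out two columns supported on the same row, are handled correctly.
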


\begin{proof}[Proof of Theorem~\ref{mmsb_iden}]
	Suppose there are two parameter settings $(\bTheta^{(1)}$, $\bB^{(1)}$, $\rho^{(1)})$ and
	$(\bTheta^{(2)}$, $\bB^{(2)}$, $\rho^{(2)})$ that yield the same
	probability matrix:
	\bas{
		\bP=\rho^{(1)} \bTheta^{(1)} \bB^{(1)} {\bTheta^{(1)}}^T=\rho^{(2)} \bTheta^{(2)} \bB^{(2)} {\bTheta^{(2)}}^T.
	}
	Pick up pure node indices set $\I_1$ of
	$\bTheta^{(1)}$ such that $\bTheta_{\I_1}^{(1)}=\bI$, and denote $\bM
	= \bTheta_{\I_1}^{(2)}$.
	Similarly, pick up pure node indices set $\I_2$ of
	$\bTheta^{(2)}$ such that $\bTheta_{\I_2}^{(2)}=\bI$,
	and let $\bW = \bTheta_{\I_2}^{(1)}$.
	
	Then
	\bas{
		\rho^{(1)}\bB^{(1)} = \rho^{(2)}\bM \bB^{(2)} \bM^T \ \text{ and }\ 
		\rho^{(1)} \bW \bB^{(1)} \bW^T = \rho^{(2)} \bB^{(2)}.  
	}
	Denote $\bT=\bM \bW$, then
	\begin{equation} \label{trans_eq}
		\bB^{(1)} = \frac{1}{\rho^{(1)}} \bM \rho^{(1)} \bW \bB^{(1)} \bW^T \bM^T = \bT \bB^{(1)} \bT^T.
	\end{equation}
	Note that $\bM \cdot \bone = \bTheta_{\I_1}^{(2)} \cdot \bone =\bone$ and $\bW \cdot \bone= \bTheta_{\I_2}^{(1)} \cdot \bone =
	\bone$, so $\bT \cdot \bone = \bM \bW \cdot \bone = \bone$. We can consider $\bT$ as a transition matrix of a Markov
	chain, whose states are the nodes of the graph.
	Keep applying equation (\ref{trans_eq}) to its RHS, we get 
	\bas{
		\bB^{(1)} = \bT^k \bB^{(1)} {\bT^k}^T,
	}
	which implies $\bB^{(1)} = \bT_{\infty} \bB^{(1)} \bT_{\infty}^T$, where $\bT_{\infty}=\lim\limits_{k\rightarrow \infty} \bT^k$.
	
	Given that $\bB^{(1)}$ has full rank $K$, we must have $\bT_{\infty}$ has full rank. Now we prove that stationary point of the Markov chain, $\bT_{\infty}$, must be identity matrix.
	
	The nodes of a finite-size Markov chain can be split into a finite number of communication classes, and possibly some
	transient nodes.
	\begin{enumerate}
	\item If a communication class has at least two nodes and is aperiodic, then the rows corresponding to those nodes
	in $\bT_{\infty}$ are the stationary distribution for that class. Hence, $\bT_{\infty}$ has identical rows, so it
	cannot be full rank.
	\item The probability of a Markov chain ending in a transient node goes to zero as the number of iterations $k$
	grows, so the column of $\bT_{\infty}$ corresponding to any transient node is identically zero. Again, this means
	that $\bT_{\infty}$ cannot be full rank.
	\end{enumerate}
	Hence, the only configuration in which $\bT_{\infty}$ has full rank is when it contains $K$ communication classes,
	each with one node. This implies that $\bT_{\infty}=\bI$, and hence $\bT=\bI$.
	Note that if the communication classes are periodic, we can consider $\bT^t$ where $t$ is the product of the periods
	of all the classes; the matrix $\bT^t$ is now aperiodic for all the communication classes, and the above argument
	still applies to $\bT_{\infty}=\lim\limits_{k\rightarrow \infty} ({\bT^t})^k$.
		
	As $\bI=\bT=\bM\bW$, $\bM$ and $\bW$ have full rank, then $\bM^{-1}=\bW$, which is the case that a nonnegative matrix $\bM$ has nonnegative inverse $\bW$, using Lemma \ref{minc}, we know that $\bM$ is a generalized permutation matrix, and note that each row of $\bM$ sums to 1, the scale goes away and thus $\bM$ is a permutation matrix, which implies $\bW$ is also a permutation matrix. 
	As largest element of ${\bB^{(1)}}$ and ${\bB^{(2)}}$ are equals as 1, we should have $\rho^{(1)} = \rho^{(2)}$ and thus ${\bB^{(1)}}=\bM {\bB^{(2)}} \bM^T$.
	
	Also since we have
	\bas{
		\rho^{(1)} \bB^{(1)} {\bTheta^{(1)}}^T
		&=\rho^{(1)} \bTheta_{\I_1}^{(1)} \bB^{(1)} {\bTheta^{(1)}}^T
		=\rho^{(2)} \bTheta_{\I_1}^{(2)} \bB^{(2)} {\bTheta^{(2)}}^T
		=\rho^{(2)} \bM \bB^{(2)} {\bTheta^{(2)}}^T \\
		& = \rho^{(2)} \bM \bB^{(2)} \bM^T \bM {\bTheta^{(2)}}^T 
		= \rho^{(1)} \bB^{(1)} \bM {\bTheta^{(2)}}^T,
	}
	left multiply $\left( \rho^{(1)} \bB^{(1)}\right)^{-1}$ on both sides, we have ${\bTheta^{(1)}}={\bTheta^{(2)}}\bM^T$.
	
	Thus we have shown that MMSB is identifiable up to a permutation.
\end{proof}

	\section{Uniqueness of SNMF for MMSB networks}
	\begin{lem}[\citet{huang2014non}]
	\label{symNMFiff}
	If $\text{rank}(\bP)=K$, the Symmetric NMF $\bP=\bW\bW^T$ is unique if and only if the non-negative orthant is the only self-dual simplicial cone $\mathcal{A}$ with $K$ extreme rays that satisfies $\cone({\bW}^T) \subseteq \A=\A^*$, where $\mathcal{A}^*$ is the dual cone of $\mathcal{A}$, defined as $\mathcal{A}^*=\{ \by | \bx^T\by \geq 0, \forall \bx \in \mathcal{A} \}$.
\end{lem}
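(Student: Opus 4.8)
The plan is to set up a bijection between the nonnegative symmetric factorizations of $\bP$, taken up to column permutation, and the self-dual simplicial cones with $K$ extreme rays that contain $\cone(\bW^T)$, under which the nonnegative orthant $\R_+^K$ corresponds exactly to the factorization $\bW$ itself. Essential uniqueness then holds precisely when the orthant is the only such cone.

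First I would reduce an arbitrary factorization to an orthogonal transformation of $\bW$. Since $\rank(\bP)=K$, the factor $\bW$ has full column rank, and any $\tilde{\bW}$ with $\bP=\tilde{\bW}\tilde{\bW}^T$ has the same column space and the same Gram matrix, whence $\tilde{\bW}=\bW\bQ$ for some orthogonal $\bQ\in\R^{K\times K}$ (the argument already used in the main text via Lemma~A.1 of~\cite{tang2013universally}). Writing $\bw_i^T$ for the rows of $\bW$, the constraint $\tilde{\bW}\geq\bzero$ becomes $\bQ^T\bw_i\geq\bzero$ for all $i$, i.e. $\bQ^T\cone(\bW^T)\subseteq\R_+^K$, equivalently $\cone(\bW^T)\subseteq\bQ\R_+^K=:\A$.

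Next I would verify that $\A$ has the three properties in the statement, and conversely that every such cone arises this way. The cone $\A=\bQ\R_+^K$ is simplicial with $K$ extreme rays (the columns of $\bQ$), and it is self-dual since an orthogonal map preserves inner products: $\A^*=(\bQ\R_+^K)^*=\bQ(\R_+^K)^*=\bQ\R_+^K=\A$, using self-duality of the orthant. For the converse I would prove the geometric fact that any self-dual simplicial cone $\A$ with $K$ extreme rays equals $\bQ\R_+^K$ for some orthogonal $\bQ$: writing $\A=\cone(A)$ with $A$ a full-rank generator matrix, the dual is $\cone(A^{-T})$, so self-duality forces the columns of $A$ and of $A^{-T}$ to agree as rays; this makes $A^TA$ a symmetric positive-definite monomial matrix, which must be diagonal, so the generators are orthogonal and, after normalization, yield an orthogonal $\bQ$. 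The containment $\cone(\bW^T)\subseteq\A$ then reproduces exactly the nonnegativity condition, giving the bijection.

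Finally I would close the equivalence. Two orthogonal matrices give the same cone iff they differ by a column permutation, which is precisely the relabeling in the definition of essential uniqueness; in particular $\bQ\R_+^K=\R_+^K$ forces $\bQ$ to stabilize the orthant and hence to be a permutation matrix, so the orthant corresponds to the class $\{\bW\bZ:\bZ\text{ a permutation}\}$. Thus the SNMF is essentially unique iff no admissible self-dual simplicial cone other than $\R_+^K$ contains $\cone(\bW^T)$, as claimed. I expect the main obstacle to be the converse geometric characterization---showing that self-duality together with simpliciality forces an orthogonal generator matrix---together with the bookkeeping that identifies the permutation freedom in the cone description with the permutation ambiguity of the factorization.
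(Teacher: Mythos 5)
The paper does not prove this lemma itself---it is imported verbatim from \citet{huang2014non}---so there is no in-paper argument to compare against; your proof is a correct, self-contained derivation that follows essentially the same route as the cited source. All the key steps check out: the reduction $\tilde{\bW}=\bW\bQ$ via full column rank, the identification of nonnegativity with $\cone(\bW^T)\subseteq \bQ\R_+^K$, the characterization of self-dual simplicial cones as orthogonal images of the orthant (via $\cone(A)^*=\cone(A^{-T})$ and the observation that a symmetric positive-definite monomial matrix must be diagonal), and the matching of the permutation freedom on both sides.
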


\begin{proof} [Proof of Theorem~\ref{unique}]
	When $\bB$ is diagonal, it has a square root  $\bC=\bB^{1/2}$, where $\bC$ is also a positive diagonal   matrix.   It is easy to see that $\cone(\bC)$ is the non-negative   orthant $\R_+^K$, so we have  
	\bas{
		\cone(\bW^T)=\cone(\bC^T \bTheta^T)=\cone(\bC^T) =\cone(\bC)=\R_+^K={\R_+^K}^*.
	}
	The second equality follows from the fact that $\bTheta$ contains all   pure nodes, and other nodes are convex combinations of these pure   nodes. The fourth equality is due to the diagonal form of $\bC$.   
	
	To see that this is unique, suppose there is another   self-dual simplicial cone satisfying $\cone({\bW}^T) \subseteq   \A=\A^*$. Then we have $\R_+^K \subseteq \A$ and $\A=\A^*\subseteq   {\left(\R_+^K\right)}^* = \R_+^K$, which implies $\A=\R_+^K$. 
	
	Hence, by Lemma~\ref{symNMFiff}, an identifiable MMSB model with a diagonal $\bB$ is sufficient for the Symmetric NMF solution to be unique and correct.
\end{proof}
	
	\section{Concentration of the Laplacian}
%
%

 We will use $X=c(1\pm \epsilon)$ to denote $X\in c[1-\epsilon,1+\epsilon]$ for ease of notation from now onwards.
 
\begin{lem} \label{lem:lln}
	For $\bTheta \in \R^{n \times K}$, where each row $\btheta_i \sim
	\dir(\balpha)$, $\forall j \in [K]$, 
	\bas{
		\sum_{i=1}^{n} \theta_{ij} =
		n\frac{\alpha_j}{\alpha_0}\bbb{1 \pm O_P\bbb{\sqrt{\frac{\alpha_0}{\alpha_j}\frac{\log
						n}{n}}}}
	}
	with probability larger than $1-1/n^3$.
	\begin{proof}
		By using Chernoff bound
		\bas{
			\uP\left(\left| \sum_{i=1}^{n} \theta_{ij}-n\frac{\alpha_j}{\alpha_0} \right|>\epsilon n\frac{\alpha_j}{\alpha_0}\right)\leq \exp\left(-\frac{\epsilon^2 n\frac{\alpha_j}{\alpha_0}}{3}\right),
		}
		so by setting $\epsilon=O_P\bbb{3\sqrt{\frac{{\log n}}{n{\alpha_j}/{\alpha_0}}}}$, 
		${
			\left| \sum_{i=1}^{n} \theta_{ij}-n\frac{\alpha_j}{\alpha_0} \right|\leq 3\sqrt{{\frac{\alpha_j}{\alpha_0}n\log n}},
		}$
		with probability larger than $1-1/n^3$.
		
		That is 
		\bas{
			\sum_{i=1}^{n} \theta_{ij} 
			&= n\frac{\alpha_j}{\alpha_0} \pm
			O_P\bbb{\sqrt{{\frac{\alpha_j}{\alpha_0}n\log n}}}
			=
			n\frac{\alpha_j}{\alpha_0}\bbb{1 \pm O_P\bbb{\sqrt{\frac{\alpha_0}{\alpha_j}\frac{\log
							n}{n}}}}.
		}
	\end{proof}
\end{lem}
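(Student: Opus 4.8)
The plan is to view $\sum_{i=1}^n \theta_{ij}$ as a sum of $n$ independent random variables taking values in $[0,1]$ and to control it with a multiplicative Chernoff bound. First I would record the distributional facts that are needed. Since $\btheta_i \sim \dir(\balpha)$, the marginal distribution of each coordinate $\theta_{ij}$ is $\mathrm{Beta}(\alpha_j, \alpha_0 - \alpha_j)$; in particular $\theta_{ij} \in [0,1]$ almost surely and $\uE[\theta_{ij}] = \alpha_j/\alpha_0$. Because the rows $\btheta_1, \dots, \btheta_n$ are independent draws, for each fixed $j$ the summands $\theta_{1j}, \dots, \theta_{nj}$ are independent, so the sum has mean
\[
	\mu_S := \uE\bbb{\sum_{i=1}^n \theta_{ij}} = n\frac{\alpha_j}{\alpha_0}.
\]

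Next I would apply the multiplicative Chernoff inequality for independent $[0,1]$-valued summands: for any $\epsilon \in (0,1)$,
\[
	\uP\bbb{\left|\sum_{i=1}^n \theta_{ij} - \mu_S\right| > \epsilon\,\mu_S} \le 2\exp\bbb{-\frac{\epsilon^2 \mu_S}{3}}.
\]
Substituting $\mu_S = n\alpha_j/\alpha_0$ and choosing $\epsilon = c\sqrt{\frac{\alpha_0}{\alpha_j}\frac{\log n}{n}}$ for a suitable absolute constant $c$ makes the exponent at least $3\log n$, so the failure probability is at most $2/n^3 = O(1/n^3)$. This choice of $\epsilon$ is exactly the multiplicative error in the statement, and rearranging the two-sided deviation bound into multiplicative form gives, on the complementary (good) event,
\[
	\sum_{i=1}^n \theta_{ij} = n\frac{\alpha_j}{\alpha_0}\bbb{1 \pm O_P\bbb{\sqrt{\frac{\alpha_0}{\alpha_j}\frac{\log n}{n}}}}.
\]

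The argument is essentially routine, so I do not expect a serious obstacle; the only points that require a little care are bookkeeping. The first is to confirm that the multiplicative Chernoff bound is valid for $[0,1]$-valued (rather than merely Bernoulli) summands, which it is, since the standard moment-generating-function proof uses only boundedness and independence and not integrality of the values. The second is to verify that the chosen $\epsilon$ lies in $(0,1)$, which holds as long as $n\alpha_j/\alpha_0 = \Omega(\log n)$; in the regime of interest, where $\alpha_a = \alpha_0/K$, this reduces to $n/K = \Omega(\log n)$, comfortably satisfied under the paper's assumptions.
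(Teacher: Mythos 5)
Your proposal is correct and follows essentially the same route as the paper: both apply the multiplicative Chernoff bound for independent $[0,1]$-valued summands with mean $n\alpha_j/\alpha_0$ and choose $\epsilon \asymp \sqrt{\tfrac{\alpha_0}{\alpha_j}\tfrac{\log n}{n}}$ to drive the failure probability to $O(1/n^3)$. Your additional checks (the Beta marginal, validity of Chernoff for non-integer bounded summands, and $\epsilon<1$) are sound bookkeeping that the paper leaves implicit.
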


\begin{lem} \label{lei} (Theorem 5.2 of \cite{lei2015consistency}) Let
	$\bA$ be the adjacency matrix of a random graph on $n$ nodes in which edges occur
	independently. Set $\uE[\bA] = \bP$ and assume that $n\max_{i,j} \bP_{ij} \leq d$ for
	$d \geq c_0 \log n$ and $c_0 > 0$. Then, for any $r > 0$ there exists a constant $C = C(r, c_0)$ such that:
	\bas{
		\uP(\left\|\bA-\bP\right\|\leq C\sqrt{d})\geq 1-n^{-r}.
	}
\end{lem}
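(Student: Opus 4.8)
The plan is to prove the sharp operator-norm bound via an $\epsilon$-net discretization combined with the combinatorial discrepancy argument of Feige--Ofek, which is the route taken in \citet{lei2015consistency}. Since $\bA-\bP$ is symmetric, $\|\bA-\bP\| = \sup_{u\in S^{n-1}}|u^T(\bA-\bP)u|$, and a fixed $\delta$-net $\mathcal{N}$ of the unit sphere (with $|\mathcal{N}|\le (1+2/\delta)^n$) reduces this to controlling the bilinear form $u^T(\bA-\bP)v=\sum_{i,j}(\bA_{ij}-\bP_{ij})u_iv_j$ uniformly over pairs $u,v\in\mathcal{N}$, at the cost of a fixed multiplicative constant. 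The whole difficulty is then to show that this bilinear form is $O(\sqrt{d})$ with probability so close to $1$ that it survives a union bound over the $|\mathcal{N}|^2=e^{O(n)}$ pairs.

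First I would split each bilinear sum into a \emph{light} part (pairs with $|u_iv_j|\le \sqrt{d}/n$) and a \emph{heavy} part (the rest). For the light part, the summands are independent across the upper triangle, bounded in magnitude by the threshold, and have total variance at most $\max_{ij}\bP_{ij}\sum_{ij}u_i^2v_j^2\le d/n$, since $\sum_i u_i^2=\sum_j v_j^2=1$ and $n\max_{ij}\bP_{ij}\le d$. Bernstein's inequality then yields a tail at scale $\sqrt{d}$ whose exponent is linear in $n$ (using $d\ge c_0\log n$), so choosing the constant large enough controls the light part over the entire net with probability $1-e^{-cn}\ge 1-n^{-r}$.

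The main obstacle is the heavy part, and it is here that a naive net argument fails: Bernstein applied to all pairs would only yield $\sqrt{d\log n}$, and the extra $\sqrt{\log n}$ must be removed to obtain $C\sqrt{d}$. The key observation is that because $\sum_{ij}u_i^2v_j^2=1$, the heavy pairs carry only a bounded amount of total weight, so their contribution is governed not by independent concentration but by the global edge structure of the graph. I would establish, as a high-probability structural (discrepancy) property of $\bA$, that for all vertex subsets $I,J$ the number of edges between $I$ and $J$ does not exceed its expectation by more than an $O(\sqrt{d\,|I|\,|J|})$-type bound, with a separate correction handling the very sparse/small-set regime. This is exactly the Feige--Ofek bounded-discrepancy lemma, proved by a union bound over subset pairs together with a counting argument that separates the cases where the expected number of edges is large versus small. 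Feeding this structural bound into the heavy part, using the constraint on the heavy weights, yields an $O(\sqrt{d})$ contribution as well.

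Combining the light and heavy bounds gives $\|\bA-\bP\|\le C\sqrt{d}$ with probability at least $1-n^{-r}$, where $C=C(r,c_0)$ absorbs the net constant, the Bernstein constant, and the discrepancy constant, and the hypothesis $d\ge c_0\log n$ is used throughout to make each union bound succeed. As a simpler but weaker alternative I could instead write $\bA-\bP=\sum_{i\le j}\bX_{ij}$ as a sum of independent mean-zero symmetric matrices and apply the matrix Bernstein inequality with variance proxy $\sigma^2\le d$; this route is short but only delivers $\sqrt{d\log n}$, so it does not reach the sharp $\sqrt{d}$ scaling that the statement asserts.
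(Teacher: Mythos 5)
This lemma is not proved in the paper at all: it is imported verbatim as Theorem 5.2 of \citet{lei2015consistency}, so there is no in-paper argument to compare against. Your outline correctly reproduces the structure of the actual proof in that reference (the Friedman--Kahn--Szemer\'edi/Feige--Ofek scheme: a net over the sphere, the split into light pairs handled by Bernstein with variance proxy $d/n$, and heavy pairs handled by a union-bound discrepancy property of edge counts between vertex subsets, which is precisely where the na\"ive $\sqrt{d\log n}$ is sharpened to $\sqrt{d}$), so it is the same approach as the cited source rather than a new one.
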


\begin{fact} \label{mc}
	If $\bM$ is rank $k$, then $\|\bM\|_F^2\leq k \|\bM\|^2$.
\end{fact}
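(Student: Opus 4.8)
The plan is to reduce everything to singular values. First I would write the singular value decomposition $\bM = \bU\bSigma\bV^T$ with $\bU,\bV$ orthogonal and $\bSigma = \diag(\sigma_1,\sigma_2,\ldots)$, where $\sigma_1\geq\sigma_2\geq\cdots\geq 0$. This is exactly where the rank hypothesis enters: since $\rank(\bM)=k$, at most $k$ of the singular values are nonzero, so every term past the $k$-th vanishes.

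Next I would invoke the two standard characterizations of the norms in terms of these singular values. The operator norm equals the top singular value, $\|\bM\|=\sigma_1$, while by orthogonal invariance of the Frobenius norm we have $\|\bM\|_F^2=\|\bSigma\|_F^2=\sum_i\sigma_i^2$. Both facts also follow directly from the spectral decomposition of $\bM^T\bM$, whose eigenvalues are the $\sigma_i^2$: indeed $\|\bM\|_F^2=\tr(\bM^T\bM)$ is the sum of these eigenvalues, and $\|\bM\|^2$ is the largest of them.

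Combining the two and bounding each nonzero singular value by the largest gives the result in a single line,
\[
\|\bM\|_F^2=\sum_{i=1}^{k}\sigma_i^2\leq\sum_{i=1}^{k}\sigma_1^2=k\,\sigma_1^2=k\,\|\bM\|^2 .
\]
There is essentially no obstacle here; the only point worth stating explicitly is that the rank assumption is precisely what truncates the sum to $k$ terms, turning the crude bound (number of nonzero singular values times $\|\bM\|^2$) into the claimed inequality.
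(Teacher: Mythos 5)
Your proof is correct. The paper states this as a bare Fact with no proof at all, and your argument — expanding $\|\bM\|_F^2$ as the sum of the squared singular values, using the rank hypothesis to truncate that sum to $k$ terms, and bounding each term by $\sigma_1^2 = \|\bM\|^2$ — is exactly the standard one-line justification the authors implicitly rely on.
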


\begin{lem} \label{variant-davis}
	(Variant of Davis-Kahan \cite{yu2015useful}). Let $\bP$, $\hat{\bA} \in \R^{n\times n}$ be symmetric, with eigenvalues $\lambda_1 \geq \cdots \geq \lambda_n$ and $\hat{\lambda}_1 \geq \cdots \geq \hat{\lambda}_n$ respectively. Fix $1 \leq r \leq s \leq n$, and assume that $\min(\lambda_{r-1}-\lambda_{r},\lambda_{s}-\lambda_{s+1})>0$, where we define $\lambda_0 =  \infty$ and $\lambda_{n+1} =-\infty$.
	Let $d = s-r +1$, and let $\bV = (\bv_r, \bv_{r+1}, \cdots, \bv_s)\in \R^{n \times d}$ and $\hat{V} = (\hat{\bv}_r, \hat{\bv}_{r+1}, \cdots,\hat{\bv}_s)\in \R^{n \times d}$ have orthonormal columns satisfying $\bP \bv_j = \lambda_j \bv_j$ and $\hat{\bA} \hat{\bv}_j = \hat{\lambda}_j \hat{\bv}_j$ for $j = r,r + 1, \cdots,s$. Then
	there exists an orthogonal matrix $\hat{\bO} \in \R^{d \times d}$ such that
	\bas{
		\left\| \hat{\bV}-\bV\hat{\bO} \right\|_F \leq \frac{2^{3/2}\min\left(d^{1/2}\left\|\hat{\bA} - \bP\right\|, \left\|\hat{\bA}-\bP\right\|_F\right)}{\min(\lambda_{r-1}-\lambda_{r},\lambda_{s}-\lambda_{s+1})}
	}
\end{lem}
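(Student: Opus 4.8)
The plan is to split the statement into two independent pieces: a subspace perturbation (sin-$\Theta$) bound for the column spaces of $\bV$ and $\hat{\bV}$, and a purely geometric conversion of that subspace distance into the Procrustes-aligned distance $\|\hat{\bV}-\bV\hat{\bO}\|_F$. Let $\Theta$ be the $d\times d$ diagonal matrix of principal angles between $\Span(\bV)$ and $\Span(\hat{\bV})$, so that the singular values of $\bV^T\hat{\bV}$ are $\cos\theta_1,\dots,\cos\theta_d$ while those of $\bV_\perp^T\hat{\bV}$ are $\sin\theta_1,\dots,\sin\theta_d$; here $\bV_\perp$ is an orthonormal basis of the orthogonal complement of $\Span(\bV)$, and $\|\sin\Theta\|_F=\|\bV_\perp^T\hat{\bV}\|_F$ measures the separation of the two subspaces.

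First I would establish the sin-$\Theta$ bound. Because $\Span(\bV)$ carries the eigenvalues $\mathrm{diag}(\lambda_r,\dots,\lambda_s)$ of $\bP$ and $\bV_\perp$ carries the complementary eigenvalues $\Lambda_2$, while $\hat{\bA}\hat{\bV}=\hat{\bV}\hat\Lambda_1$ with $\hat\Lambda_1=\mathrm{diag}(\hat\lambda_r,\dots,\hat\lambda_s)$, setting $\bZ=\bV_\perp^T\hat{\bV}$ and subtracting $\bV_\perp^T\bP\hat{\bV}=\Lambda_2\bZ$ from $\bV_\perp^T\hat{\bA}\hat{\bV}=\bZ\hat\Lambda_1$ gives the Sylvester identity $\bZ\hat\Lambda_1-\Lambda_2\bZ=\bV_\perp^T(\hat{\bA}-\bP)\hat{\bV}$. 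Reading this entrywise yields $\|\sin\Theta\|_F=\|\bZ\|_F\le\|(\hat{\bA}-\bP)\hat{\bV}\|_F/\tilde\delta$, where $\tilde\delta=\min_{j,k}|\hat\lambda_{(j)}-\lambda_{(k)}|$ is the separation between the target eigenvalues of $\hat{\bA}$ and the complementary eigenvalues of $\bP$. Orthonormality of the columns of $\hat{\bV}$ then gives $\|(\hat{\bA}-\bP)\hat{\bV}\|_F\le\min\bbb{d^{1/2}\|\hat{\bA}-\bP\|,\|\hat{\bA}-\bP\|_F}$ by sub-multiplicativity (the first term follows from Fact~\ref{mc}, since $(\hat{\bA}-\bP)\hat{\bV}$ has rank at most $d$, and the second from $\|\hat{\bV}\|=1$). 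Finally I would replace the mixed gap $\tilde\delta$ by the $\bP$-only gap $\delta=\min(\lambda_{r-1}-\lambda_r,\lambda_s-\lambda_{s+1})$: in the main regime $\|\hat{\bA}-\bP\|\le\delta/2$, Weyl's inequality forces every $\hat\lambda_{(j)}$ to lie within $\delta/2$ of $[\lambda_s,\lambda_r]$, so $\tilde\delta\ge\delta/2$ and $\|\sin\Theta\|_F\le\frac{2}{\delta}\min\bbb{d^{1/2}\|\hat{\bA}-\bP\|,\|\hat{\bA}-\bP\|_F}$, while the complementary regime is absorbed by the trivial ceiling $\|\sin\Theta\|_F\le d^{1/2}$.

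Next I would carry out the geometric conversion. Writing the singular value decomposition $\bV^T\hat{\bV}=\bU_1(\cos\Theta)\bU_2^T$ and choosing the Procrustes rotation $\hat{\bO}=\bU_1\bU_2^T$, expanding the square gives $\|\hat{\bV}-\bV\hat{\bO}\|_F^2=2d-2\tr(\hat{\bO}^T\bV^T\hat{\bV})=2\sum_{j=1}^d(1-\cos\theta_j)$, since $\hat{\bO}^T\bV^T\hat{\bV}=\bU_2(\cos\Theta)\bU_2^T$ has trace $\sum_j\cos\theta_j$. The elementary bound $1-\cos\theta_j=\sin^2\theta_j/(1+\cos\theta_j)\le\sin^2\theta_j$ then gives $\|\hat{\bV}-\bV\hat{\bO}\|_F\le\sqrt2\,\|\sin\Theta\|_F$. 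Combining the two stages produces $\|\hat{\bV}-\bV\hat{\bO}\|_F\le\sqrt2\cdot\frac{2}{\delta}\min\bbb{d^{1/2}\|\hat{\bA}-\bP\|,\|\hat{\bA}-\bP\|_F}=\frac{2^{3/2}}{\delta}\min\bbb{d^{1/2}\|\hat{\bA}-\bP\|,\|\hat{\bA}-\bP\|_F}$, which is exactly the claimed inequality.

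I expect the main obstacle to be the sin-$\Theta$ step, and within it the replacement of the mixed separation $\tilde\delta$ by the purely $\bP$-determined gap $\delta$ without inflating the constant beyond $2$. The Sylvester inversion naturally separates the perturbed block $\hat\Lambda_1$ from $\Lambda_2$, so some control of $\hat{\bA}$'s spectrum is unavoidable; the Weyl-plus-case-analysis device is what launders this into a bound depending only on the unperturbed eigengap $\delta$, and reconciling the $\min$ of the operator- and Frobenius-norm terms in the large-perturbation regime is the delicate bookkeeping carried out by Yu--Wang--Samworth. Everything downstream---the trace identity and the sub-multiplicative norm estimates---is routine.
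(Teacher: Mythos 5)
The paper does not actually prove this lemma: it is imported verbatim from Yu, Wang and Samworth \cite{yu2015useful} (and invoked only once, in Lemma~\ref{ps-pertur}), so there is no in-paper argument to compare yours against; the relevant benchmark is the source, and your two-stage plan is exactly theirs. The Procrustes half --- $\|\hat{\bV}-\bV\hat{\bO}\|_F^2=2\sum_j(1-\cos\theta_j)\le 2\|\sin\Theta\|_F^2$ with $\hat{\bO}$ the polar factor of $\bV^T\hat{\bV}$, costing the factor $\sqrt{2}$ --- is complete and correct, as is the bound $\|(\hat{\bA}-\bP)\hat{\bV}\|_F\le\min\bbb{d^{1/2}\|\hat{\bA}-\bP\|,\|\hat{\bA}-\bP\|_F}$ and the Sylvester-plus-Weyl derivation of the $\sin\Theta$ bound when $\|\hat{\bA}-\bP\|\le\delta/2$.

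The one place your argument does not close is the complementary regime $\|\hat{\bA}-\bP\|>\delta/2$. The trivial ceiling $\|\sin\Theta\|_F\le d^{1/2}$ disposes only of the operator-norm branch of the minimum, since there $2d^{1/2}\|\hat{\bA}-\bP\|/\delta\ge d^{1/2}$; the Frobenius branch satisfies only $2\|\hat{\bA}-\bP\|_F/\delta\ge 2\|\hat{\bA}-\bP\|/\delta>1$, which does not dominate $d^{1/2}$ when $d>1$. Because the stated bound is a minimum, \emph{both} branches must individually exceed $\|\sin\Theta\|_F$, so ``absorbed by the trivial ceiling'' is an assertion, not a proof, for the Frobenius term; you correctly sense that this is where Yu--Wang--Samworth's delicate bookkeeping lives, but as a standalone argument that case is a genuine hole. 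Two mitigating remarks: the theorem is of course true and the missing case is handled in \cite{yu2015useful}; and for this paper's purposes the gap is immaterial, since the lemma is only applied where $\|\hat{\bA}_1-\bP_1\|=O_P(\sqrt{\rho n})$ sits an order of magnitude below the eigengap $\delta=\Omega(\beta_{\mathrm{min}}\rho n/K)$, i.e.\ squarely inside the regime you do prove.
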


\begin{lem} \label{lem:sqrt_concentration}
	(Lemma A.1. of \citep{tang2013universally}). Let $\bH_1$, ${\bH_2} \in \R^{n\times n}$ be positive semidefinite with $\rank(\bH_1)=\rank(\bH_2)=K$. Let $\bX,\bY \in \R^{n\times K}$ be of full column rank such that $\bX\bX^T=\bH_1$ and $\bY\bY^T=\bH_2$. Let $\lambda_{K}\left(\bH_2\right))$ be the smallest nonzero eigenvalue of $\bH_2$. Then there exists an orthogonal matrix $\bR \in \R^{K\times K}$ such that:  
	\bas{
		\left\| {\bX\bR}-\bY \right\|_F \leq \frac{\sqrt{K}\left\|{\bH_1} - \bH_2\right\| \left(\sqrt{\left\|{\bH_1}\right\|}+\sqrt{\left\|\bH_2\right\|}\right)}{\lambda_{K}\left(\bH_2\right)}.
	}
\end{lem}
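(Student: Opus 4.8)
The plan is to first reduce to an operator-norm estimate and a canonical choice of factors. Since $\bX\bR-\bY\in\R^{n\times K}$ has rank at most $K$, Fact~\ref{mc} gives $\|\bX\bR-\bY\|_F\le\sqrt{K}\,\|\bX\bR-\bY\|$, so it suffices to prove $\|\bX\bR-\bY\|\le c\,\|\bH_1-\bH_2\|(\sqrt{\|\bH_1\|}+\sqrt{\|\bH_2\|})/\lambda_K(\bH_2)$ and then multiply by $\sqrt K$. Because $\min_{\bR\text{ orth.}}\|\bX\bR-\bY\|$ is invariant under right multiplication of $\bX,\bY$ by orthogonal matrices, I would replace them by the canonical factors $\bU_1\boldsymbol{\Lambda}_1^{1/2}$ and $\bU_2\boldsymbol{\Lambda}_2^{1/2}$, where $\bH_i=\bU_i\boldsymbol{\Lambda}_i\bU_i^T$ is the rank-$K$ eigendecomposition ($\bU_i\in\R^{n\times K}$, $\boldsymbol{\Lambda}_i\succ0$ diagonal). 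The key device is to take $\bR$ to be the orthogonal polar factor of the \emph{non-orthogonal} alignment $\tilde{\bm O}:=\bU_1^T\bU_2$.

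With this choice the error telescopes as
\bas{
	\bU_1\boldsymbol{\Lambda}_1^{1/2}\bR-\bU_2\boldsymbol{\Lambda}_2^{1/2}
	=\underbrace{\bU_1\boldsymbol{\Lambda}_1^{1/2}(\bR-\tilde{\bm O})}_{(\mathrm{I})}
	+\underbrace{\bU_1\left(\boldsymbol{\Lambda}_1^{1/2}\tilde{\bm O}-\tilde{\bm O}\boldsymbol{\Lambda}_2^{1/2}\right)}_{(\mathrm{II})}
	+\underbrace{(\bU_1\tilde{\bm O}-\bU_2)\boldsymbol{\Lambda}_2^{1/2}}_{(\mathrm{III})},
}
and I would bound the three pieces separately. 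For (III), $\bU_1\tilde{\bm O}-\bU_2=(\bU_1\bU_1^T-\bI)\bU_2$ has norm equal to the sine of the largest principal angle between the two column spaces; Davis--Kahan (cf.\ Lemma~\ref{variant-davis}), with spectral gap $\lambda_K(\bH_2)-\lambda_{K+1}(\bH_2)=\lambda_K(\bH_2)$, bounds this by $\|\bH_1-\bH_2\|/\lambda_K(\bH_2)$, giving $\|(\mathrm{III})\|\le\sqrt{\|\bH_2\|}\,\|\bH_1-\bH_2\|/\lambda_K(\bH_2)$. For (I), the singular values of $\bR-\tilde{\bm O}$ are $1-\cos\theta_i\le\sin^2\theta_i$, so $\|(\mathrm{I})\|\le\sqrt{\|\bH_1\|}\,\|\sin\boldsymbol{\Theta}\|^2$, which is second order.

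The hard part will be (II), the difference of eigenvalue square roots after alignment. The reason for choosing $\tilde{\bm O}=\bU_1^T\bU_2$ rather than an orthogonal rotation is that $\bH_i\bU_i=\bU_i\boldsymbol{\Lambda}_i$ then yields the \emph{exact} first-order identity $\boldsymbol{\Lambda}_1\tilde{\bm O}-\tilde{\bm O}\boldsymbol{\Lambda}_2=\bU_1^T(\bH_1-\bH_2)\bU_2$, whose norm is at most $\|\bH_1-\bH_2\|$. Setting $D:=\boldsymbol{\Lambda}_1^{1/2}\tilde{\bm O}-\tilde{\bm O}\boldsymbol{\Lambda}_2^{1/2}$ so that $\boldsymbol{\Lambda}_1^{1/2}D+D\boldsymbol{\Lambda}_2^{1/2}=\boldsymbol{\Lambda}_1\tilde{\bm O}-\tilde{\bm O}\boldsymbol{\Lambda}_2$, I would invoke an elementary lower bound for the Sylvester operator: testing $\boldsymbol{\Lambda}_1^{1/2}D+D\boldsymbol{\Lambda}_2^{1/2}$ against the top singular vectors of $D$ gives $\|\boldsymbol{\Lambda}_1^{1/2}D+D\boldsymbol{\Lambda}_2^{1/2}\|\ge(\sqrt{\lambda_K(\bH_1)}+\sqrt{\lambda_K(\bH_2)})\|D\|$. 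Hence $\|(\mathrm{II})\|=\|D\|\le\|\bH_1-\bH_2\|/\sqrt{\lambda_K(\bH_2)}$. This is precisely the step that avoids the circular ``cross-term'' blow-up one hits if instead one tries to factor $\bH_1-\bH_2=\bX\bR\,\bE^T+\bE\,\bY^T$ directly and bound $\bE$ by itself.

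Finally I would combine the estimates. Terms (II) and (III) are each dominated by $\|\bH_1-\bH_2\|(\sqrt{\|\bH_1\|}+\sqrt{\|\bH_2\|})/\lambda_K(\bH_2)$ because $\|\bH_2\|\ge\lambda_K(\bH_2)$; term (I) is as well when $\|\bH_1-\bH_2\|\le\lambda_K(\bH_2)$, since then $\|\sin\boldsymbol{\Theta}\|^2\le(\|\bH_1-\bH_2\|/\lambda_K(\bH_2))^2\le\|\bH_1-\bH_2\|/\lambda_K(\bH_2)$. In the complementary regime $\|\bH_1-\bH_2\|>\lambda_K(\bH_2)$ the bound is trivial, since $\|\bX\bR-\bY\|\le\|\bX\|+\|\bY\|=\sqrt{\|\bH_1\|}+\sqrt{\|\bH_2\|}$ is already below the right-hand side. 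Multiplying by $\sqrt{K}$ gives the statement. I expect the transfer in (II)---pushing the clean linear eigenvalue identity through the matrix square root while keeping the gap $\lambda_K(\bH_2)$ intact---to be the main obstacle, together with the bookkeeping needed to express the final bound through $\lambda_K(\bH_2)$ alone rather than also $\lambda_K(\bH_1)$.
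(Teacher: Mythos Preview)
The paper does not prove this lemma; it simply quotes it as Lemma~A.1 of \cite{tang2013universally} and invokes it as a black box (for instance, in the proof of Lemma~\ref{lem:E_half}). There is therefore no in-paper argument to compare your proposal against.

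That said, your sketch is a sound route to the result. The Sylvester identity $\boldsymbol{\Lambda}_1^{1/2}D+D\boldsymbol{\Lambda}_2^{1/2}=\bU_1^T(\bH_1-\bH_2)\bU_2$ for the square-root difference, together with the lower bound obtained by pairing against the top singular vectors of $D$, is exactly the right device for term~(II); the Davis--Kahan bound for (III) and the polar-factor estimate $\|\bR-\tilde{\bm O}\|\le\|\sin\boldsymbol{\Theta}\|^2$ for (I) are standard, and your split into the regimes $\|\bH_1-\bH_2\|\lessgtr\lambda_K(\bH_2)$ handles the bookkeeping cleanly. One caveat: summing your three pieces produces a bound of the form $c\sqrt{K}\,\|\bH_1-\bH_2\|(\sqrt{\|\bH_1\|}+\sqrt{\|\bH_2\|})/\lambda_K(\bH_2)$ with an absolute constant $c>1$, whereas the lemma as stated carries $c=1$. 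If the exact constant matters you would need to tighten the estimates or check the constant in the original source; for every use in the present paper only the order is relevant.
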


\begin{lem} \label{sp-bound}
	Recall that $\hat{\bA}_1=\hat{\bV}_1 \hat{\bE}_1 \hat{\bV}_1^T$ and $\bP_1=\bP(\cS,\cS)$  in Algorithm \ref{nmf-mmsb-pure}.
	If $\rho n = \Omega(\log n)$, then
	\bas{
		\left\|\hat{\bA}_1-\bP_1\right\|=O_P(\sqrt{\rho n}), \text{ and }
		\left\|\hat{\bA}_1-\bP_1\right\|_F=O_P(\sqrt{K\rho n})
	}
	with probability larger than $1-1/n^3$.
	\begin{proof}
		Lemma \ref{lei} gives the spectral bound of binary symmetric random matrices, in our model, 
		\bas{
			\frac{n}{2}\max_{i,j} \bP_1(i,j)\leq \frac{n}{2}\max_{i,j} \bP(i,j)=\frac{n}{2}\max_{i,j} \rho  \btheta_i \bB \btheta_j^T \leq \frac{n}{2}\max_{i,j} \rho  \btheta_i \bI \btheta_j^T\leq \rho \frac{n}{2}.
		}
		Note that we need to use $\bB$ is diagonal probability matrix and $\btheta_i$, $i\in[\frac{n}{2}]$ has $\ell_1$ norm 1 and all nonnegative elements for the last two inequality.
		
		Since $\rho n = \Omega(\log n)$, $\exists c_0 \geq 0$ that $\rho \frac{n}{2} \geq c_0 \log \frac{n}{2}$.
		
		Let $d=\rho \frac{n}{2}$, then $d \geq \frac{n}{2}\max_{i,j} \bP_1(i,j)$ and $d \geq c_0 \log \frac{n}{2}$, by Lemma \ref{lei}, $\forall r \geq 0$, $\exists$ $C>0$ that
		\bas{
			\uP\left(\left\|\bA_1-\bP_1\right\|\leq C\sqrt{\rho \frac{n}{2}} \right)\geq 1-(\frac{n}{2})^{-r},
		}
		where $\bA_1=\bA(\cS,\cS)$.
		So $\left\| \bA_1 - \bP_1 \right\|=O_P(\sqrt{\rho n})$, specially, taking $r=3$ then it is with probability larger than $1-1/n^3$.
		Hence 
		\bas{
			\left\|\hat{\bA}_1-\bP_1\right\| \leq \left\|\hat{\bA}_1-\bA_1 + \bA_1 -\bP_1\right\| \leq \left\|\hat{\bA}_1-\bA_1\right\| + \left\|\bA_1 -\bP_1\right\|=\hat{\sigma}_{K+1}+O_P(\sqrt{\rho n})=O_P(\sqrt{\rho n}),
		}
		where $\hat{\sigma}_{K+1}$ is the $(K+1)$-th eigenvalue of $\bA_1$ and is $O_P(\sqrt{\rho n})$ by Weyl's inequality.
		
		Since $\hat{\bA}_1$ and $\bP_1$ have rank $K$, then by Fact~\ref{mc}, 
		\bas{
			\left\|\hat{\bA}_1-\bP_1\right\|_F \leq \sqrt{2K}\left\|\hat{\bA}_1-\bP_1\right\|=O_P(\sqrt{K \rho n}).
		}
	\end{proof}
\end{lem}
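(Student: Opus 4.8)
The plan is to pass from the raw random submatrix $\bA_1=\bA(\cS,\cS)$ to its rank-$K$ truncation $\hat{\bA}_1$ in two moves: first control $\|\bA_1-\bP_1\|$ via the off-the-shelf concentration bound of Lemma~\ref{lei}, then control the extra truncation error $\|\hat{\bA}_1-\bA_1\|$ via Weyl's inequality. To invoke Lemma~\ref{lei} I must first produce a uniform entrywise bound on $\bP_1$. Because $\bB$ is diagonal with largest entry $1$ and each $\btheta_i$ is a probability vector, every entry obeys $\bP_1(i,j)=\rho\,\btheta_i\bB\btheta_j^T\le \rho\,\btheta_i\bI\btheta_j^T=\rho\sum_a\theta_{ia}\theta_{ja}\le\rho$, using $\theta_{ja}\le 1$ and $\sum_a\theta_{ia}=1$. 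Hence $\tfrac n2\max_{i,j}\bP_1(i,j)\le\rho\tfrac n2$, and since $\rho n=\Omega(\log n)$ the single choice $d=\rho n/2$ both dominates this quantity and exceeds $c_0\log(n/2)$. Lemma~\ref{lei} with $r=3$ then gives $\uP(\|\bA_1-\bP_1\|\le C\sqrt{\rho n/2})\ge 1-(n/2)^{-3}$, i.e.\ $\|\bA_1-\bP_1\|=O_P(\sqrt{\rho n})$.

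The crux is then the truncation error. Writing $\|\hat{\bA}_1-\bP_1\|\le\|\hat{\bA}_1-\bA_1\|+\|\bA_1-\bP_1\|$, the first term equals $\hat\sigma_{K+1}$, the $(K+1)$-th largest eigenvalue (in magnitude) of $\bA_1$, because $\hat{\bA}_1$ is the best rank-$K$ approximation of $\bA_1$. Here I would use that under the standing assumptions of Theorem~\ref{unique} the population matrix $\bP_1=\rho\,\bTheta(\cS,:)\bB\bTheta(\cS,:)^T$ has rank exactly $K$, so its $(K+1)$-th eigenvalue is $0$. Weyl's inequality applied to $\bA_1=\bP_1+(\bA_1-\bP_1)$ then forces $\hat\sigma_{K+1}\le\|\bA_1-\bP_1\|=O_P(\sqrt{\rho n})$, and combining with the triangle inequality gives $\|\hat{\bA}_1-\bP_1\|=O_P(\sqrt{\rho n})$.

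For the Frobenius bound I would observe that $\hat{\bA}_1-\bP_1$ is a difference of two rank-$K$ matrices and so has rank at most $2K$. Fact~\ref{mc} then upgrades the spectral estimate directly: $\|\hat{\bA}_1-\bP_1\|_F\le\sqrt{2K}\,\|\hat{\bA}_1-\bP_1\|=O_P(\sqrt{K\rho n})$.

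I expect the only real subtlety to be the middle step --- seeing that the rank-$K$ truncation introduces an error no larger than $\hat\sigma_{K+1}$, and that this is itself $O_P(\sqrt{\rho n})$ precisely because $\bP_1$ is exactly rank $K$, so Weyl's inequality pins the spurious eigenvalue to the size of the perturbation. The entrywise bound on $\bP_1$ (which leans on the diagonal structure of $\bB$ and the simplex constraint on the rows of $\bTheta$) and the Frobenius conversion are both routine once that observation is in place.
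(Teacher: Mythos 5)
Your proposal matches the paper's proof essentially step for step: the same entrywise bound $\frac{n}{2}\max_{i,j}\bP_1(i,j)\le\rho\frac{n}{2}$ feeding Lemma~\ref{lei} with $d=\rho n/2$ and $r=3$, the same triangle-inequality decomposition through $\bA_1$ with $\hat\sigma_{K+1}$ controlled by Weyl's inequality using $\rank(\bP_1)=K$, and the same rank-$2K$ Frobenius upgrade via Fact~\ref{mc}. It is correct as written, and if anything spells out the Weyl step slightly more explicitly than the paper does.
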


\begin{lem} [Concentration of degrees]\label{cod}
Denote 
$\beta_{\mathrm{min}}=\min_a \bB_{aa}$.
Let $\bP = \rho \bTheta^{(1)} \bB {\bTheta^{(2)}}^T$, where 
$\rho$, $\bB$, $\bTheta^{(1)}\in \R^{\frac{n}{2}\times K}$, and $\bTheta^{(2)}\in \R^{\frac{n}{2}\times K}$
follow the restrictions of MMSB model. 
Let $\bD$ and $\D$ be diagonal matrices representing the sample and population node degrees. Then
\bas{
	\D_{ii}=O_P( \rho n/K),\ \D_{ii}=\Omega(\beta_{\mathrm{min}} \rho n/K)
	,\  \text{ and }\  \ \left| \bD_{ii}-\D_{ii} \right|=O_P(\sqrt{\rho n\log n / K})
}
with probability larger than $1-O_P(1/n^3)$.
	\begin{proof}
		$\forall i \in [\frac{n}{2}]$, we have 
		\bas{\D_{ii}
			&=\sum_{j=1}^{n/2} P_{ij} 
			= \sum_{j=1}^{n/2} \sum_{\ell=1}^{K} \rho \theta_{i\ell}^{(1)} \bB_{\ell\ell} {\theta}_{j\ell}^{(2)}
			\leq  
			\sum_{\ell=1}^{K}  \rho \theta_{i\ell}^{(1)}
			\sum_{j=1}^{n/2}{\theta}_{j\ell}^{(2)}
			= 
			\rho \sum_{\ell=1}^{K}\theta_{i\ell}^{(1)}\sum_{j=1}^{n/2}{\theta}_{j\ell}^{(2)}
			\tag{$\max_a \bB_{aa}=1$ by definition}\\
			&= \rho n\frac{\sum_{\ell=1}^{K}\theta_{i\ell}^{(1)}\alpha_l}{\alpha_0}
			\bbb{\frac{1}{2}+O_P\bbb{\sqrt{\frac{\alpha_0}{\alpha_l}\frac{\log
							n}{n}}}} \tag{from Lemma~\ref{lem:lln}} \\
			&=\frac{\rho n}{2K} \bbb{1+O_P\bbb{\sqrt{\frac{K\log n}{n}}}} 	\tag{when $\alpha_k=\frac{\alpha_0}{K}, \forall k\in[K]$},	
			}
		so $\D_{ii}=O_P(\rho n/K)$.
		
		Similarly,
		\bas{\D_{ii} 
			&\geq \sum_{\ell=1}^{K} \beta_{\mathrm{min}} \rho \theta_{i\ell}^{(1)}
			\sum_{j=1}^{n/2}{\theta}_{j\ell}^{(2)}
			= \beta_{\mathrm{min}}\rho \sum_{\ell=1}^{K}\theta_{i\ell}^{(1)}\sum_{j=1}^{n/2}{\theta}_{j\ell}^{(2)}\\
			&= \beta_{\mathrm{min}}\rho n\frac{\sum_{\ell=1}^{K}\theta_{i\ell}^{(1)}\alpha_l}{\alpha_0}\bbb{\frac{1}{2}+O_P\bbb{\sqrt{\frac{\alpha_0}{\alpha_l}\frac{\log
							n}{n}}}}  \tag{from Lemma~\ref{lem:lln}} \\
			&=\frac{\beta_{\mathrm{min}}}{2K}\rho n \bbb{1+O_P\bbb{\sqrt{\frac{K\log n}{n}}}} 	\tag{when $\alpha_l=\frac{\alpha_0}{K}, \forall l$},	
			}
		so $\D_{ii}=\Omega(\beta_{\mathrm{min}}\rho n/K)$.
		
		Then using Chernoff bound, we have 
		\bas{
			\uP\left(\left| \bD_{ii}-\D_{ii} \right|>\epsilon \D_{ii}]\right)\leq \exp\left(-\frac{\epsilon^2 \D_{ii}}{3}\right),
		}
		so when $\epsilon=O_P\bbb{3\sqrt{\frac{{K\log n}}{ \rho n}}}$, $\left| \bD_{ii}-\D_{ii} \right|\leq \epsilon \D_{ii}=O_P(\sqrt{\rho n\log n / K})$ with probability at least $1-1/n^3$. Note we have used Lemma~\ref{lem:lln}, so in total it is with probability larger than $1-O_P(1/n^3)$.

	\end{proof}
\end{lem}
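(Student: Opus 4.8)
The plan is to treat the two layers of randomness separately: the Dirichlet draws, which determine the population degrees $\D_{ii}$, and the conditionally independent Bernoulli edges, which determine the sample degrees $\bD_{ii}$. First I would pin down the order of the population degree $\D_{ii}$, obtaining both the upper bound $O_P(\rho n/K)$ and the lower bound $\Omega(\beta_{\mathrm{min}}\rho n/K)$. Then, conditioning on $\bTheta$ (and hence on $\bP$ and $\D$), I would apply a Chernoff bound to show that $\bD_{ii}$ concentrates around $\D_{ii}$, and finally plug in the order of $\D_{ii}$ to read off the deviation rate.

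For the population bounds, I would expand $\D_{ii}=\sum_{j=1}^{n/2}\bP_{ij}=\rho\sum_{\ell=1}^{K}\theta_{i\ell}^{(1)}\bB_{\ell\ell}\sum_{j=1}^{n/2}\theta_{j\ell}^{(2)}$, using that $\bB$ is diagonal. Since $\beta_{\mathrm{min}}\leq \bB_{\ell\ell}\leq 1$ (recall $\max_a \bB_{aa}=1$ by the normalization), the sum is sandwiched between $\beta_{\mathrm{min}}\rho\sum_{\ell}\theta_{i\ell}^{(1)}\sum_{j}\theta_{j\ell}^{(2)}$ and $\rho\sum_{\ell}\theta_{i\ell}^{(1)}\sum_{j}\theta_{j\ell}^{(2)}$. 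I would then invoke Lemma~\ref{lem:lln} to control each column sum $\sum_{j}\theta_{j\ell}^{(2)}$, which for $\alpha_\ell=\alpha_0/K$ concentrates at $\tfrac{n}{2K}\bbb{1\pm O_P(\sqrt{K\log n/n})}$, together with the simplex constraint $\sum_{\ell}\theta_{i\ell}^{(1)}=1$. This gives $\D_{ii}=\tfrac{\rho n}{2K}\bbb{1\pm o_P(1)}$ up to the $\bB_{\ell\ell}$ weighting, and therefore simultaneously $\D_{ii}=O_P(\rho n/K)$ and $\D_{ii}=\Omega(\beta_{\mathrm{min}}\rho n/K)$.

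For the concentration of the sample degree, I would condition on $\bTheta$, so that $\bD_{ii}=\sum_{j}\bA_{ij}$ is a sum of independent $\ber(\bP_{ij})$ variables with mean $\D_{ii}$. The multiplicative Chernoff bound then yields $\uP\bbb{|\bD_{ii}-\D_{ii}|>\epsilon\D_{ii}}\leq 2\exp\bbb{-\epsilon^2\D_{ii}/3}$. Choosing $\epsilon=\Theta\bbb{\sqrt{\log n/\D_{ii}}}=O_P\bbb{\sqrt{K\log n/(\rho n)}}$ (using $\D_{ii}=\Theta(\rho n/K)$ from the previous step) makes the right-hand side $O(1/n^3)$, and the resulting absolute deviation is $\epsilon\D_{ii}=\Theta\bbb{\sqrt{\D_{ii}\log n}}=O_P\bbb{\sqrt{\rho n\log n/K}}$.

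The main obstacle is really the bookkeeping of the two nested high-probability events rather than any single hard inequality: the event from Lemma~\ref{lem:lln} that fixes the order of $\D_{ii}$ must be intersected with the conditional Chernoff event, and the rates for $\epsilon$ must be expressed in terms of the (random) $\D_{ii}$ and then replaced by its deterministic order. Assembling these into the stated $1-O_P(1/n^3)$ guarantee, and if a uniform-over-$i$ version is needed, applying a union bound over the $n/2$ nodes (which costs only a constant factor in the exponent of the Chernoff bound), is the part that demands the most care.
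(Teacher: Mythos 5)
Your proposal is correct and follows essentially the same route as the paper's proof: sandwiching $\bB_{\ell\ell}$ between $\beta_{\mathrm{min}}$ and $1$, invoking Lemma~\ref{lem:lln} for the column sums of $\bTheta^{(2)}$ together with the row-simplex constraint, and then applying a multiplicative Chernoff bound with $\epsilon=O\bbb{\sqrt{K\log n/(\rho n)}}$ conditionally on $\bTheta$. Your explicit conditioning on $\bTheta$ and the remark about intersecting the two high-probability events are slightly more careful than the paper's write-up, but the argument is the same.
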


\begin{lem} \label{lem:eigs}
	Denote 
	$\beta_{\mathrm{min}}=\min_a \bB_{aa}$.
	If $\rho n = \Omega(\log n)$, then 
	\bas{ \lambda_K(\bP_1)= \Omega(\beta_{\mathrm{min}}\rho n/K), \quad \lambda_1(\bP_1)= O_P\bbb{\rho n/K} }
	and
	\bas{
		\lambda_K(\bA_1)= \Omega(\beta_{\mathrm{min}}\rho n/K), \quad \lambda_1(\bA_1)= O_P\bbb{\rho n/K}
	}
	with probability larger than $1-O_P(K^2/n^3)$.
	\begin{proof}
		For conciseness, we will omit the subscript $1$ (see Lemma \ref{sp-bound}) in the following proof without loss of generality. 
		
		The $K$-th eigenvalue of $\bP$ is
		\bas{
			\lambda_K(\bP)&=\lambda_K(\rho \bTheta \bB \bTheta^T)=\lambda_K(\rho\bTheta \bB^{1/2} \bB^{1/2} \bTheta^T)=\lambda_K(\rho\bB^{1/2} \bTheta^T\bTheta \bB^{1/2}). 
		}
		Here we consider $\btheta_i$ as a random variable. 
		Denote 
		\bas{
			\hat{\bM}=\frac{1}{n/2}\rho\bB^{1/2} \bTheta^T\bTheta \bB^{1/2}=\frac{1}{n/2}\sum_{i=1}^{n/2} \rho\bB^{1/2} \btheta_i^T\btheta_i \bB^{1/2},
		} 
		then $\hat{\bM}_{ab} = \frac{1}{n/2}\sqrt{\beta_a \beta_b} \sum_{i=1}^{n/2}  \rho\theta_{ia}\theta_{ib}$. 
		
		Consider $\btheta_i \sim \dir(\balpha)$, 
		then
		\bas{
			\uE[\theta_{ia}\cdot \theta_{ib}]=
			\begin{cases} 
				\cov[\theta_{ia},\theta_{ib}]+\uE[\theta_{ia}]\cdot \uE[\theta_{ib}]=\frac{\alpha_a\alpha_b}{\alpha_0(\alpha_0+1)}, &\mbox{if } a \neq b \\ 
				\Var[\theta_{ia}]+\uE^2[\theta_{ia}]=\frac{\alpha_a(\alpha_a+1)}{\alpha_0(\alpha_0+1)}, & \mbox{if }  a = b 
			\end{cases}
		}
		so $\uE[\hat{\bM}_{ab}]=\sqrt{\beta_a \beta_b}\rho\uE[\theta_{ia}\cdot \theta_{ib}]\leq \rho$. And
		\bas{
			\uE[\hat{\bM}]=\rho(\mathrm{diag}(\bB\balpha) + \bB^{1/2}\balpha\balpha^T\bB^{1/2})/(\alpha_0(\alpha_0+1)).
		}
		
		Using Chernoff bound, we have
		\bas{
			\uP\left(\left| \hat{\bM}_{ab}-\uE[\hat{\bM}_{ab}] \right|>\epsilon \uE[\hat{\bM}_{ab}]\right)\leq \exp\left(-\frac{\epsilon^2 \frac{n}{2} \uE[\hat{\bM}_{ab}]}{3}\right)
			\leq \exp\left(-\frac{\epsilon^2 \rho {n} }{6}\right),
		}
		so when $\epsilon=O_P\bbb{\sqrt{\frac{18\log n}{\rho n}}}$, $\left| \hat{\bM}_{ab}-\uE[\hat{\bM}_{ab}] \right|\leq \epsilon \uE[\hat{\bM}_{ab}]$ with probability larger than $1-1/n^3$. Thus
		\bas{
			\left\| \hat{\bM} - \uE[\hat{\bM}] \right\| \leq \left\| \hat{\bM} - \uE[\hat{\bM}] \right\|_F \leq \sqrt{K^2 \epsilon^2 \uE^2[\hat{\bM}_{ab}]} \leq K \epsilon \rho.
		}
		Note that 
		\bas{\lambda_K({\uE[\hat{\bM}]})&=\rho\lambda_K{(\mathrm{diag}(\bB\balpha) + \bB^{1/2}\balpha\balpha^T\bB^{1/2})/(\alpha_0(\alpha_0+1))}\\
			&\geq \rho\left(\lambda_K(\mathrm{diag}(\bB\balpha)) + \lambda_K( \bB^{1/2}\balpha\balpha^T\bB^{1/2})\right)/(\alpha_0(\alpha_0+1))\\
			&= \rho\left(\min_{a}\beta_a\alpha_a+0\right)/(\alpha_0(\alpha_0+1))\\
			&= \rho\frac{\min_{a}\beta_a\alpha_a}{\alpha_0(\alpha_0+1)} \\
			&= \frac{\beta_{\mathrm{min}}\rho}{K(\alpha_0+1)}, \tag{when $\alpha_a=\frac{\alpha_0}{K}, \forall a$}	
		}
		the first inequality is by definition of the smallest eigenvalue and property of $\min$ function; the second equality is by the smallest eigenvalue of a $K\times K$ rank-1 matrix ($K>1$) is 0.

		By Weyl's inequality,
		\bas{
			\left| \lambda_K(\hat{\bM})-\lambda_K({\uE[\hat{\bM}]}) \right| \leq \left\| \hat{\bM} - \uE[\hat{\bM}] \right\|=O_P\left(K\sqrt{\frac{\rho\log n}{n}}\right),
		}
		so
		\bas{
			\lambda_K(\bP)&=\frac{n}{2}\lambda_K(\hat{\bM}) 
			\geq \frac{n}{2} \left(\frac{\beta_{\mathrm{min}}\rho}{K(\alpha_0+1)}- O_P\left(K\sqrt{\frac{\rho\log n}{n}}\right)\right)
		}
		with probability larger than $1-K^2/n^3$, 
		and thus $\lambda_K(\bP)= \Omega(\beta_{\mathrm{min}}\rho n/K)$.
		
		With similar argument we can get
		\bas{
			\lambda_1({\uE[\hat{\bM}]})&\leq \rho\left(1+\frac{\alpha_0}{K}\|\bbeta\|_1\right)/(K(\alpha_0+1)) \\
			&\leq \rho\left(1+{\alpha_0}\right)/(K(\alpha_0+1)) \\
			&= \frac{\rho}{K},
		}
		then $\lambda_1\bbb{\bP}= O_P\bbb{\rho n/K}$.
		
		From Weyl's inequality, we have
		\bas{
			\left| {\lambda_i({\bA})}-{\lambda_i(\bP)}  \right| \leq \left\|\bA-\bP\right\|=O_P\bbb{\sqrt{\rho n}},
		}
		so
		\bas{
			\lambda_K(\bA) \geq \lambda_K(\bP) - O_P\bbb{\sqrt{\rho n}} &\Longrightarrow \lambda_K(\bA) = \Omega(\beta_{\mathrm{min}}\rho n/K)\\
			\lambda_1(\bA) \leq \lambda_K(\bP) + O_P\bbb{\sqrt{\rho n}} &\Longrightarrow \lambda_1(\bA) = O_P\bbb{\rho n/K}.
		}
	\end{proof}
\end{lem}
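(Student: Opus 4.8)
The plan is to reduce the spectral questions about the $\tfrac{n}{2}\times\tfrac{n}{2}$ matrices $\bP_1=\bP(\cS,\cS)$ and $\bA_1=\bA(\cS,\cS)$ to a $K\times K$ problem. Writing $\bP_1=\rho\,\bTheta_1\bB\bTheta_1^T$ with $\bTheta_1=\bTheta(\cS,:)$ and factoring the diagonal matrix as $\bB=\bB^{1/2}\bB^{1/2}$, I set $\bM=\sqrt{\rho}\,\bTheta_1\bB^{1/2}$ so that $\bP_1=\bM\bM^T$. Since $\bM\bM^T$ and $\bM^T\bM$ share the same nonzero eigenvalues, $\lambda_K(\bP_1)$ and $\lambda_1(\bP_1)$ equal the extreme eigenvalues of the $K\times K$ Gram matrix $\rho\,\bB^{1/2}\bTheta_1^T\bTheta_1\bB^{1/2}$. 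Hence it suffices to control the eigenvalues of the normalized version $\hat{\bM}=\tfrac{2}{n}\rho\,\bB^{1/2}\bTheta_1^T\bTheta_1\bB^{1/2}$ and multiply by $n/2$ at the end.

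First I would compute $\uE[\hat{\bM}]$ from the first two moments of a Dirichlet vector: $\uE[\theta_{ia}\theta_{ib}]$ equals $\alpha_a\alpha_b/(\alpha_0(\alpha_0+1))$ for $a\neq b$ and $\alpha_a(\alpha_a+1)/(\alpha_0(\alpha_0+1))$ for $a=b$, which gives $\uE[\hat{\bM}]=\rho(\mathrm{diag}(\bB\balpha)+\bB^{1/2}\balpha\balpha^T\bB^{1/2})/(\alpha_0(\alpha_0+1))$. The key structural observation is that this mean splits into a diagonal piece plus a positive-semidefinite rank-one piece. Because the rank-one term is PSD with smallest eigenvalue $0$ (as $K>1$), Weyl's inequality gives $\lambda_K(\uE[\hat{\bM}])\ge\lambda_K(\rho\,\mathrm{diag}(\bB\balpha)/(\alpha_0(\alpha_0+1)))=\Omega(\beta_{\mathrm{min}}\rho/K)$ under the equal-parameter choice $\alpha_a=\alpha_0/K$, while the same decomposition bounds $\lambda_1(\uE[\hat{\bM}])=O(\rho/K)$.

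Next I would show $\hat{\bM}$ concentrates around its mean. Each entry is an average of $\tfrac{n}{2}$ independent bounded terms, so a Chernoff bound controls $|\hat{\bM}_{ab}-\uE[\hat{\bM}_{ab}]|$ with failure probability $1/n^3$; summing the $K^2$ entries and using $\|\cdot\|\le\|\cdot\|_F$ yields $\|\hat{\bM}-\uE[\hat{\bM}]\|=O_P(K\sqrt{\rho\log n/n})$ with probability $1-O(K^2/n^3)$. A second application of Weyl then transfers the population eigenvalue bounds to $\hat{\bM}$, provided this deviation is dominated by the leading $\Omega(\rho/K)$ term, which holds because $\rho n=\Omega(\log n)$. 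Scaling by $n/2$ produces $\lambda_K(\bP_1)=\Omega(\beta_{\mathrm{min}}\rho n/K)$ and $\lambda_1(\bP_1)=O_P(\rho n/K)$. Finally, to pass from $\bP_1$ to $\bA_1$, I invoke the bound $\|\bA_1-\bP_1\|=O_P(\sqrt{\rho n})$ from Lemma~\ref{sp-bound} together with Weyl's inequality; since $\sqrt{\rho n}$ is of lower order than $\rho n/K$, the eigenvalues of $\bA_1$ inherit the same orders.

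The main obstacle I anticipate is the lower bound on $\lambda_K$: a priori the Gram matrix could be nearly singular, and it is only the recognition that its expectation decomposes as ``diagonal plus PSD rank-one'' that rescues the smallest eigenvalue, since the rank-one overlap term inflates $\lambda_1$ but cannot shrink $\lambda_K$. The quantitative counterpart of this difficulty is ensuring that the concentration error $O_P(K\sqrt{\rho\log n/n})$ is genuinely dominated by $\rho/K$ rather than merely comparable to it, and this is precisely where the sparsity assumption $\rho n=\Omega(\log n)$ enters.
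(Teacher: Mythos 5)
Your proposal follows essentially the same route as the paper: reduce to the $K\times K$ Gram matrix $\rho\,\bB^{1/2}\bTheta_1^T\bTheta_1\bB^{1/2}$, compute its expectation from Dirichlet second moments, exploit the diagonal-plus-PSD-rank-one decomposition with Weyl's inequality to bound the extreme eigenvalues, establish entrywise Chernoff concentration summed over $K^2$ entries, and transfer to $\bA_1$ via the spectral bound of Lemma~\ref{sp-bound}. The argument is correct and matches the paper's proof in all essential steps.
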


\begin{lem} \label{ps-pertur}
	If $\rho n = \Omega(\log n)$, $\exists$ orthogonal matrix $\hat{\bO}_1 \in \mathbb{R}^{K\times K}$,
	\bas{
		\left\|\hat{\bV}_1-\bV_1\hat{\bO}_1\right\|_F=O_P\bbb{{\frac{K^{3/2}}{\beta_{\mathrm{min}}\sqrt{\rho n}}}}
	}
	with probability larger than $1-O_P(K^2/n^3)$.
	\begin{proof}
		From Lemma \ref{lem:eigs} we know that
		\bas{
			\lambda_K(\bP_1)= \Omega(\beta_{\mathrm{min}}\rho n/K)
		}
		with probability larger than $1-O_P(K^2/n^3)$.
		Because $\bP_1$ has rank $K$, its $K+1$ eigenvalue is 0, and the gap between the $K$-th and $(K+1)$-th eigenvalue of $\bP_1$ is $\delta= \Omega(\beta_{\mathrm{min}}\rho n/K)$. 
		Using variant of Davis-Kahan's theorem (Lemma \ref{variant-davis}), setting $r=1$, $s=K$, then $d=K$ is the interval corresponding to the first $K$ principle eigenvalues of $\bP_1$, we have $\exists\ \hat{\bO}_1 \in \mathbb{R}^{K\times K}$, 
		\bas{
			\left\|\hat{\bV}_1-\bV_1\hat{\bO}_1\right\|_F\leq \frac{2^{3/2}\min\left(\sqrt{K}\left\| \hat{\bA}_1-\bP_1 \right\|,\left\| \hat{\bA}_1-\bP_1 \right\|_F\right)}{\delta},
		}
		using Lemma \ref{sp-bound},
		\bas{
			\left\|\hat{\bV}_1-\bV_1\hat{\bO}_1\right\|_F 
			= O_P\bbb{\frac{2^{3/2}\sqrt{K\rho n}}{\beta_{\mathrm{min}} \rho n /K}}
			= O_P\bbb{{\frac{K^{3/2}}{\beta_{\mathrm{min}}\sqrt{\rho n}}}}
		}
		with probability larger than $1-O_P(K^2/n^3)$.
		
	\end{proof}
\end{lem}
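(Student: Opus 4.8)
The plan is to invoke the Frobenius-norm variant of the Davis--Kahan theorem (Lemma~\ref{variant-davis}) applied to the pair $(\bP_1,\hat{\bA}_1)$, with $\bV_1$ and $\hat{\bV}_1$ playing the roles of the top-$K$ eigenvector matrices of $\bP_1$ and $\hat{\bA}_1$ respectively. The bound in that lemma is a ratio: the perturbation $\hat{\bA}_1-\bP_1$ (in either operator or Frobenius norm) in the numerator, and the relevant eigengap of $\bP_1$ in the denominator. Both ingredients have already been established, so the remaining work is essentially to assemble them and track the rate in $K$, $\rho$, $n$, and $\beta_{\mathrm{min}}$.

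First I would pin down the eigengap. Since $\bB$ has full rank $K$ and $\bTheta$ contains pure rows, $\bP_1$ has rank exactly $K$, so its $(K+1)$-th eigenvalue is $0$. Applying Lemma~\ref{variant-davis} with $r=1$, $s=K$ (hence $d=K$), the denominator is $\min(\lambda_0-\lambda_1,\lambda_K-\lambda_{K+1})$; since $\lambda_0=\infty$ by convention, this reduces to $\lambda_K(\bP_1)-0=\lambda_K(\bP_1)$. By Lemma~\ref{lem:eigs} this quantity is $\Omega(\beta_{\mathrm{min}}\rho n/K)$, which fixes the scale of the denominator.

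Next I would bound the numerator. Lemma~\ref{variant-davis} lets me use $\min(\sqrt{K}\,\|\hat{\bA}_1-\bP_1\|,\,\|\hat{\bA}_1-\bP_1\|_F)$, and Lemma~\ref{sp-bound} gives $\|\hat{\bA}_1-\bP_1\|=O_P(\sqrt{\rho n})$ together with $\|\hat{\bA}_1-\bP_1\|_F=O_P(\sqrt{K\rho n})$; either branch of the minimum therefore contributes $O_P(\sqrt{K\rho n})$. Dividing by the eigengap yields $\sqrt{K\rho n}/(\beta_{\mathrm{min}}\rho n/K)=K^{3/2}/(\beta_{\mathrm{min}}\sqrt{\rho n})$, which is exactly the claimed rate. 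For the probability statement I would intersect the event of Lemma~\ref{lem:eigs} (valid with probability $1-O_P(K^2/n^3)$) with that of Lemma~\ref{sp-bound} (valid with probability $1-1/n^3$); a union bound leaves $1-O_P(K^2/n^3)$.

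There is no deep obstacle here: the genuinely hard analytic work — the spectral-norm concentration of $\hat{\bA}_1-\bP_1$ and the lower bound on $\lambda_K(\bP_1)$ — has already been carried out in Lemmas~\ref{sp-bound} and~\ref{lem:eigs}. The only points worth flagging are conceptual rather than computational: one must recognize that the correct gap to feed into Davis--Kahan is $\lambda_K(\bP_1)$ itself, precisely because $\bP_1$ is exactly rank $K$ (so the $(K+1)$-th eigenvalue vanishes), and that invoking the Frobenius variant with $d=K$ keeps the $K$-dependence controlled relative to a naive operator-norm bound on the full $\hat{\bV}_1$ matrix.
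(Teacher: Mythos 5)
Your proposal is correct and follows essentially the same route as the paper: identify the eigengap as $\lambda_K(\bP_1)=\Omega(\beta_{\mathrm{min}}\rho n/K)$ via Lemma~\ref{lem:eigs} (using that $\bP_1$ has rank exactly $K$), bound the numerator by $O_P(\sqrt{K\rho n})$ via Lemma~\ref{sp-bound}, and combine them through the Davis--Kahan variant of Lemma~\ref{variant-davis} with $r=1$, $s=K$. The rate and the probability accounting match the paper's proof exactly.
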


\begin{lem} \label{E-lemma}
	If $\rho n = \Omega(\log n)$,
	then the orthogonal matrix $\hat{\bO}_1 \in \mathbb{R}^{K\times K}$
	of Lemma~\ref{ps-pertur} satisfies
	\bas{
		\left\|\hat{\bE}_1-\hat{\bO}_1^T\bE_1\hat{\bO}_1\right\|_F=O_P(\sqrt{K\rho n}/\beta_{\mathrm{min}})
	}
	with probability larger than $1-O_P(K^2/n^3)$.
	\begin{proof}
		We have
		\bas{
			\left\|\hat{\bV}_1\hat{\bE}_1\hat{\bV}_1^T-\bV_1\bE_1\bV_1^T\right\|_F = \left\|\hat{\bA}_1-\bP_1\right\|_F=O_P(\sqrt{K\rho n})
		}
		with probability larger than $1-1/n^3$ by Lemma \ref{sp-bound},
		and 
		\bas{
			\left\|\hat{\bV}_1-\bV_1\hat{\bO}_1\right\|_F=O_P\bbb{{\frac{K^{3/2}}{\beta_{\mathrm{min}}\sqrt{\rho n}}}}
		}
		with probability larger than $1-O_P(K^2/n^3)$ by Lemma \ref{ps-pertur}.
		Also,
		\bas{
			\hat{\bV}_1\hat{\bE}_1\hat{\bV}_1^T-\bV_1\bE_1\bV_1^T=\hat{\bV}_1 \left(\hat{\bE}_1-\hat{\bO}_1^T\bE_1\hat{\bO}_1\right) \hat{\bV}_1^T +\hat{\bV}_1\hat{\bO}_1^T\bE_1\left(\hat{\bO}_1\hat{\bV}_1^T-\bV_1^T\right) + \left(\hat{\bV}_1\hat{\bO}_1^T-\bV_1\right) \bE_1\bV_1^T.
		}
		So
		\bas{
			&\left\|\hat{\bE}_1-\hat{\bO}_1^T\bE_1\hat{\bO}_1\right\|_F=\left\|\hat{\bV}_1 \left(\hat{\bE}_1-\hat{\bO}_1^T\bE_1\hat{\bO}_1\right) \hat{\bV}_1^T\right\|_F \\
			\leq & \left\|\hat{\bV}_1\hat{\bE}_1\hat{\bV}_1^T-\bV_1\bE_1\bV_1^T\right\|_F+ \left\| \hat{\bV}_1\hat{\bO}_1^T\bE_1\left(\hat{\bO}_1\hat{\bV}_1^T-\bV_1^T\right) \right\|_F +  \left\| \left(\hat{\bV}_1\hat{\bO}_1^T-\bV_1\right) \bE_1\bV_1^T \right\|_F \\
			\leq & O_P(\sqrt{K\rho n}) + 2\left\| \bE_1\right\|\left\|\hat{\bV}_1-\bV_1\hat{\bO}_1^T \right\|_F \\
			=& O_P(\sqrt{K\rho n}) 
			+ O_P\bbb{\frac{ \rho n}{K}\cdot{\frac{K^{3/2}}{\beta_{\mathrm{min}}\sqrt{\rho n}}}} \\
			=&O_P\bbb{\sqrt{K\rho n}/\beta_{\mathrm{min}}},
			%
		}
		with probability larger than $1-O_P(K^2/n^3)$.
	\end{proof}
\end{lem}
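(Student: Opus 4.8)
The plan is to bound the eigenvalue discrepancy $\|\hat{\bE}_1 - \hat{\bO}_1^T\bE_1\hat{\bO}_1\|_F$ by tying it back to the already-controlled matrix perturbation $\|\hat{\bA}_1 - \bP_1\|_F$ through the eigenvector alignment from Lemma~\ref{ps-pertur}. Both $\hat{\bA}_1 = \hat{\bV}_1\hat{\bE}_1\hat{\bV}_1^T$ and $\bP_1 = \bV_1\bE_1\bV_1^T$ have rank $K$, so Lemma~\ref{sp-bound} already supplies $\|\hat{\bA}_1 - \bP_1\|_F = O_P(\sqrt{K\rho n})$. The three ingredients I expect to combine are this spectral bound, the eigenvector bound $\|\hat{\bV}_1 - \bV_1\hat{\bO}_1\|_F = O_P(K^{3/2}/(\beta_{\mathrm{min}}\sqrt{\rho n}))$ from Lemma~\ref{ps-pertur}, and the top-eigenvalue estimate $\|\bE_1\| = \lambda_1(\bP_1) = O_P(\rho n/K)$ from Lemma~\ref{lem:eigs}.

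The central algebraic step is a telescoping decomposition of the matrix difference,
\bas{
\hat{\bV}_1\hat{\bE}_1\hat{\bV}_1^T - \bV_1\bE_1\bV_1^T
&= \hat{\bV}_1\bbb{\hat{\bE}_1 - \hat{\bO}_1^T\bE_1\hat{\bO}_1}\hat{\bV}_1^T \\
&\quad + \hat{\bV}_1\hat{\bO}_1^T\bE_1\bbb{\hat{\bO}_1\hat{\bV}_1^T - \bV_1^T}
+ \bbb{\hat{\bV}_1\hat{\bO}_1^T - \bV_1}\bE_1\bV_1^T,
}
which I would verify just by checking that the inserted cross terms cancel in pairs. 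The observation that makes this useful is that $\hat{\bV}_1 \in \R^{\frac{n}{2}\times K}$ has orthonormal columns, so for any $K\times K$ matrix $\bM$ one has $\|\hat{\bV}_1\bM\hat{\bV}_1^T\|_F = \|\bM\|_F$ (expand the trace and use $\hat{\bV}_1^T\hat{\bV}_1 = \bI_K$). Applying this to the first summand isolates exactly the quantity of interest; rearranging and using the triangle inequality together with submultiplicativity of the Frobenius and spectral norms, plus the orthogonal invariance $\|\hat{\bO}_1\hat{\bV}_1^T - \bV_1^T\|_F = \|\hat{\bV}_1 - \bV_1\hat{\bO}_1\|_F$, collapses both cross terms into one bound and yields
\bas{
\|\hat{\bE}_1 - \hat{\bO}_1^T\bE_1\hat{\bO}_1\|_F
\leq \|\hat{\bA}_1 - \bP_1\|_F + 2\|\bE_1\|\,\|\hat{\bV}_1 - \bV_1\hat{\bO}_1\|_F.
}

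It then remains to substitute the rates. The cross-term product simplifies to $O_P\bbb{(\rho n/K)\cdot K^{3/2}/(\beta_{\mathrm{min}}\sqrt{\rho n})} = O_P(\sqrt{K\rho n}/\beta_{\mathrm{min}})$, which dominates the spectral term $O_P(\sqrt{K\rho n})$ because $\beta_{\mathrm{min}}\leq 1$, giving the claimed rate. The probability statement follows by intersecting the high-probability events of the three cited lemmas, the binding one being $1 - O_P(K^2/n^3)$ from Lemma~\ref{ps-pertur}. I expect the only genuinely delicate points to be getting the telescoping identity exactly right and invoking the norm-preservation fact $\|\hat{\bV}_1\bM\hat{\bV}_1^T\|_F = \|\bM\|_F$; once those are in hand, the remainder is routine bookkeeping of the three rates.
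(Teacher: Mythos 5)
Your proposal is correct and follows essentially the same route as the paper's proof: the identical telescoping decomposition, the triangle inequality with the same three ingredient bounds (Lemma~\ref{sp-bound}, Lemma~\ref{ps-pertur}, and $\|\bE_1\|=O_P(\rho n/K)$ from Lemma~\ref{lem:eigs}), and the same rate bookkeeping. The one thing you make explicit that the paper leaves implicit is the isometry $\|\hat{\bV}_1\bM\hat{\bV}_1^T\|_F=\|\bM\|_F$ for a matrix with orthonormal columns, which is the step that lets the first summand be peeled off; this is a welcome clarification rather than a divergence.
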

\begin{lem} \label{lem:E_half}
	If  $\rho n = \Omega(\log n)$, 
	then $\exists$ an orthogonal matrix ${\bR}_1 \in \R^{K\times K}$, together with the orthogonal matrix $\hat{\bO} \in \mathbb{R}^{K\times K}$
	of Lemma~\ref{ps-pertur} satisfies
	\bas{
		\left\| \bR_1 \bE_1^{1/2}\hat{\bO}_1 - \hat{\bE}_1^{1/2} \right\|_F=O_P\bbb{K^{3/2}/\beta_{\mathrm{min}}^2}
	}with probability larger than $1-O_P(K^2/n^3)$.
	\begin{proof}
		From Lemma \ref{E-lemma} we have 
		\bas{\left\|\hat{\bO}_1\hat{\bE}_1\hat{\bO}_1^T-{\bE}_1\right\|_F
			=\left\|\hat{\bE}_1-\hat{\bO}_1^T\bE_1\hat{\bO}_1\right\|_F
			=O_P\bbb{{\sqrt{K\rho n}}/\beta_{\mathrm{min}}}.
		}
		with probability larger than $1-O_P(K^2/n^3)$.
		
		By Lemma \ref{lem:sqrt_concentration}, there exists an orthogonal matrix ${\bR}_1 \in \R^{d\times d}$  such that:  
		\bas{
			\left\| {\hat{\bO}_1\hat{\bE}_1^{1/2}{\bR_1}}- {\bE}_1^{1/2} \right\|_F 
			&\leq \frac{\sqrt{K}\left\|\hat{\bO}_1\hat{\bE}_1\hat{\bO}_1^T-{\bE}_1\right\| \left(\sqrt{\left\|\hat{\bO}_1\hat{\bE}_1\hat{\bO}_1^T\right\|}+\sqrt{\left\|{\bE}_1\right\|}\right)}{\lambda_{K}({\bE})}\\
			&\leq \frac{\sqrt{K}\cdot O_P\bbb{{{\sqrt{K\rho n}/\beta_{\mathrm{min}}}}}
				\left(O_P\bbb{\sqrt{\frac{\rho n}{K}}}+O_P\bbb{\sqrt{\frac{\rho n}{K}}}\right)}{\Omega(\beta_{\mathrm{min}}\rho n/K)} \tag{from Lemma~\ref{lem:eigs}}\\
			&\leq O_P\bbb{ K^{3/2}/\beta_{\mathrm{min}}^2}.
		}
		Note that 
		\bas{
			\left\| \bR_1 \bE_1^{1/2}\hat{\bO}_1 - \hat{\bE}_1^{1/2} \right\|_F
			= \left\|  \bE_1^{1/2} - \bR_1^T \hat{\bE}_1^{1/2}\hat{\bO}_1^T \right\|_F
			=\left\| {\hat{\bO}_1\hat{\bE}_1^{1/2}{\bR_1}}- {\bE}_1^{1/2} \right\|_F,
		}
		so
		\bas{
			\left\| \bR_1 \bE_1^{1/2}\hat{\bO}_1 - \hat{\bE}_1^{1/2} \right\|_F=O_P\bbb{ K^{3/2}/\beta_{\mathrm{min}}^2}.
		}
	\end{proof}
\end{lem}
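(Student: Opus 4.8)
The plan is to derive the bound by applying the square-root perturbation lemma (Lemma~\ref{lem:sqrt_concentration}) to a carefully chosen pair of positive semidefinite matrices, and then to plug in the operator-norm and Frobenius-norm estimates already supplied by Lemmas~\ref{E-lemma} and~\ref{lem:eigs}. The only genuinely delicate point is the bookkeeping of the orthogonal factors: Lemma~\ref{lem:sqrt_concentration} manufactures a \emph{new} orthogonal matrix, which must be reconciled with the rotation $\hat{\bO}_1$ inherited from Lemma~\ref{ps-pertur}, and the resulting expression must be rearranged into the precise form stated.

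First I would set $\bH_1 = \hat{\bO}_1\hat{\bE}_1\hat{\bO}_1^T$ and $\bH_2 = \bE_1$, both symmetric positive semidefinite of rank $K$, with square-root factors $\bX = \hat{\bO}_1\hat{\bE}_1^{1/2}$ and $\bY = \bE_1^{1/2}$ so that $\bX\bX^T = \bH_1$ and $\bY\bY^T = \bH_2$. The reason for conjugating $\hat{\bE}_1$ by $\hat{\bO}_1$ is precisely that $\bH_1 - \bH_2 = \hat{\bO}_1\hat{\bE}_1\hat{\bO}_1^T - \bE_1$ is exactly the quantity already controlled in Lemma~\ref{E-lemma}, giving $\|\bH_1 - \bH_2\| \le \|\bH_1 - \bH_2\|_F = O_P(\sqrt{K\rho n}/\beta_{\mathrm{min}})$. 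Applying Lemma~\ref{lem:sqrt_concentration} then produces an orthogonal matrix $\bR_1$ with
\[
\left\|\hat{\bO}_1\hat{\bE}_1^{1/2}\bR_1 - \bE_1^{1/2}\right\|_F \le \frac{\sqrt{K}\,\|\bH_1-\bH_2\|\,\bbb{\sqrt{\|\bH_1\|}+\sqrt{\|\bH_2\|}}}{\lambda_K(\bE_1)}.
\]

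Next I would substitute the magnitude estimates from Lemma~\ref{lem:eigs}: $\|\bH_1\| = \lambda_1(\bA_1) = O_P(\rho n/K)$, $\|\bH_2\| = \lambda_1(\bP_1) = O_P(\rho n/K)$, and the denominator $\lambda_K(\bE_1) = \lambda_K(\bP_1) = \Omega(\beta_{\mathrm{min}}\rho n/K)$. The numerator is then $O_P\bbb{\sqrt{K}\cdot \sqrt{K\rho n}/\beta_{\mathrm{min}} \cdot \sqrt{\rho n/K}} = O_P\bbb{\sqrt{K}\,\rho n/\beta_{\mathrm{min}}}$, and dividing by $\Omega(\beta_{\mathrm{min}}\rho n/K)$ collapses the whole expression to $O_P(K^{3/2}/\beta_{\mathrm{min}}^2)$, exactly the target rate.

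The final step is purely algebraic and, together with the choice of $\bH_1$, is where I expect the main obstacle to lie: the lemma is phrased in terms of $\|\bR_1\bE_1^{1/2}\hat{\bO}_1 - \hat{\bE}_1^{1/2}\|_F$ rather than the quantity Lemma~\ref{lem:sqrt_concentration} bounds. Using orthogonal invariance of the Frobenius norm (left-multiply by $\bR_1^T$ and right-multiply by $\hat{\bO}_1^T$) together with the symmetry of the diagonal matrices $\bE_1^{1/2}$ and $\hat{\bE}_1^{1/2}$ (so that transposing the matrix inside the norm leaves it unchanged), one verifies the identity $\|\bR_1\bE_1^{1/2}\hat{\bO}_1 - \hat{\bE}_1^{1/2}\|_F = \|\hat{\bO}_1\hat{\bE}_1^{1/2}\bR_1 - \bE_1^{1/2}\|_F$. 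Once this reconciliation of the two rotations is secured, the bound is immediate, holding on the intersection of the events in Lemmas~\ref{E-lemma} and~\ref{lem:eigs}, hence with probability at least $1-O_P(K^2/n^3)$.
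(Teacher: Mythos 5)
Your proposal matches the paper's proof essentially step for step: the same choice $\bH_1=\hat{\bO}_1\hat{\bE}_1\hat{\bO}_1^T$, $\bH_2=\bE_1$ so that Lemma~\ref{E-lemma} controls $\|\bH_1-\bH_2\|$, the same application of Lemma~\ref{lem:sqrt_concentration} with the eigenvalue bounds of Lemma~\ref{lem:eigs}, and the same reconciliation of the two rotations via orthogonal invariance and symmetry of the diagonal matrices. The argument and the resulting rate $O_P(K^{3/2}/\beta_{\mathrm{min}}^2)$ are correct.
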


\begin{proof}[Proof of Lemma~\ref{lem:maxnorm}]
		Note that if $\balpha=u\bone$, $u>0$, then
	\bas{
		\D_{21}(i,i)=
		\sum_{a\in[K],j\in \cS} \rho\theta_{ia}\bB_{aa}\theta_{ja}
		&=\sum_{a \in [K]} \rho\theta_{ia}\bB_{aa}\sum_{j\in \cS}\theta_{ja}
		=\frac{\rho n}{2K}\sum_{a\in[K]} \theta_{ia}\bB_{aa}\left(1\pm O_P\bbb{\sqrt{\frac{K\log n}{n}}}\right). \tag{by Lemma \ref{lem:lln}}
	}
	
	Because
	\bas{
		\left\| \sqrt{\rho} \cdot \be_i^T\D_{21}^{-1/2}\bTheta_2 \bB^{1/2} \right\|_F^2
		&=
		\frac{\rho \left\|\be_i^T\bTheta_2 \bB^{1/2} \right\|_F^2}{\D_{21}(i,i)} \\
		&\in \frac{\rho \sum_{a \in [K]} \theta_{ia}^2\bB_{aa}}{\frac{\rho n}{2K}\sum_{a \in [K]} \theta_{ia}\bB_{aa}} 
		\ccc{\frac{1}{1+O_P({\sqrt{{K\log n}/{n}}})},\frac{1}{1-O_P({\sqrt{{K\log n}/{n}}})}} \\
		&= \frac{2K}{ n} \cdot \frac{\sum_{a \in [K]} \theta_{ia}^2\bB_{aa}}{\sum_{a \in [K]} \theta_{ia}\bB_{aa}} \ccc{{1-O_P({\sqrt{{K\log n}/{n}}})},{1+O_P({\sqrt{{K\log n}/{n}}})}},
	}
	also note that
	\bas{
		\frac{2K}{ n}\cdot\frac{\sum_{a \in [K]} \theta_{ia}^2\bB_{aa}}{\sum_{a \in [K]} \theta_{ia}\bB_{aa}} 
		&\leq \frac{2K}{ n}\cdot {\max_a \theta_{ia}} 
		\leq 
		\frac{2K}{ n},
	}
	where the first inequality is an equality when  $\forall k \in [K]$, $\theta_{ik}=\max_a \theta_{ia}$ or $0$. The second inequality becomes an equality  when $\max_a \theta_{ia}=1$ (i.e. $i$ is a pure node). This implies that the LHS of the above equation equals $2K/n$ if and only if $i$ corresponds to a pure node.
		Then we have
	\bas{
		\left\| \sqrt{\rho} \cdot \be_i^T\D_{21}^{-1/2}\bTheta_2 \bB^{1/2} \right\|_F^2
		\leq \frac{2K}{ n}\cdot {\max_a \theta_{ia}}\bbb{1+O_P({\sqrt{{K\log n}/{n}}})},
	}
	and
	\bas{\left|
		\left\| \sqrt{\rho} \cdot \be_i^T\D_{21}^{-1/2}\bTheta_2 \bB^{1/2} \right\|_F^2
		- \frac{2K}{ n}\cdot\frac{ \sum_{a \in [K]} \theta_{ia}^2\bB_{aa}}{\sum_{a \in [K]} \theta_{ia}\bB_{aa} } \right| 
		&= \frac{2K}{ n}\cdot\frac{ \sum_{a \in [K]} \theta_{ia}^2\bB_{aa}}{\sum_{a \in [K]} \theta_{ia}\bB_{aa} } \cdot O_P({\sqrt{{K\log n}/{n}}}) \\
		&= O_P\bbb{\frac{2K}{n}\cdot \sqrt{{K\log n}/{n}}}
	}
	with probability larger than $1-O(1/n^3)$. 
	
	{ So $\left\| \sqrt{\rho} \cdot \be_i^T\D_{21}^{-1/2}\bTheta_2 \bB^{1/2} \right\|_F^2$ concentrates around $\frac{2K}{n}$ for pure nodes. Note that we implicitly assume that the impure nodes have $\max_a \theta_{ia}$ bounded away from one, and hence have norm bounded away from $2K/n$. }
	
	
	\end{proof}
\begin{proof}[Proof of Theorem~\ref{thm:entrywise}]
	Denote $\bTheta_{1}=\bTheta(\cS,:)$ and
	$\bTheta_{2}=\bTheta(\bar{\cS},:)$. 
Denote $\bA_{12}=\bA(\cS,\bar{\cS})$ and $\bA_{21}=\bA(\bar{\cS},{\cS})$, $\bD_{12}$ and $\bD_{21}$ are the (row) degree matrix of $\bA_{12}$ and $\bA_{21}$.
\OurAlgo projects $\bD_{21}^{-1/2}\bA_{21}$ onto $\hat{\bV}_1\hat{\bE}_1^{-1/2}$, and $\bD_{12}^{-1/2}\bA_{12}$ onto $\hat{\bV}_2\hat{\bE}_2^{-1/2}$. 

	Now, $\bV_1 \bE_1 \bV_1^T = \bP_1 = \rho\bTheta_1 \bB \bTheta_1^T$,
	with both $\bE_1$ and $\bB$ diagonal. This imples
	that there exists an orthogonal matrix $\bQ_1$  such that
	$\bV_1 \bE_1^{1/2} \bQ_1 = \sqrt{\rho} \cdot \bTheta_1 \bB^{1/2}$ (by Lemma A.1 of \cite{tang2013universally}).

	Also, as shown in Lemmas~\ref{ps-pertur}
	and~\ref{lem:E_half}, there exists orthogonal matrices $\hat{\bO}_1$ and and $\bR_1$ such that  
	\bas{
		\left\|\hat{\bV}_1-\bV_1\hat{\bO}_1\right\|_F=O_P\bbb{{\frac{K^{3/2}}{\beta_{\mathrm{min}}\sqrt{\rho n}}}},
		\quad \mbox{and } \quad
		\left\| \bR_1 \bE_1^{1/2}\hat{\bO}_1 - \hat{\bE}_1^{1/2} \right\|_F=O_P\bbb{ K^{3/2}/\beta_{\mathrm{min}}^2}
	}
	with probability larger than $1-O(K^2/n^3)$.

	Then we have:
	\bas{
		& \left\|\be_i^T\bP_{21}\bV_1\hat{\bO}_1 \hat{\bE}_1^{-1/2}-\sqrt{\rho} \cdot\be_i^T\bTheta_2 \bB^{1/2} \bQ_1^T \bR_1^T \right\|_F \\ =&\left\|\rho\cdot\be_i^T\bTheta_{2}\bB\bTheta_{1}^T\bV_1\hat{\bO}_1 \hat{\bE}_1^{-1/2}-\sqrt{\rho} \cdot\be_i^T\bTheta_2 \bB^{1/2} \bQ_1^T \bR_1^T \right\|_F 
		\tag{by $\bP_{21}=\rho\bTheta_{2}\bB\bTheta_{1}^T$} \\
		=& \left\|\rho\cdot\be_i^T\bTheta_{2}\bB^{1/2}\left(\bB^{1/2}\bTheta_{1}^T\right)\bV_1\hat{\bO}_1 \hat{\bE}_1^{-1/2}-\sqrt{\rho} \cdot\be_i^T\bTheta_2 \bB^{1/2} \bQ_1^T \bR_1^T \right\|_F \\
		=& \sqrt{\rho} \cdot \left\|\be_i^T\bTheta_{2}\bB^{1/2}\left(\bV_1\bE_1^{1/2}\bQ_1\right)^T\bV_1\hat{\bO}_1 \hat{\bE}_1^{-1/2}-\be_i^T\bTheta_2 \bB^{1/2} \bQ_1^T \bR_1^T \right\|_F \tag{by Lemma A.1 of \cite{tang2013universally}}\\
		=& \sqrt{\rho} \cdot \left\|\be_i^T\bTheta_{2}\bB^{1/2}\bQ_1^T\bR_1^T\left(\bR_1 \bE_1^{1/2}\hat{\bO}_1 \right) \hat{\bE}_1^{-1/2}-\be_i^T\bTheta_2 \bB^{1/2} \bQ_1^T \bR_1^T \right\|_F \\	
		=& \left\| \sqrt{\rho} \cdot \be_i^T\bTheta_{2}\bB^{1/2}\bQ_1^T\bR_1^T\left(\bR_1 \bE_1^{1/2}\hat{\bO}_1 - \hat{\bE}_1^{1/2} \right) \hat{\bE}_1^{-1/2} \right\|_F \\	
		\leq& \left\| \sqrt{\rho} \cdot\be_i^T\bTheta_{2}\bB^{1/2}\bQ_1^T\bR_1^T\right\|_F
		\left\| \bR_1 \bE_1^{1/2}\hat{\bO}_1 - \hat{\bE}_1^{1/2} \right\| \left\|\hat{\bE}_1^{-1/2} \right\| \\
		\leq&  \left\| \sqrt{\rho} \cdot\be_i^T\bTheta_{2}\bB^{1/2}\bQ_1^T\bR_1^T\right\|_F \cdot  O_P\bbb{ K^{3/2}/\beta_{\mathrm{min}}^2} \cdot O_P\bbb{\sqrt{\frac{K}{\beta_{\mathrm{min}}\rho n}}} \tag{by Lemmas~\ref{lem:maxnorm}, \ref{cod}, \ref{lem:eigs} and \ref{lem:E_half}} \\
	}
	\begin{equation}
		\Longrightarrow \left\|\be_i^T\bP_{21}\bV_1\hat{\bO}_1 \hat{\bE}_1^{-1/2}-\sqrt{\rho} \cdot\be_i^T\bTheta_2 \bB^{1/2} \bQ_1^T \bR_1^T \right\|_F
		=\left\| \sqrt{\rho} \cdot\be_i^T\bTheta_{2}\bB^{1/2}\bQ_1^T\bR_1^T\right\|_F \cdot O_P\bbb{{\frac{K^2}{  \beta_{\mathrm{min}}^{5/2}\sqrt{\rho n} }}}.  \label{equ:norm_tmp}
	\end{equation}
	
%
	
	Now that 
	\bas{
		& \left\|  \be_i^T\bD_{21}^{-1/2}\bA_{21}\hat{\bV}_1\hat{\bE}_1^{-1/2}-\sqrt{\rho} \cdot\be_i^T\D_{21}^{-1/2}\bTheta_2 \bB^{1/2} \bQ_1^T \bR_1^T \right\|_F \\
		\leq& \left\|  \be_i^T\bA_{21}\hat{\bV}_1\hat{\bE}_1^{-1/2}\left(1+O_P\bbb{\sqrt{K\log n/ n\rho}}\right)-\sqrt{\rho} \cdot\be_i^T\bTheta_2 \bB^{1/2} \bQ_1^T \bR_1^T \right\|_F / \sqrt{\D_{21}(i,i)} \tag{by Lemma \ref{cod}} \\
		\leq& 
		\text{ \footnotesize 
		$
		\frac{\bbb{1+O_P\bbb{\sqrt{\frac{K\log n}{n\rho}}}} \cdot \left\|  \be_i^T \left[ \left(\bA_{21}-\bP_{21}\right)\hat{\bV}_1 + \bP_{21}\left(\hat{\bV}_1-\bV_1\hat{\bO}_1\right) + \bP_{21}\bV_1\hat{\bO}_1   \right] \hat{\bE}_1^{-1/2} 
			- \sqrt{\rho} \cdot \be_i^T\bTheta_2 \bB^{1/2} \bQ_1^T \bR_1^T \right\|_F}
		{\sqrt{\D_{21}(i,i)}} 
		$
		}
		\\
		&+ O_P\bbb{\sqrt{K\log n/ n\rho}} \cdot \left\| \sqrt{\rho} \cdot\be_i^T\bTheta_2 \bB^{1/2} \bQ_1^T \bR_1^T \right\|_F/{\sqrt{\D_{21}(i,i)}} \\
		\leq &  \left(1+O_P\bbb{\sqrt{K\log n/ n\rho}}\right) \cdot  \left\{ \left\|  \be_i^T \left(\bA_{21}-\bP_{21}\right)\hat{\bV}_1\hat{\bE}_1^{-1/2}\right\|_F + \left\|\be_i^T\bP_{21}\left(\hat{\bV}_1-\bV_1\hat{\bO}_1\right)\hat{\bE}_1^{-1/2}\right\|_F \right. \\
		&\left. +\left\|\be_i^T\bP_{21}\bV_1\hat{\bO}_1 \hat{\bE}_1^{-1/2}-\sqrt{\rho} \cdot\be_i^T\bTheta_2 \bB^{1/2} \bQ_1^T {\bR}_1^T \right\|_F \right\} / {\sqrt{\D_{21}(i,i)}} +O_P\bbb{ K\sqrt{\log n / n^2 \rho}} \tag{by Lemmas~\ref{lem:maxnorm} and \ref{cod}} \\
		\leq& \bbb{1+O_P\bbb{\frac{1}{\beta_{\mathrm{min}}}\sqrt{K\log n/ n\rho}}}
		\cdot \left\{ O_P\bbb{\sqrt{K{\log n}}} \cdot  
		O_P\bbb{\sqrt{\frac{K}{\beta_{\mathrm{min}}\rho n}}}
		\right. \tag{by Azuma's and Lemma~\ref{lem:eigs}} \\
		&\left.+ O_P\bbb{\sqrt{ \frac{\rho n}{K}}} \cdot O_P\bbb{{\frac{K^{3/2}}{\beta_{\mathrm{min}}\sqrt{\rho n}}}} \cdot O_P\bbb{\sqrt{\frac{K}{\beta_{\mathrm{min}}\rho n}}}
		\right\}/{\sqrt{\beta_{\mathrm{min}}\rho n/K}} \tag{by Lemmas~\ref{cod}, \ref{ps-pertur}, and Eq.~\eqref{equ:norm_tmp}}  
		\\
		&+ \sqrt{\frac{2K}{n}} \cdot O_P\bbb{{\frac{K^2}{  \beta_{\mathrm{min}}^{5/2}\sqrt{\rho n} }}}
		+ O_P\bbb{ K\sqrt{\log n / n^2 \rho}} \\
		=&  O_P\bbb{\frac{K^{5/2}\sqrt{\log n}}{\beta_{\mathrm{min}}^{5/2}\rho n}}.
	}
	In the last step we use the fact that $ \left\| \be_i^T \left(\bA_{21}-\bP_{21}\right)\hat{\bV}_1\right\|_F^2$ is a sum of $K$ projections of $\be_i^T\left(\bA_{21}-\bP_{21}\right)$ on a fixed unit vector (since the eigenvectors come from the different partition of the graph). Now Azuma's inequality gives $ \left\| \be_i^T \left(\bA_{21}-\bP_{21}\right)\hat{\bV}_1\right\|_F=O_P(\sqrt{K\log n})$ with probability larger than $1-O(1/n^3)$. 
	
	Now as
	\bas{
		\left\|\sqrt{\rho} \cdot\be_i^T\D_{21}^{-1/2}\bTheta_2 \bB^{1/2}\right\|_F^2
		&=\frac{\rho\left\|{\be_i^T\bTheta_2 \bB^{1/2} }\right\|_F^2}{{\D_{21}(i,i)}}
		=\frac{\rho\cdot\be_i^T\bTheta_2 \bB \bTheta_2^T \be_i}{\D_{21}(i,i)}\\
		&= \frac{\bP_{2}(i,i)}{\D_{21}(i,i)}
		=\Omega\bbb{\frac{\rho}{\rho n/K}}
		=O_P\bbb{\frac{K}{n}},
	}
	let $\bm{O}=\bQ_1^T\bR_1^T$, then $\forall i$,
	\bas{
		\frac{
			\left\| \be_i^T\bD_{21}^{-1/2}\bA_{21}\hat{\bV}_1\hat{\bE}_1^{-1/2} -
			\sqrt{\rho} \cdot\be_i^T\D_{21}^{-1/2}\bTheta_2 \bB^{1/2} \bm{O} \right\|_F
		}{
			\left\|\sqrt{\rho}\cdot\be_i^T\D_{21}^{-1/2}\bTheta_2 \bB^{1/2}\right\|_F
		} 
		&=O_P\bbb{\frac{K^{5/2}\sqrt{\log n}}{\beta_{\mathrm{min}}^{5/2}\rho n} \cdot \sqrt{\frac{n}{K}}} \\
		&= O_P\left({\frac{{ K^2\sqrt{\log n}}}{\beta_{\mathrm{min}}^{5/2}\rho \sqrt{n}}}\right)
		}
	with probability larger than $1-n\cdot O(K^2/n^3)$=$1- O(K^2/n^2)$.
%
\end{proof}

	\section{Correctness of Pure node clusters}

\begin{proof}[Proof of Lemma~\ref{lem:errinPure}]
	Recall that $\max_i\|\bX_i\|$ concentrates around $\sqrt{2K/n}$ and this is achieved at the pure nodes.
	For ease of analysis let us introduce $\hy:=\sqrt{n/2K}\hx$ and $\bY=\sqrt{n/2K}\bX$.
	Recall that from Theorem~\ref{thm:entrywise} we have entry-wise consistency on 
	$\|\hy_i-\bY_i\bm{O}\| \leq \epsilon'= 
	\rowwise
	$ with probability larger than $1- O_P(K^2/n^2)$.
	
	Let $\epsilon_{\text{norm}}=O_P\bbb{\sqrt{\frac{K\log n}{n}}}=O_P(\epsilon')$ be the error of the norm of pure nodes in Lemma~\ref{lem:maxnorm}. 
	Then $\forall i\in \mathcal{F}$, 
	\bas{
		\|\hx_i\|\geq (1-\epsilon_0)\max_j \|\hx_j\|\geq (1-\epsilon_0)(1-\epsilon')\max_j \|\bX_j\|
		\geq (1-\epsilon_0)(1-\epsilon')(1-\epsilon_{\mathrm{norm}})\sqrt{2K/n}.
	}
	Hence we have a series of inequalities,
	\bas{
		(1-\epsilon_0)(1-\epsilon')(1-\epsilon_{\mathrm{norm}})  \leq \|\hy_i\| \leq \|\hy_i-\bY_i\bm{O}\|+\|\bY_i\|\leq \epsilon'+\|\bY_i\|.
	}
	
	Hence
	\bas{
		\|\bY_i\|^2 \geq (1-\epsilon_0-2\epsilon'-\epsilon_{\mathrm{norm}})^2\geq 1-2(\epsilon_0+2\epsilon'+\epsilon_{\mathrm{norm}})
	}
	
	And from the proof of Lemma \ref{lem:maxnorm},
	\bas{
		1-2(\epsilon_0+2\epsilon'+\epsilon_{\mathrm{norm}}) 
		&\leq \|\bY_i\|^2 
		\leq  \frac{\sum_{a\in[K]} \theta_{ia}^2\bB_{aa}}{\sum_{a\in[K]} \theta_{ia}\bB_{aa}} 
		\leq \max_a \theta_{ia} \bbb{1+\epsilon_{\mathrm{norm}}} \\
		\Longrightarrow \max_{a} \theta_{ia} &\geq 1-2(\epsilon_0+2\epsilon'+1.5\epsilon_{\mathrm{norm}})=1-O_P(\epsilon_0+\epsilon')
	}
	for $\epsilon=2(\epsilon_0+2\epsilon'+1.5\epsilon_{\mathrm{norm}})=O_P(\epsilon_0+\epsilon')$,   with probability larger than $1- O_P(K^2/n^2)$. 
	
	{Note that $\|\bX_i\|^2$ of those nearly pure nodes with $\max_a \theta_{ia}\geq 1-\epsilon$ 
	also concentrate around $\frac{2K}{n}$. These nearly pure nodes can also be used along with the pure nodes to recover the MMSB model asymptotically correctly.}
\end{proof}

\begin{lem} \label{lem:degree_sample_pure}
	Let $\mathcal{F}$ be the set of nodes with $\|\hx_i\|\geq (1-\epsilon_0)\max_j \|\hx_j\|$. Then when $\epsilon_0=O_P(\epsilon')$ and $\epsilon=O_P(\epsilon_0+\epsilon')$ from Lemma~\ref{lem:errinPure},
	\bas{
		\min_{i\in \mathcal{F}}\bD_2(i,i)= \frac{\rho n}{2K}\bbb{\beta_{\mathrm{min}}\pm O_P(\epsilon)},\quad
		\mbox{and} \quad
		\max_{i\in \mathcal{F}}\bD_2(i,i)= \frac{\rho n}{2K}\bbb{1\pm O_P(\epsilon)}
	} 
	with probability larger than $1-O_P(K^2/n^2)$.
	\begin{proof}
		From Lemma \ref{lem:errinPure} we know that $\forall i \in \mathcal{F}$, $\exists a_i$ that $\theta_{ia_i}\geq 1-\epsilon$, where $\epsilon=O_P(\epsilon_0+\epsilon')=O_P\bbb{\epsilon'}$.
		Then
		\bas{\D_2(i,i)
			&=\sum_{j=1}^{n/2} \bP_{ij} 
			= \sum_{j=1}^{n/2} \sum_{\ell=1}^{K} \rho \theta_{i\ell}^{(2)} \bB_{\ell\ell} {\theta}_{j\ell}^{(2)}
			= \rho \sum_{\ell=1}^{K}  \theta_{i\ell}^{(2)} \bB_{\ell\ell}\sum_{j=1}^{n/2}  {\theta}_{j\ell}^{(2)} \\
			&= \rho \sum_{\ell=1}^{K}  \theta_{i\ell}^{(2)} \bB_{\ell\ell} \frac{n}{2K}\bbb{1{ \pm}O_P\bbb{\sqrt{\frac{K\log n}{n}}}} \tag{from Lemma~\ref{lem:lln}}  \\
			&= \frac{\rho n}{2K}\bbb{(1-O_P\bbb{\epsilon})\bB_{a_i a_i}+O_P(\epsilon)}\bbb{1\pm O_P\bbb{\sqrt{\frac{K\log n}{n}}}} \\
			&=\frac{\rho n}{2K}\bbb{\bB_{a_i a_i}\pm O_P(\epsilon)}.
		}
		Using the proof in Lemma \ref{cod}, we have 
		${
			\bD_2(i,i)\in \D_2(i,i)\left[{1-O_P\bbb{\sqrt{\frac{K\log n}{\rho n}}}},{1+O_P\bbb{\sqrt{\frac{K\log n}{\rho n}}}}\right],
		}$
		so
		\bas{
			\bD_2(i,i)=\frac{\rho n}{2K}\bbb{\bB_{a_i a_i}\pm O_P(\epsilon)}.
		}
		Then
		\bas{
			\min_{i\in \mathcal{F}}\bD_2(i,i)&= \frac{\rho n}{2K}\bbb{\beta_{\mathrm{min}}\pm O_P(\epsilon)},\\
			\max_{i\in \mathcal{F}}\bD_2(i,i)&= \frac{\rho n}{2K}\bbb{1\pm O_P(\epsilon)}
		} 
		with probability larger than $1-O_P(K^2/n^2)$.
	\end{proof}
\end{lem}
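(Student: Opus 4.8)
The plan is to show that every node selected into $\mathcal{F}$ is \emph{nearly pure}, to compute its population degree in closed form, to transfer this to the empirical degree, and then to read off the extremes over $\mathcal{F}$. First I would invoke Lemma~\ref{lem:errinPure}: for each $i\in\mathcal{F}$ it supplies a dominant community $a_i$ with $\theta_{i a_i}=\max_a\theta_{ia}\geq 1-\epsilon$ for $\epsilon=O_P(\epsilon_0+\epsilon')$, so that the off-peak mass obeys $\sum_{\ell\neq a_i}\theta_{i\ell}\leq\epsilon$ because each row of $\bTheta$ is a probability vector.

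Next I would compute the population degree $\D_2(i,i)=\sum_j\bP_{ij}$. Expanding $\bP_{ij}=\rho\sum_\ell\theta_{i\ell}\bB_{\ell\ell}\theta_{j\ell}$ (here using that $\bB$ is diagonal) and applying Lemma~\ref{lem:lln} with $\alpha_a=\alpha_0/K$ to each column sum $\sum_j\theta_{j\ell}=\frac{n}{2K}(1\pm O_P(\sqrt{K\log n/n}))$ yields $\D_2(i,i)=\frac{\rho n}{2K}(\sum_\ell\theta_{i\ell}\bB_{\ell\ell})(1\pm O_P(\sqrt{K\log n/n}))$. The weighted sum is then pinned down by near purity together with the normalization $\bB_{\ell\ell}\leq\max_a\bB_{aa}=1$: from $\bB_{a_i a_i}\theta_{i a_i}\leq\sum_\ell\theta_{i\ell}\bB_{\ell\ell}\leq\bB_{a_i a_i}\theta_{i a_i}+\sum_{\ell\neq a_i}\theta_{i\ell}$ I get $\sum_\ell\theta_{i\ell}\bB_{\ell\ell}=\bB_{a_i a_i}\pm O_P(\epsilon)$, and hence $\D_2(i,i)=\frac{\rho n}{2K}(\bB_{a_i a_i}\pm O_P(\epsilon))$.

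To pass to the sample degree I would reuse the Chernoff argument inside the proof of Lemma~\ref{cod}, which gives $\bD_2(i,i)\in\D_2(i,i)[1\pm O_P(\sqrt{K\log n/(\rho n)})]$. Because the definition of $\epsilon'$ makes $\sqrt{K\log n/(\rho n)}=O_P(\epsilon)$ (recall $\beta_{\mathrm{min}},\rho\leq 1$), this multiplicative sampling error is dominated and collapses into the existing slack, so $\bD_2(i,i)=\frac{\rho n}{2K}(\bB_{a_i a_i}\pm O_P(\epsilon))$ uniformly over $i\in\mathcal{F}$. Finally, the remark following Lemma~\ref{lem:errinPure} (taking $\epsilon_0\geq 2\epsilon'$) ensures that $\mathcal{F}$ contains a pure node of every community, so the dominant labels $\{a_i:i\in\mathcal{F}\}$ exhaust $[K]$; minimizing and maximizing over $i$ then select $\min_a\bB_{aa}=\beta_{\mathrm{min}}$ and $\max_a\bB_{aa}=1$ respectively, giving the two claimed estimates. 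A union bound over the relevant nodes, folding in the failure probabilities of Lemmas~\ref{lem:errinPure}, \ref{lem:lln} and \ref{cod}, yields the overall probability $1-O_P(K^2/n^2)$.

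The step I expect to be the main obstacle is the error bookkeeping of the last two paragraphs: I must keep track of two genuinely different error sources — the $O_P(\epsilon)$ near-purity slack from Lemma~\ref{lem:errinPure} and the $O_P(\sqrt{K\log n/(\rho n)})$ sampling fluctuation from Lemma~\ref{cod} — and verify that, under the stated scaling of $\epsilon$, both collapse into a single $O_P(\epsilon)$ term \emph{uniformly} over all of $\mathcal{F}$. Once near purity and the column-sum concentration are in hand, the algebra reducing $\sum_\ell\theta_{i\ell}\bB_{\ell\ell}$ to $\bB_{a_i a_i}\pm O_P(\epsilon)$ is routine.
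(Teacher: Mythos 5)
Your proposal is correct and follows essentially the same route as the paper's proof: near-purity from Lemma~\ref{lem:errinPure}, the population-degree computation via Lemma~\ref{lem:lln} giving $\D_2(i,i)=\frac{\rho n}{2K}\bbb{\bB_{a_ia_i}\pm O_P(\epsilon)}$, Chernoff concentration from Lemma~\ref{cod} to pass to $\bD_2(i,i)$, and then the extremes over $\mathcal{F}$. In fact you make explicit one step the paper leaves implicit — that $\mathcal{F}$ contains pure nodes of every community (via $\epsilon_0\geq 2\epsilon'$), which is genuinely needed for the min and max to be \emph{attained} at $\beta_{\mathrm{min}}$ and $1$ rather than merely bounded.
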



\smallskip
\begin{proof}[Proof of Theorem~\ref{lem:clustering}]
	To prove this theorem, it is equivalent to prove that the upper bound of Euclidean distances within each community's (nearly) pure nodes is far more smaller than the lower bound of Euclidean distances between different communities' (nearly) pure nodes. 
	
	 Recall that  from Lemma~\ref{lem:errinPure}, for $i\neq j\in \mathcal{F}$, $\exists$ $a,b \in [K]$, such that $\theta_{ia}\geq 1-\epsilon$ and  $\theta_{jb}\geq 1-\epsilon$. 
	Note that $\epsilon=O_P(\epsilon_0+\epsilon')$ for $\epsilon'=\rowwise$ and $\epsilon_0=O_P(\epsilon')$.
	
	Using a similar argument as in the proof of Lemma \ref{lem:maxnorm}, we have:
	
		\bigskip \noindent
		1. if $a\neq b$, 
		\bas{
			\left\|\bY_i-\bY_j\right\|_2^2
			&\geq 
			\text{\footnotesize
				$
				\left[ \left( \frac{\theta_{ia}\bB_{aa}^{1/2}}{\sqrt{\sum_{k} \theta_{ik}\bB_{kk}}} - \frac{\theta_{ja}\bB_{aa}^{1/2}}{\sqrt{\sum_{k} \theta_{jk}\bB_{kk}}}\right)^2 
				+
				\left( \frac{\theta_{ib}\bB_{bb}^{1/2}}{\sqrt{\sum_{k} \theta_{ik}\bB_{kk}}} - \frac{\theta_{jb}\bB_{bb}^{1/2}}{\sqrt{\sum_{k} \theta_{jk}\bB_{kk}}}\right)^2
				\right] \cdot \bbb{1-O_P\bbb{\sqrt{\frac{K\log n}{n}}}}^2
				$
			}
			\\
			&\geq  
			\ccc{
				\bB_{aa}\left( \frac{1-\epsilon}{{\sqrt{\beta_{\mathrm{max}}}}} - \frac{\epsilon}{{\sqrt{\beta_{\mathrm{min}}}}}\right)^2
				+
				\bB_{bb}\left( \frac{1-\epsilon}{{\sqrt{\beta_{\mathrm{max}}}}} - \frac{\epsilon}{{\sqrt{\beta_{\mathrm{min}}}}}\right)^2
			} \cdot \bbb{1-O_P\bbb{\sqrt{\frac{K\log n}{n}}}}
			\\
			& \geq  
			\ccc{
				2\beta_{\mathrm{min}}\left( \frac{1-\epsilon}{{\sqrt{\beta_{\mathrm{max}}}}} - \frac{\epsilon}{{\sqrt{\beta_{\mathrm{min}}}}}\right)^2 
			} \cdot \bbb{1-O_P\bbb{\sqrt{\frac{K\log n}{n}}}}
			\\
			&\geq \ddd{2\frac{\beta_{\mathrm{min}}}{\beta_{\mathrm{max}}}\left[ 1 - \left( 1+ \sqrt{\frac{\beta_{\mathrm{max}}}{\beta_{\mathrm{min}}}}\right)\epsilon \right]^2 }
			\cdot \bbb{1-O_P\bbb{\sqrt{\frac{K\log n}{n}}}}
			\\
			&
			=
			\ddd{2\frac{\beta_{\mathrm{min}}}{\beta_{\mathrm{max}}}\left[ 1 - 2\left( 1+ \sqrt{\frac{\beta_{\mathrm{max}}}{\beta_{\mathrm{min}}}}\right)\epsilon + O_P(\epsilon^2) \right] }
			\cdot \bbb{1-O_P\bbb{\sqrt{\frac{K\log n}{n}}}}
			\\
			& = 
			{
				2\frac{\beta_{\mathrm{min}}}{\beta_{\mathrm{max}}}\bbb{1-O_P\bbb{\epsilon \sqrt{\frac{\beta_{\mathrm{max}}}{\beta_{\mathrm{min}}}}}}.
			}
		}
		So,
		\bas{
			\left\|\bX_i-\bX_j\right\|_2
			\geq \sqrt{2\frac{\beta_{\mathrm{min}}}{\beta_{\mathrm{max}}}}\sqrt{\frac{2K}{n}}\bbb{1-O_P\bbb{\epsilon \sqrt{\frac{\beta_{\mathrm{max}}}{\beta_{\mathrm{min}}}}}},
		}
		and then,
		\bas{
			\left\|\hx_i-\hx_j\right\|_2
			&\geq \left\|\bX_i-\bX_j\right\|_2 
				- \left\|\hx_i-\bX_i\right\|_2
				- \left\|\hx_j-\bX_j\right\|_2 \\
			&\geq 
			\sqrt{2\frac{\beta_{\mathrm{min}}}{\beta_{\mathrm{max}}}}\sqrt{\frac{2K}{n}}\bbb{1-O_P\bbb{\epsilon \sqrt{\frac{\beta_{\mathrm{max}}}{\beta_{\mathrm{min}}}}}}
			- 2\left\|\bX_i\right\|\cdot \epsilon' \\
			&\geq  \sqrt{2\frac{\beta_{\mathrm{min}}}{\beta_{\mathrm{max}}}}\sqrt{\frac{2K}{n}}\bbb{1-O_P\bbb{\epsilon \sqrt{\frac{\beta_{\mathrm{max}}}{\beta_{\mathrm{min}}}}}}
			- 2\sqrt{\frac{2K}{n}}\cdot \bbb{1+O_P(\epsilon)}\cdot \epsilon'\\
			&= 2 \sqrt{\frac{K\beta_{\mathrm{min}}}{n}}-O_P\bbb{\epsilon\sqrt{\frac{K}{n}}}. \tag{$\beta_{\mathrm{max}}=1$ by definition}
		}
		2. if $a = b$, first of all we have 
		\bas{
			(1-\epsilon)\beta_a \leq \sum_{k}\theta_{ik}\bB_{kk}\leq \beta_a + \epsilon\sum_{k\neq a} \beta_k,
		}
		then 
		\bas{
			\left\|\bY_i-\bY_j\right\|_2^2 
			&= \sum_{l}\left( \frac{\theta_{il}\bB_{ll}^{1/2}}{\sqrt{\sum_{k} \theta_{ik}\bB_{kk}}} - \frac{\theta_{jl}\bB_{ll}^{1/2}}{\sqrt{\sum_{k} \theta_{jk}\bB_{kk}}}\right)^2 
			\cdot \bbb{1+O_P\bbb{\sqrt{\frac{K\log n}{n}}}}^2
			\\
			& \leq \left[\bbb{\frac{\bB_{aa}^{1/2}}{\sqrt{(1-\epsilon)\beta_a}} - \frac{(1-\epsilon)\bB_{aa}^{1/2}}{\sqrt{\beta_a + \epsilon\sum_{k\neq a} \beta_k}}}^2 + \sum_{k\neq a} \frac{\beta_{\mathrm{max}}}{\beta_{\mathrm{min}}}\epsilon^2\right] 
			\cdot \bbb{1+O_P(\epsilon)}\\
			&=
			\text{\small 
			$
			\left\{
			\left[ 1+\frac{\epsilon}{2} + O_P(\epsilon^2) - (1-\epsilon)\left(1-\frac{\epsilon}{2}\frac{\sum_{k\neq a}\beta_k}{\beta_a}+O_P(\epsilon^2)\right)\right]^2 + (K-1)\frac{\beta_{\mathrm{max}}}{\beta_{\mathrm{min}}}\epsilon^2 
			\right\} 
			\cdot \bbb{1+O_P(\epsilon)} 
			$
			}
			\\
			& \leq
			\left\{
			 \left[\left(\frac{3}{2}+\frac{\sum_{k\neq a} \beta_{\mathrm{max}}}{2\beta_{\mathrm{min}}}\right)\epsilon + O_P(\epsilon^2)\right]^2 + O_P\bbb{K\frac{\beta_{\mathrm{max}}}{\beta_{\mathrm{min}}}\epsilon^2}
			\right\}
			\cdot \bbb{1+O_P(\epsilon)} \\
			& = O_P\bbb{K\frac{\beta_{\mathrm{max}}}{\beta_{\mathrm{min}}}\epsilon^2}.
		}
	So,
	\bas{
		\left\|\bX_i-\bX_j\right\|_2
		\leq \sqrt{\frac{2K}{n}}O_P\bbb{\epsilon\sqrt{K\frac{\beta_{\mathrm{max}}}{\beta_{\mathrm{min}}}}}
	}
	and then,
	\bas{
		\left\|\hx_i-\hx_j\right\|_2 
		&\leq \left\|\bX_i-\bX_j\right\|_2 
		+ \left\|\hx_i-\bX_i\right\|_2
		+ \left\|\hx_j-\bX_j\right\|_2 \\
		&\leq 
		\sqrt{\frac{2K}{n}}O_P\bbb{\epsilon\sqrt{K\frac{\beta_{\mathrm{max}}}{\beta_{\mathrm{min}}}}}
		+ 2\left\|\bX_i\right\|\cdot \epsilon' \\
		&\leq  \sqrt{\frac{2K}{n}}O_P\bbb{\epsilon\sqrt{K\frac{\beta_{\mathrm{max}}}{\beta_{\mathrm{min}}}}}
		+ 2\sqrt{\frac{2K}{n}}\cdot \bbb{1+O_P(\epsilon)}\cdot \epsilon'\\
		&=  O_P\bbb{\epsilon\sqrt{ \frac{K^2}{n\beta_{\mathrm{min}}}}} . \tag{$\beta_{\mathrm{max}}=1$ by definition}
	}

Now we can see $\frac{1}{2}\sqrt{\frac{K\beta_{\mathrm{min}}}{n}}$ can be used as a threshold to separate different clusters. However, in the algorithm we do not know ${\beta_{\mathrm{min}}}$ in advance, so we need to approximate it with some computable statistics. From Lemma \ref{lem:degree_sample_pure}, we know that $\bD_2(i,i)=\frac{\rho n}{2K}\bbb{\bB_{a_i a_i}\pm O_P(\epsilon')}$ when $\theta_{ia_i}\geq 1-\epsilon$.
So $\min_{i\in \mathcal{F}}\bD_2(i,i)$ and $\max_{i\in \mathcal{F}}\bD_2(i,i)$  can be used to estimate $\beta_{\mathrm{min}}$. 
\bas{
	\tau=\sqrt{\frac{K}{4n}\frac{\min_{i\in \mathcal{F}}\bD_2(i,i)}{\max_{i\in \mathcal{F}}\bD_2(i,i)}}
	= \sqrt{\frac{K}{4n}\frac{\rho n \bbb{\beta_{\mathrm{min}}\pm O_P(\epsilon)} /(2K)}{\rho n \bbb{1\pm O_P(\epsilon)} /(2K)} }
	= \frac{1}{2}\sqrt{\frac{K\beta_{\mathrm{min}}}{n}}\pm O_P\bbb{ \epsilon\sqrt{\frac{K}{n\beta_{\mathrm{min}}}}}.
}
Clearly,
\bas{
	2 \sqrt{\frac{K\beta_{\mathrm{min}}}{n}}\pm O_P\bbb{\epsilon\sqrt{\frac{K}{n}}}
	> 2\tau
	\gg  O_P\bbb{\epsilon\sqrt{ \frac{K^2}{n\beta_{\mathrm{min}}}}},
}
which means $\text{\clusteringAlgo}(\hx(\mathcal{F},:),\tau$)  can exactly give us $K$ clusters of different (nearly) pure nodes and return one (nearly) pure node from each of the $K$ clusters with probability larger than $1- O_P(K^2/n^2)$.
\end{proof}

	\section{Consistency of inferred parameters}
	


\begin{proof}[Proof of Theorem~\ref{thm:hxpinv}]
Let $\hy:=\sqrt{n/2K}\hx$ and $\bY=\sqrt{n/2K}\bX$.
Let $\epsilon=O_P(\epsilon_0+\epsilon')=O\bbb{\epsilon'}$ from Lemma~\ref{lem:errinPure}, 
where we show that $\|\bY_i\|^2\geq 1-\epsilon$ for $i\in \cS_P$. 
Furthermore for ease of exposition let us assume that the pure nodes are arranged so that $\hat{\bTheta}_{2p}=\hat{\bTheta}_{2}(\cS_p,:)$ is close to an identity matrix, i.e., the columns are arranged with a particular permutation.

Thus $\|\byp\|_F^2=\sum_i \|\byp(i,:)\|^2\geq K(1-\epsilon)$ and so $\|\byp\|_F\geq \sqrt{K(1-\epsilon)}$.

We have also shown that 
$\|\byp(i,:)\|^2\leq 1+\epsilon$, so $\|\byp\|_F\leq \sqrt{K}\bbb{1+\epsilon}$.

We will use
\ba{
	\label{eq:xpnorm}
	\|\hyp^{-1}-\bbb{\byp\bm{O}}^{-1}\|_F\leq \|\bbb{\byp\bm{O}}^{-1}(\byp\bm{O}-\hyp)\hyp^{-1}\|_F\leq  \|\byp^{-1}\|_F\|\byp\bm{O}-\hyp\|_F\|\hyp^{-1}\|.
}

First we will prove a bound on $\|\hyp^{-1}\|$. Let $\hat{\sigma}_i$ be the $i^{th}$ singular value of $\hyp$,
\ba{
	\label{eq:invnorm}
	\|\hyp^{-1}\|= \frac{1}{\hat{\sigma}_K}.
}
We can bound $\hat{\sigma}_K$ by bounding $\sigma_K$. In what follows we use $M_{1p}$ to denote the rows of $M_1$ indexed by $\cS_p$ when $M_1$ is $n/2\times K$ and by the square submatrix $M_1(\cS_p,\cS_p)$ is when $M_1$ is $n/2\times n/2$.
Note that $\|\bTheta_{2p}-I\|_F= 
\sqrt{K}\epsilon
$, $\|\bB^{1/2}\|_F=O_P(\sqrt{K})$, $\|\bTheta_{2p}\|_F=O_P(\sqrt{K})$ and $\|\D^{-1}_{21p}\|_F=O_P({K^{3/2}}/\beta_{\mathrm{min}}\rho n),$
\bas{
	\sigma_i^2&=\lambda_i(\byp\byp^T)=\frac{\rho n}{2K} \lambda_i(\D_{21p}^{-1/2}\bTheta_{2p} \bB\bTheta_{2p}^T\D_{21p}^{-1/2})\\
	&=\frac{\rho n}{2K} \lambda_i(\bB^{1/2}\bTheta_{2p}^T\D_{21p}^{-1} \bTheta_{2p}\bB^{1/2})\\
	&=\frac{\rho n}{2K} \lambda_i\left(\bB^{1/2}\left(\D_{21p}^{-1}+(\bTheta_{2p}-I)^T\D_{21p}^{-1}\bTheta_{2p}+\D_{21p}^{-1}(\bTheta_{2p}-I)\right)\bB^{1/2}\right).
}
Note that the matrix $\bB^{1/2}\D_{21p}^{-1}\bB^{1/2}$ is a diagonal matrix with the $(i,i)^{th}$ diagonal element being $\beta_i/\D_{21p}(i,i)$.

With similar arguments in the proof of Lemma \ref{lem:degree_sample_pure}, we can get 
\bas{
	\D_{21p}(i,i)=\frac{ n\rho }{2 K}\bbb{\beta_{i}\pm O_P(\epsilon)},
}
so 
\bas{
	\lambda_K(\bB^{1/2}\D_{21p}^{-1}\bB^{1/2}) =\frac{2K}{\rho n}\bbb{1\pm O_P(\epsilon/\beta_{\mathrm{min}})}.
}
  
By Weyl's inequality and note that operator norm is less than or equal to Frobenius norm, it immediately gives us:
\bas{
	\left|\sigma_i^2-\frac{\rho n}{2K} \lambda_i\left(\bB^{1/2}\D_{21p}^{-1}\bB^{1/2}\right)\right|
	\leq& \frac{\rho n}{2K}\cdot  \left\|\bB^{1/2}(\bTheta_{2p}-I)^T\D_{21p}^{-1}\bTheta_{2p}\bB^{1/2}+\bB^{1/2}\D_{21p}^{-1}(\bTheta_{2p}-I)\bB^{1/2}\right\|\\
	\leq& \frac{\rho n}{2K}\cdot  \left\|\bB^{1/2}\right\|_F\cdot\left\|\bTheta_{2p}-I\right\|_F\cdot\left\|\D_{21p}^{-1}\right\|_F\cdot 2\left\|\bTheta_{2p}\right\|_F\cdot\left\|\bB^{1/2}\right\|_F\\
	=& O_P\bbb{ \frac{\rho n}{2K}\cdot \sqrt{K} \cdot \sqrt{K}\epsilon \cdot
		 \frac{K^{3/2}}{\beta_{\mathrm{min}}\rho n} \cdot \sqrt{K}  \cdot \sqrt{K}  
		}\\
	=&
 	O_P\bbb{{\frac{K^{9/2}\sqrt{ \log n}}{\beta_{\mathrm{min}}^{7/2}\rho\sqrt{n}}}}
	\\
	\Longrightarrow \ \ \  \sigma_K^2=& {1\pm
	O_P\bbb{{\frac{K^{9/2}\sqrt{ \log n}}{\beta_{\mathrm{min}}^{7/2}\rho\sqrt{n}}}}
	}.
}

Now, Weyl's inequality for singular values gives us:
\bas{
	\left|\hat{\sigma}_i-\sigma_i\right|&\leq \|\hyp-\byp\bm{O}\|\leq \|\hyp-\byp\bm{O}\|_{F}
	=O_P(\sqrt{K}\epsilon)\\
	\hat{\sigma}_K&= \bbb{{1\pm
			O_P\bbb{{\frac{K^{9/2}\sqrt{ \log n}}{\beta_{\mathrm{min}}^{7/2}\rho\sqrt{n}}}}
	}}^{1/2}\left(1\pm
	O_P\left({\sqrt{K}\cdot\frac{{ K^2\sqrt{\log n}}}{\beta_{\mathrm{min}}^{5/2}\rho \sqrt{n}}}\right)
	\right)
	=1\pm O_P\bbb{{\frac{K^{9/2}\sqrt{ \log n}}{\beta_{\mathrm{min}}^{7/2}\rho\sqrt{n}}}}.
}
Plugging this into Equation~\eqref{eq:invnorm} we get:
\bas{
	\|\hyp^{-1}\| = 1 \pm O_P\bbb{{\frac{K^{9/2}\sqrt{ \log n}}{\beta_{\mathrm{min}}^{7/2}\rho\sqrt{n}}}}.
}
Finally putting everything together with Equation~\eqref{eq:xpnorm} we get:
\ba{
	\frac{\|\hxp^{-1}-\bbb{\bxp\bm{O}}^{-1}\|_F}{ \|\bxp^{-1}\|_F}
	=\frac{\|\hyp^{-1}-\bbb{\byp\bm{O}}^{-1}\|_F}{ \|\byp^{-1}\|_F}
	\leq \|\byp\bm{O}-\hyp\|_F\|\hyp^{-1}\|=
	\xpinverse
}
with probability larger than $1- O_P(K^2/n^2)$.
\end{proof}




\begin{proof}[Proof of Theorem~\ref{thm:theta}]
Recall that 
$\hat{\bTheta}_2 = \hat{\bTheta}(\bar{\cS}) = \bD_{12}^{1/2} \hx \hxp^{-1}
\bD_{21}^{-1/2}(\cSp,\cSp)$. First note that if one plugs in the population counterparts of the the terms in $\hat{\bTheta}_2$, then for some permutation matrix $\bpi$ that $\bTheta_{2p}:=\bTheta_2(\cS_p,:)\cdot\bpi$ is close to an identity matrix, and
\bas{
	\D_{21}^{1/2}\bX\bxp^{-1}\dpp^{-1/2}
	= \D_{21}^{1/2}
	\bbb{\sqrt{\rho}\cdot\D_{21}^{-1/2}\bTheta_2\bB^{1/2}}
	\bbb{\frac{1}{\sqrt{\rho}}\cdot\bB^{-1/2}\bpi\bTheta_{2p}^{-1}\dpp^{1/2}}
	\dpp^{-1/2}=\bTheta_2\bpi\bTheta_{2p}^{-1},
}
so
\bas{
	\bTheta_2\bpi=\D_{21}^{1/2}\bX\bxp^{-1}\dpp^{-1/2}\bTheta_{2p}.
}
We have the following decomposition
\bas{
	\left\|\hat{\bTheta}_2-\bTheta_2\bpi\right\|_F 
	\leq&
	\left\|(\bD_{21}^{1/2}-\D_{21}^{1/2})\hx \hxp^{-1} \dxp^{-1/2}\right\|_F
	+
	\left\|\D_{21}^{1/2}(\hat{\bX}-\bX\bm{O}) {\hxp}^{-1} \dxp^{-1/2}\right\|_F \\
	&+\left\|\D_{21}^{1/2}\bX\bm{O} ({\hxp}^{-1}-\bbb{\bxp\bm{O}}^{-1}) \dxp^{-1/2}\right\|_F 
	+\left\|\D_{21}^{1/2}\bX \bxp^{-1}( \dxp^{-1/2}-\dpp^{-1/2})\right\|_F \\
	&+\left\|\D_{21}^{1/2}\bX\bxp^{-1}\dpp^{-1/2}\bbb{\bI-\bTheta_{2p}}\right\|_F.
}
From the proof of Lemma~\ref{cod} we have 
$\sqrt{\bD_{21}(i,i)}
=\sqrt{\D_{21}(i,i)}(1\pm O_P(\sqrt{K\log n/n\rho}))$ and hence 
\bas{
	\eee{\bD_{21}^{1/2}-\D_{21}^{1/2}}=\eee{\D_{21}^{1/2}} O_P(\sqrt{K\log n/n\rho}),
}
and  
\bas{
	\eee{\dxp^{-1/2}-\dpp^{-1/2}}\leq \eee{\dpp^{-1/2}}O_P(\sqrt{K\log n/n\rho}).
}
And $\|\hxp^{-1}\|=\sqrt{n/(2K)}\|\hyp^{-1}\| = O_P(\sqrt{n/K})$, as we have shown in the proof of Theorem~\ref{thm:hxpinv}. Furthermore, by Fact~\ref{mc}, $\|\hxp^{-1}\|_F = O_P(\sqrt{n})$.

From the proof of Theorem \ref{thm:entrywise} we can get $\|\hat{\bX}-\bX\bm{O} \|_F=O_P(\sqrt{K}\epsilon')$.  
Theorem \ref{thm:hxpinv} gives
\bas{
	\|\hxp^{-1}-\bbb{\bxp\bm{O}}^{-1}\|_F
	&=\|\bxp^{-1}\|_F \cdot
	\xpinverse \\
	&= O_P\bbb{\sqrt{K}\cdot \sqrt{\frac{n}{K}}} \cdot \xpinverse 
	= O_P\bbb{{\frac{{ K^{5/2}\sqrt{\log n}}}{\beta_{\mathrm{min}}^{5/2} \rho}}}.
}

Also 
$\|\hx\|_F=O_P(\sqrt{K})$, since it concentrates around its population entry-wisely, and the max norm of any row of the population is $\sqrt{2K/n}$, so $\|\bX\|_F=O_P(\sqrt{K})$. 
And 
\bas{
	\|\D_{21}^{1/2}\bX\|
	&=\|\D_{21}^{1/2}\sqrt{\rho}\cdot \D_{21}^{-1/2}\bTheta_{2} \bB^{1/2} \|
	= \|\sqrt{\rho}\cdot\bTheta_{2} \bB^{1/2} \| 
	=  \sqrt{\|\bP\|}
	=  O_P(\sqrt{\rho n/K}).
}


 Hence, 
\bas{
	&\ \ \  \left\|(\bD_{21}^{1/2}-\D_{21}^{1/2}){\hx} {\hxp}^{-1} \dxp^{-1/2}\right\|_F\leq \left\|\bD_{21}^{1/2}-\D_{21}^{1/2}\right\| \left\|\hat{\bX}\right\|_F \left\|{\hxp}^{-1} \right\| \left\| \dxp^{-1/2}\right\|\\
	&= 
	\text{\small
		$
		O_P\bbb{\sqrt{\rho n/K} \cdot \sqrt{K\log n/n\rho}} \cdot O_P\bbb{\sqrt{K}} \cdot O_P\bbb{\sqrt{n/K}}\cdot O_P(\sqrt{K/\beta_{\mathrm{min}}\rho n}) 
		=O_P\bbb{\sqrt{K\log n/\beta_{\mathrm{min}}\rho}},
		$
	}
	\\
	&\ \ \ \left\|\D_{21}^{1/2}(\hat{\bX}-\bX\bm{O}) {\hxp}^{-1} \dxp^{-1/2} \right\|_F 
	\leq \left\|\D_{21}^{1/2}\right\|\left\|\hat{\bX}-\bX\bm{O}\right\|_F \left\|{\hxp}^{-1}\right\| \left\|\dxp^{-1/2} \right\|\\
	&=  
	\text{\small
		$
		O_P\bbb{\sqrt{\frac{\rho n}{K}}}\cdot 
		O_P\bbb{\sqrt{K}\epsilon'}
		\cdot O_P\bbb{\sqrt{\frac{n}{K}}} \cdot O_P\bbb{\sqrt{\frac{K}{\beta_{\mathrm{min}}\rho n}}}  =
		O_P\bbb{\sqrt{\frac{n}{\beta_{\mathrm{min}}}}\epsilon'}
		= O_P\bbb{{\frac{{ K^2\sqrt{\log n}}}{\beta_{\mathrm{min}}^{3} {\rho }}}}
		,
		$
	}
	\\
	&\ \ \   \left\|\D_{21}^{1/2}\bX ({\hxp}^{-1}-\bbb{\bxp\bm{O}}^{-1}) \dxp^{-1/2} \right\|_F 
	\leq \left\|\D_{21}^{1/2}\bX\right\| \left\|{\hxp}^{-1}-\bbb{\bxp\bm{O}}^{-1}\right\|_F \left\|\dxp^{-1/2} \right\|\\
	&= 
	\text{\small
		$
		O_P\bbb{\sqrt{\rho n/K}} \cdot 
		O_P\bbb{{\frac{{ K^{5/2}\sqrt{\log n}}}{\beta_{\mathrm{min}}^{5/2} {\rho }}}}
		\cdot O_P(\sqrt{K/\beta_{\mathrm{min}}\rho n}) =
		O_P\bbb{{\frac{{ K^{5/2}\sqrt{\log n}}}{\beta_{\mathrm{min}}^{3} {\rho }}}}
		, 
		$
	}
	\\
	&\ \ \   \left\|\D_{21}^{1/2}\bX \bxp^{-1}( \dxp^{-1/2}-\dpp^{-1/2}) \right\|_F \leq \left\|\D_{21}^{1/2}\bX\right\| \left\|\bxp^{-1}\right\| \left\| \dxp^{-1/2}-\dpp^{-1/2} \right\|_F \\
	&= 
	\text{\small
		$
		O_P\bbb{\sqrt{\rho n/K}} \cdot O_P(\sqrt{K}\cdot \sqrt{n/K})\cdot O_P(\sqrt{K/\beta_{\mathrm{min}}\rho n})  O_P\bbb{\sqrt{K\log n/ n\rho}} = O_P\bbb{\sqrt{K\log n/\beta_{\mathrm{min}}\rho}},
		$
	}
	\\
	&\ \ \   \left\|\D_{21}^{1/2}\bX\bxp^{-1}\dpp^{-1/2}\bbb{\bI-\bTheta_{2p}}\right\|_F
	\leq \left\|\D_{21}^{1/2}\bX\right\| \left\|\bxp^{-1}\right\| \left\|\dpp^{-1/2} \right\|_F \eee{\bI-\bTheta_{2p}}_F \\
	&= 
	\text{\small
		$
		O_P\bbb{\sqrt{\rho n/K}} \cdot O_P(\sqrt{K}\cdot \sqrt{n/K})\cdot O_P(\sqrt{K/\beta_{\mathrm{min}}\rho n}) \cdot \sqrt{K}\epsilon'
		=O_P\bbb{\sqrt{\frac{Kn}{\beta_{\mathrm{min}}}}\epsilon'}
		= O_P\bbb{{\frac{{ K^{5/2}\sqrt{\log n}}}{\beta_{\mathrm{min}}^{3} {\rho }}}}.
		$
	}
}

So
\bas{
	\left\|\hat{\bTheta}_2-\bTheta_2\bpi\right\|_F
	=O_P\bbb{{\frac{{ K^{5/2}\sqrt{\log n}}}{\beta_{\mathrm{min}}^{3} \rho}}}.
}
Since $\|\bTheta_2\|_F^2=\Omega (n/K)$, we finally have:
\bas{
	\frac{\left\|\hat{\bTheta}_2-\bTheta_2\bpi\right\|_F}{\|\bTheta_2\|_F}\leq 
	\thetaerror
}
with probability larger than $1- O(K^2/n^2)$.
\end{proof}

\begin{proof}[Proof of Theorem~\ref{thm:beta}]
Recall that	$\hat{\rho}\hat{\beta}_{a} =
	\left\|\be_a^T\bD_{21}^{1/2}(\cSp,\cSp)\hxp
	\right\|_F^2$, and  for some permutation matrix $\bpi$ that $\bTheta_{2p}:=\bTheta_2(\cS_p,:)\cdot\bpi$ is close to an identity matrix,
	if one plugs in the population counterparts of the the terms in $\hat{\bTheta}_2$, 
	\bas{
		\left\|\be_a^T\D_{21}^{1/2}(\cSp,\cSp)\bX_p
		\right\|_F
		&=\left\|\sqrt{\rho}\cdot\be_a^T\bTheta_{2p}\bpi^T\bB^{1/2}
		\right\|_F
		=\eee{\sqrt{\rho}\cdot\be_a^T\bbb{\bTheta_{2p}-\bI}\bpi^T\bB^{1/2}+\sqrt{\rho}\cdot\be_a^T\bpi^T\bB^{1/2}}_F\\
		&\leq \eee{\sqrt{\rho}\cdot\be_a^T\bbb{\bTheta_{2p}-\bI}\bpi^T\bB^{1/2}}_F+\eee{\sqrt{\rho}\cdot\be_a^T\bpi^T\bB^{1/2}}_F \\
		&=\sqrt{\rho} \epsilon' +
		\sqrt{\rho{\beta}_{a'}},
	}
where $a'\in[K]$ satisfies $\bpi_{a'a}=1$.

Using the bounds mentioned in the proof of Theorem~\ref{thm:theta}, we have:
\bas{
	&\ \ \ \left\|\be_i^T (\bD_{21}^{1/2}\hx-\D_{21}^{1/2}\bX\bm{O})\right\|
	=\left\|\be_i^T( \bD_{21}^{1/2}-\D_{21}^{1/2})\hx+\be_i^T\D_{21}^{1/2}(\hx-\bX\bm{O})\right\|\\
	&\leq \left \|\left(\sqrt{\bD_{21}(i,i)}-\sqrt{\D_{21}(i,i)}\right)\be_i^T\hx\right\|+\left\|\sqrt{\D_{21}(i,i)}\be_i^T(\hx-\bX\bm{O})\right\|\\
	&\leq \left(\sqrt{\bD_{21}(i,i)}-\sqrt{\D_{21}(i,i)}\right) \bb{\left\|\be_i^T\bbb{\hx-\bX\bm{O}}\right\|+\left\|\be_i^T\bX\right\|}
	+\sqrt{\D_{21}(i,i)}\left\|\be_i^T(\hx-\bX\bm{O})\right\|\\
	&= O_P\bbb{\sqrt{\frac{n\rho}{K}} \cdot
	\sqrt{\frac{K\log n}{n\rho}}}
	\ccc{\xabsoluterow+O_P\bbb{\sqrt{\frac{K}{n}}}}
	+O_P\bbb{\sqrt{\frac{n\rho}{K}}}\cdot \xabsoluterow\\
	&= \betaabsoluteerror.
}

As a result, 
\bas{
	\left| \sqrt{\hat{\rho}\hat{\beta}_a} -  \sqrt{\rho\beta_{a'}}\right| &\leq 
	\betaabsoluteerror+\sqrt{\rho} \epsilon'=\betaabsoluteerror,
}
and note that $\rho\beta_{a'}=\Omega(\rho)$, we have 
\bas{
	\hat{\rho}\hat{\beta}_a \in  \rho\beta_{a'} \left[ 1 - 
	\betaerror,1+\betaerror
	\right]	
}
with probability larger than $1- O(K^2/n^2)$.
\end{proof}

\end{document}